\newif\ifisarxiv
\newcommand{\mnote}[1]{{\bf\large \Magenta{*}}\marginpar{\small \Magenta{#1}}}
\newcommand{\sets}[2] {{\hspace{-0.3mm}[\hspace{-0.3mm}#1\hspace{-0.3mm}]\hspace{-0.3mm}\choose \hspace{-0.3mm}#2\hspace{-0.3mm}}}
\def\r{\mathbf r}
\def\Q{\mathbf Q}
\newcommand{\BlackBox}{\rule{1.5ex}{1.5ex}}  
\DeclareMathOperator*{\argmin}{\mathop{\mathrm{argmin}}}
\DeclareMathOperator*{\diag}{\mathop{\mathrm{diag}}}
\def\x{\mathbf x}
\def\y{\mathbf y}
\def\w{\mathbf w}
\def\v{\mathbf v}
\def\wbh{\widehat{\mathbf w}}
\def\e{\mathbf e}
\def\zero{\mathbf 0}
\def\one{\mathbf 1}
\def\u{\mathbf u}
\def\X{\mathbf X}
\def\A{\mathbf A}
\def\U{\mathbf U}
\def\M{\mathbf M}
\def\Z{\mathbf Z}
\def\Zbh{\widehat{\mathbf Z}}
\def\I{\mathbf I}
\def\A{\mathbf A}
\def\E{\mathbb E}
\def\R{\mathbb R} 
\def\Pr{\mathrm{Pr}} 
\def\tr{\mathrm{tr}}
\def\Var{\mathrm{Var}}
\newcommand{\defeq}{\stackrel{\textit{\tiny{def}}}{=}}
\newcommand{\cov}{\mathrm{cov}}
\let\origtop\top
\renewcommand\top{{\scriptscriptstyle{\origtop}}} 
\definecolor{silver}{cmyk}{0,0,0,0.3}
\definecolor{yellow}{cmyk}{0,0,0.9,0.0}
\definecolor{reddishyellow}{cmyk}{0,0.22,1.0,0.0}
\definecolor{black}{cmyk}{0,0,0.0,1.0}
\definecolor{darkYellow}{cmyk}{0.2,0.4,1.0,0}
\definecolor{darkSilver}{cmyk}{0,0,0,0.1}
\definecolor{grey}{cmyk}{0,0,0,0.5}
\definecolor{darkgreen}{cmyk}{0.6,0,0.8,0}
\newcommand{\Magenta}[1]{{\color{magenta}{#1}}}
\newenvironment{proof}{\par\noindent{\bf Proof\ }}{\hfill\BlackBox\\[2mm]}
\newtheorem{theorem}{Theorem}
\newtheorem{condition}{Condition}
\newtheorem{lemma}[theorem]{Lemma}
\newtheorem{proposition}[theorem]{Proposition}
\newtheorem{corollary}[theorem]{Corollary}
\title{Leveraged volume sampling for linear regression}
\author{
Micha{\l } Derezi\'{n}ski\\
Department of Computer Science\\
University of California at Santa Cruz\\
\texttt{mderezin@ucsc.edu}\\
\And
Manfred K. Warmuth\\
Department of Computer Science\\
University of California at Santa Cruz\\
\texttt{manfred@ucsc.edu}\\
\And
Daniel Hsu\\
Computer Science Department\\
Columbia University, New York\\
\texttt{djhsu@cs.columbia.edu}\\
}
\begin{document}

\maketitle

\begin{abstract}
 Suppose an $n \times d$ design matrix in a linear regression problem is given, 
but the response for each point is hidden unless explicitly requested. 
The goal is to sample only a small number $k \ll n$ of the responses, 
and then produce a weight vector whose sum of squares loss over \emph{all} points is at most $1+\epsilon$ times the minimum. 
When $k$ is very small (e.g., $k=d$), jointly sampling diverse subsets of
points is crucial. One such method called \emph{volume sampling} has
a unique and desirable property that the weight vector it produces is an unbiased
estimate of the optimum. It is therefore natural to ask if this method
offers the optimal unbiased estimate in terms of the number of
responses $k$ needed to achieve a $1+\epsilon$ loss approximation.

Surprisingly we show that volume sampling can have poor behavior when
we require a very accurate approximation -- indeed worse than some
i.i.d.~sampling techniques whose estimates are biased, such as
\emph{leverage score sampling}. 
We then develop a new rescaled variant of volume sampling that
produces an unbiased estimate which avoids
this bad behavior and has at least as good a tail bound as leverage
score sampling: sample size $k=O(d\log d + d/\epsilon)$ suffices to
guarantee total loss at most $1+\epsilon$ times the minimum
with high probability. Thus, we improve on the best previously known
sample size for an unbiased estimator, $k=O(d^2/\epsilon)$.

Our rescaling procedure leads to a new efficient algorithm
for volume sampling which is based
on a \emph{determinantal rejection sampling} technique with
potentially broader applications to determinantal point processes.
Other contributions include introducing the
combinatorics needed for rescaled volume sampling and developing tail
bounds for sums of dependent random matrices which arise in the
process.

\if 0
A standard approach to this problem is to use i.i.d.~\emph{leverage score sampling}, 
but this approach is known to perform poorly when $k$ is small (e.g., $k = d$); 
in such cases, it is dominated by \emph{volume sampling}, a joint sampling method that explicitly promotes diversity. 
Also, volume sampling based methods often lead to unbiased estimators whereas leverage scores give biased estimators.

Surprisingly we show that volume sampling can have poor behavior for
large $k$---indeed worse than leverage score sampling. 
We then develop a new rescaled variant of volume sample that avoids this bad behavior
and has at least as good a tail bound as leverage score sampling:
sample size $k=O(d\log d + d/\epsilon)$ suffices to
guarantee total loss at most $1+\epsilon$ times the minimum
with high probability. Moreover, the resulting estimator is unbiased.

The main technical contribution is developing the
combinatorics of this new variant of volume sampling (which samples with replacement).
Our new techniques also lead to drastically improved running times for
volume sampling.
\fi

\end{abstract}

\section{Introduction}
\label{s:intro}


Consider a linear regression problem where the input points
in $\R^d$ are provided, but the associated response for each point is
withheld unless explicitly requested. The goal is to
sample the responses for just a small subset of inputs,
and then produce a weight vector whose total square loss
on all $n$ points is at most $1+\epsilon$ times that of the
optimum.\footnote{The total loss of the algorithm being
at most $1+\epsilon$ times loss of the optimum can be rewritten
as the regret being at most $\epsilon$ times the optimum.}
This scenario is relevant in many applications where
data points are cheap to obtain but responses are expensive.
Surprisingly, with the aid of having all input points available,
such multiplicative loss bounds are achievable
without any range dependence on the points or responses common in
on-line learning \citep[see, e.g.,][]{onlineregr}.%

A natural and intuitive approach to this problem is
\emph{volume sampling}, since it
prefers ``diverse'' sets of points that will likely result in a
weight vector with low total loss, regardless of what the
corresponding responses turn out to be~\citep{unbiased-estimates}. Volume sampling is closely related
to optimal design criteria~\citep{optimal-design-book,dual-volume-sampling},
which are appropriate under statistical models of the responses;
here we study a worst-case setting where the algorithm must
use randomization to guard itself against worst-case responses.

Volume sampling and related determinantal point processes are employed in many
machine learning and statistical contexts, including linear
regression~\citep{dual-volume-sampling,unbiased-estimates,regularized-volume-sampling},
clustering and matrix
approximation~\citep{pca-volume-sampling,efficient-volume-sampling,avron-boutsidis13},
summarization and information retrieval~\citep{dpp,k-dpp,dpp-shopping}, and
fairness~\citep{celis2016fair,celis2018fair}. The availability of fast
algorithms for volume sampling~\citep{dual-volume-sampling,unbiased-estimates}
has made it an important technique in the algorithmic toolbox alongside
i.i.d.~leverage score sampling~\citep{drineas2006sampling} and spectral
sparsification~\citep{batson2012twice,lee2015constructing}.

It is therefore surprising that using volume sampling in the context of linear
regression, as suggested in previous
works~\citep{unbiased-estimates,dual-volume-sampling}, may lead to suboptimal
performance. We construct an example in which, even after sampling up to half
of the responses, the loss of the weight vector from volume sampling is a fixed
factor ${>}1$ larger than the minimum loss. Indeed,
this poor behavior arises because for any sample size ${>}d$, the marginal
probabilities from volume sampling are a mixture of uniform probabilities and
leverage score probabilities, and uniform sampling is well-known to be
suboptimal when the leverage scores are highly non-uniform.

\begin{wrapfigure}{r}{0.45\textwidth}
\vspace{-1.2cm}
\begin{center}
\includegraphics[width=.48\textwidth]{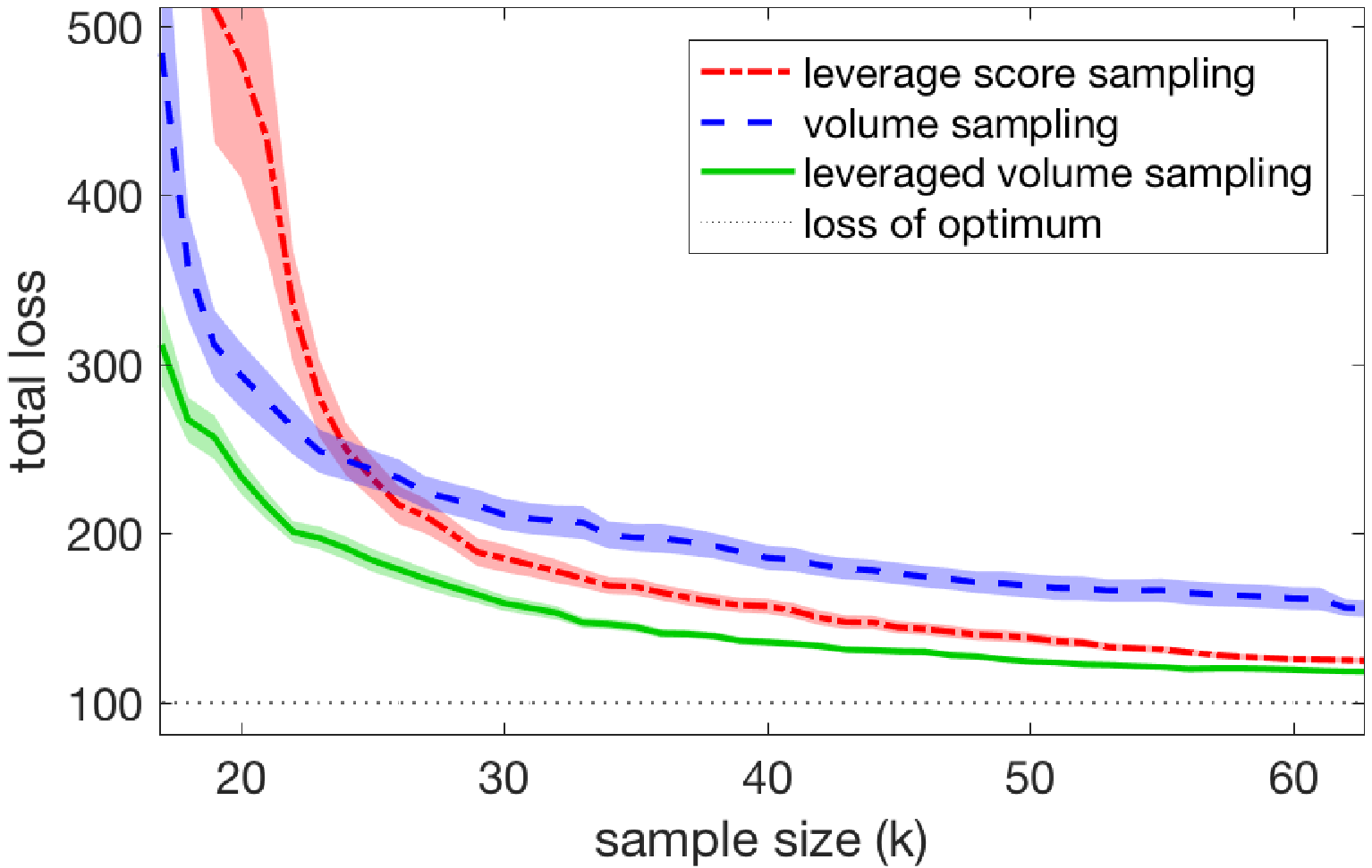}
   \captionof{figure}{Plots of the total loss for the sampling
methods (averaged over 100 runs) versus sample
size (shading is standard error) for a libsvm
dataset \textit{cpusmall} \cite{libsvm}.}
\label{f:lb}
\end{center}
\vspace{-.6cm}
\end{wrapfigure}

A possible recourse is to abandon volume sampling in favor of leverage score
sampling~\citep{drineas2006sampling,woodruff2014sketching}. However, all
i.i.d.~sampling methods, including leverage score sampling, suffer from a
coupon collector problem that prevents their effective use at small sample
sizes~\citep{regularized-volume-sampling}. Moreover, the resulting weight
vectors are biased (regarded as estimators for the least
squares solution using all responses), which is a nuisance when averaging
multiple solutions (e.g., as produced in distributed settings). In contrast,
volume sampling offers multiplicative loss bounds even with sample sizes as small as $d$
and it is the \textit{only} known non-trivial method that gives unbiased weight vectors~\citep{unbiased-estimates}.

We develop a new solution, called \emph{leveraged volume sampling}, that
retains the aforementioned benefits of volume sampling while avoiding its
flaws. Specifically, we propose a variant of volume sampling based on rescaling
the input points to ``correct'' the resulting marginals. On the
algorithmic side, this leads to
a new \textit{determinantal rejection sampling} procedure which offers significant
computational advantages over existing volume sampling algorithms,
while at the same time being strikingly simple to implement.
We prove that this new sampling scheme retains the benefits
of volume sampling (like unbiasedness) but avoids the bad behavior demonstrated
in our lower bound example. Along the way, we prove a new generalization of the
Cauchy-Binet formula, which is needed for the rejection sampling denominator.
Finally, we develop a new method for proving matrix tail bounds for leveraged
volume sampling. Our analysis shows that the unbiased
least-squares estimator constructed this way achieves a $1+\epsilon$
approximation factor from a sample of size $O(d \log d + d/\epsilon)$,
addressing an open question
posed by \cite{unbiased-estimates}.

\if 0
Leverage score sampling~\citep{drineas2006sampling} is a
well-known approach to this problem, and there is an
established methodology~\citep{woodruff2014sketching} for
proving $1+\epsilon$ multiplicative loss bounds using
matrix Chernoff bounds~\citep{matrix-tail-bounds}. However,
non-i.i.d.~joint sampling approaches such as volume
sampling, which chooses subsets of $k$ points based on the
squared volume of the spanned parallelepiped, often yield
better results for small sample sizes---see Figure~\ref{f:lb}. 
Volume sampling explicitly promotes diverse samples, 
thereby avoiding a coupon collector problem that plagues all i.i.d.~sampling methods~\citep{regularized-volume-sampling}. 
The other main advantage of volume sampling is that optimum
the weight vector for the subset of points for which
responses where obtained is an unbiased
estimator of the optimum weight vector for all points and responses.
This unbiasedness holds for all sets of inputs and responses
(i.e. it does not require any noise assumptions.
Since mixtures of unbiased estimators remain unbiased and
therefore unbiasedness is a very useful property when
larger estimators are build from smaller ones.

Ideally we want simply use volume sampling with larger
sample sizes for obtaining $1+\epsilon$ multiplicative loss
bounds while retaining the good properties of volume sampling. 
Surprisingly, we are able to show that volume sampling can have bad
behavior for large sample sizes---even
worse than that of leverage score sampling (also as seen in Figure \ref{f:lb}).
Leverage scores are the marginals of size $d$ volume sampling. 
However if the sample size $k$ of volume sampling is larger
than $d$, then the marginals are a mixture of the uniform distribution 
and leverage scores, which can lead to bad behavior:
We prove that, even after sampling up to half of the responses, the loss from volume sampling can be a fixed factor ${>}1$ larger than the minimum loss.

We circumvent this deficiency as follows:
(i) We rescale the inputs by the inverse leverage scores
(after rescaling the leverage scores are uniform)
and modify volume sampling by sampling with replacement
instead of without replacement.
The resulting new variant called ``leveraged volume sampling'' now
has uniform marginals and this avoids the bad behavior.
(ii) We develop a method for proving
matrix tail bounds for leveraged volume sampling.
Our new analysis method shows that for leveraged volume
sampling, sample size $O(d \log d + d/\epsilon)$ suffices to
guarantee the $1+\epsilon$ approximation factor, with high probability.
Moreover the produced weight vector is unbiased.
\fi

\paragraph{Experiments.} 
Figure~\ref{f:lb} presents experimental evidence on a benchmark dataset
(\textit{cpusmall} from the libsvm collection \cite{libsvm}) that the
potential bad behavior of volume sampling proven in our lower bound
does occur in practice. Appendix
\ref{sec:experiments} shows more 
datasets and a detailed discussion of the experiments. In summary,
leveraged volume sampling avoids the bad behavior of standard volume
sampling, and performs considerably better than leverage score
sampling, especially for small sample sizes $k$.


\paragraph{Related work.}
Despite the ubiquity of volume sampling in many contexts already mentioned above, it has only recently been analyzed for linear regression.
Focusing on small sample sizes, \citep{unbiased-estimates} proved multiplicative bounds for the expected loss of size $k=d$ volume sampling.
Because the estimators produced by volume sampling are unbiased, averaging a number of such estimators produced an estimator based on a sample of size $k = O(d^2/\epsilon)$ with expected loss at most $1+\epsilon$ times the optimum.
It was shown in \cite{regularized-volume-sampling} that if the
responses are assumed to be linear functions of the input
points plus white noise, then size $k =
O(d/\epsilon)$ volume sampling suffices for obtaining the same
expected bounds. These noise assumptions on the response vector are
also central to the task of 
A-optimal design, where volume sampling is a key technique
\citep{optimal-design-book,symmetric-polynomials,tractable-experimental-design,proportional-volume-sampling}.
All of these previous results were concerned with bounds that hold in expectation; it is natural to ask if similar (or better) bounds can also be shown to hold with high probability, without noise assumptions.
Concentration bounds for volume sampling and other strong
Rayleigh measures were studied in
\cite{pemantle2014concentration}, but these results are
not sufficient to obtain the tail bounds for volume sampling.

Other techniques applicable to our linear regression
problem include leverage score
sampling~\citep{drineas2006sampling} and spectral
sparsification~\citep{batson2012twice,lee2015constructing}.
Leverage score sampling is an i.i.d. sampling procedure which achieves
tail bounds matching the ones we obtain here for leveraged volume
sampling, however it produces biased weight vectors 
and experimental results (see \cite{regularized-volume-sampling} and Appendix
\ref{sec:experiments}) show that it has weaker performance for small
sample sizes. 
A different and more elaborate 
sampling technique based on spectral
sparsification~\citep{batson2012twice,lee2015constructing} 
was recently shown to be effective for linear
regression~\citep{chen2017condition}, however this method
also does not
produce unbiased estimates, which is a primary concern of this paper
and desirable in many settings. Unbiasedness seems to
require delicate control of the sampling probabilities, which we achieve using determinantal rejection sampling.%

\if 0
Other techniques applicable to our linear regression
problem include leverage score
sampling~\citep{drineas2006sampling} and spectral
sparsification~\citep{batson2012twice,lee2015constructing}.
Leverage score sampling requires sample size $k = O(d \log
d + d/\epsilon)$ to achieve loss $1+\epsilon$ times the
optimum (with high probability). This matches the tail
bound we achieve with leveraged volume sampling. A
different more elaborate and time intensive
sampling technique based on spectral sparsification~\citep{batson2012twice,lee2015constructing} was recently proposed for this problem, and requires a smaller sample size $k = O(d/\epsilon)$ to achieve the same guarantee~\citep{chen2017condition}. However, neither of these techniques is guaranteed to produce unbiased estimates, which is desirable in many settings. Achieving unbiasedness seems to require more delicate control over the sampling probabilities.

Leverage score sampling \mnote{need to say somewhere that
same tail bound but biased} is a well-known technique in the literature on randomized algorithms for numerical linear algebra; see the monographs \cite{randomized-matrix-algorithms} and~\cite{woodruff2014sketching} for overviews of this rich area. Other techniques used in this literature improve on leverage score sampling by reducing the overall computational cost of sampling~\citep[e.g.,][]{regression-input-sparsity-time,sarlos-sketching}. In our setting, it is the number of responses observed (and hence the total number of points selected) that is the primary bottleneck. Thus certain techniques, such as applying random rotations to the design matrix and response vector, are not beneficial for us. 
A different technique\mnote{Need of ref Eric instead. Also based on
BSS} known as \emph{BSS sampling}~\citep[after][]{batson2012twice} can also reduce the sample size relative to leverage score sampling~\citep[see][]{coresets-regression}. 
However, its use in the context of linear regression again
depends on the entire response
vector~\citep[Lemma C.28]{song2017relative}.

The most closely related setting analyzed in the on-line learning
literature for linear regression is the minimax regret
analysis of \cite{minimax-linear-regression}. 
They also are given all input points but need to predict
the responses in a fixed order whereas in the model considered
here, the learner is allowed to sample the responses before
producing predictions for all points. Curiously enough, 
ordered prediction requires a range restriction on the responses whereas in
our setup there is no restriction on the responses.
\fi

\paragraph{Outline and contributions.}
We set up our task of subsampling for linear regression in the next section
and present our lower bound for standard volume sampling.
A new variant of rescaled volume sampling is introduced
in Section \ref{s:resc}. We develop techniques for
proving matrix expectation formulas for this variant
which show that for any rescaling 
the weight vector produced for the subproblem is unbiased.

Next, we show that when rescaling with leverage scores, then
a new algorithm based on rejection sampling
is surprisingly efficient (Section \ref{s:alg}): 
Other than the preprocessing step of computing
leverage scores, the runtime does not depend on $n$
(a major improvement over existing volume sampling algorithms).
Then, in Section \ref{s:tail} we prove
multiplicative loss bounds for leveraged volume sampling
by establishing two important properties which are hard to prove for
joint sampling procedures.
We conclude in Section \ref{s:open} with an open problem
and with a discussion of how rescaling with approximate
leverage scores gives further time improvements for
constructing an unbiased estimator.

\section{Volume sampling for linear regression}
\label{s:versus}

In this section, we describe our linear regression setting, and review
the guarantees that standard volume sampling offers in this
context. Then, we present a surprising lower bound which shows that
under worst-case data, this method can exhibit undesirable behavior.

\subsection{Setting}
\label{s:setting}

Suppose the learner is given $n$ input vectors $\x_1,\dotsc,\x_n\in\R^d$, which
are arranged as the rows of an $n\times d$ input matrix $\X$.
Each input vector $\x_i$ has an associated response variable
$y_i\in \R$ from the response vector $\y\in\R^n$. The goal of the
learner is to find a weight vector $\w\in \R^d$ that minimizes the
square loss:
\begin{align*}
  \w^*\defeq \argmin_{\w\in\R^d} L(\w),
\;\;\text{where}\; L(\w)\defeq \sum_{i=1}^n
(\x_i^\top\w-y_i)^2=\|\X\w - \y\|^2.
\end{align*}
Given both matrix $\X$ and vector $\y$, the least squares solution can be
directly computed as $\w^* = \X^+\y$, where $\X^+$ is the
pseudo-inverse. Throughout the paper we assume w.l.o.g.~that
$\X$ has (full) rank $d$.%
\footnote{Otherwise just reduce $\X$ to a subset of independent columns. 
Also assume $\X$ has no rows of all zeros
(every weight vector has the same loss on such rows, so
they can be removed).}

In our setting, the learner is only given the input 
matrix $\X$, while response vector $\y$ remains hidden. 
The learner is allowed to select a
subset $S$ of row indices in $[n] = \{1,\dotsc,n\}$ for which the corresponding responses
$y_i$ are revealed. The learner constructs an estimate
$\wbh$ of $\w^*$
using matrix $\X$ and the partial vector of observed responses.
The learner is evaluated by the loss over all rows
of $\X$ (including the ones with unobserved responses), and the goal is to
obtain a multiplicative loss bound, i.e., that for some $\epsilon>0$,
\begin{align*}
L(\wbh)\leq (1+\epsilon)\,L(\w^*).
\end{align*}

\subsection{Standard volume sampling}

Given $\X\in\R^{n\times d}$ and a size $k\ge d$,
standard volume sampling jointly chooses a set $S$
of $k$ indices in $[n]$ with probability
\begin{align*}
\Pr(S) = \frac{\det(\X_S^\top\X_S)}{{n-d\choose k-d}\det(\X^\top\X)},
\end{align*}
where $\X_S$ is the submatrix of the rows from $\X$
indexed by the set $S$. The learner then obtains the responses $y_i$, for $i\in S$,
and uses the optimum solution $\w_S^*=(\X_S)^+\y_S$
for the subproblem $(\X_S,\y_S)$ as its weight vector. The sampling
procedure can be performed using \emph{reverse iterative sampling} (shown on the
right), which, if carefully implemented, takes $O(nd^2)$ time (see
\cite{unbiased-estimates,regularized-volume-sampling}).

\begin{wrapfigure}{R}{0.31\textwidth}
\renewcommand{\thealgorithm}{}
\ifisarxiv
\vspace{-3mm}
\else
\vspace{-6mm}
\fi
\hspace{-2mm}
\begin{minipage}{0.31\textwidth}
\floatname{algorithm}{}
\begin{algorithm}[H] 
{\fontsize{8}{8}\selectfont
  \caption{\bf \small Reverse iterative sampling}
  \begin{algorithmic}[0]
    \STATE VolumeSample$(\X,\,k)\!:$
    \STATE \quad$S \leftarrow [n]$
\vspace{1mm}
    \STATE \quad{\bf while} $|S|>k$
\vspace{-1.5mm}
    \STATE \quad\quad $\forall_{i\in S}\!:q_i\!\leftarrow\!
    \frac{\det(\X_{S\backslash i}^\top\!\X_{S\backslash i})}{\det(\X_S^\top\X_S)}$
    \STATE \quad\quad Sample $i\propto q_i$ out of $S$
\vspace{1mm}
    \STATE \quad\quad $S\leftarrow S \backslash \{i\}$
    \STATE \quad{\bf end} 
    \RETURN $S$
  \end{algorithmic}
}
\end{algorithm}
\end{minipage}
\end{wrapfigure}
The key property (unique to volume sampling) is
that the subsampled estimator $\w_S^*$ is unbiased, i.e.
\begin{align*}
\E[\w_S^*] = \w^*, \quad \text{where}\quad \w^* = \argmin_\w L(\w).
\end{align*}
As discussed in \cite{unbiased-estimates}, this property has important
practical implications in distributed settings: 
Mixtures of unbiased estimators remain unbiased (and can
conveniently be used to reduce variance).
Also if the rows of $\X$ are in general position, then
for volume sampling
\vspace{-1mm}
\begin{align}
\E\big[(\X_S^\top\X_S)^{-1}\big] =
  \frac{n-d+1}{k-d+1}\,(\X^\top\X)^{-1}.
\label{eq:square-inverse}
\end{align}
This is important because in A-optimal design bounding
$\tr((\X_S^\top\X_S)^{-1})$ is the main concern.
Given these direct connections of volume sampling to linear
regression, it is natural to ask whether this distribution
achieves a loss bound of $(1+\epsilon)$ times the optimum for
small sample sizes $k$.
\subsection{Lower bound for standard volume sampling}
\label{s:lower}

We show that standard volume sampling cannot
guarantee $1+\epsilon$ multiplicative loss bounds on some instances, unless over half of the rows are chosen to be
in the subsample.%
\if 0
In this section we construct a linear regression problem
where after sampling $n/2$ of the rows with
standard volume sampling, the expected loss of
the subsampled solution is at least 1.5 times the loss of the best.
We also show that the same lower bound on the loss of the subsampled
estimate holds with probability at least $1/4$.
So, for this type of volume sampling, no good multiplicative loss bounds
are possible.
\fi
\begin{theorem}
\label{t:lower}
Let $(\X,\y)$ be an $n\times d$ least squares problem, such that
\begin{align*}
\X=\begin{pmatrix}
&\multirow{1}{*}{$\I_{d\times d}$}&\\
\hline
&\gamma\,\I_{d\times d}&\\
\hline
&\vdots&\\
\hline
&\gamma\,\I_{d\times d}&
\end{pmatrix} , \quad\y=
\begin{pmatrix}
\multirow{1}{*}{$\one_d$}\\
\hline
\zero_d\\
\hline
\vdots\\
\hline
\zero_d\end{pmatrix},\qquad \text{where}\quad \gamma>0.
\end{align*}
Let $\w_S^*=(\X_S)^+\y_S$ be
obtained from size $k$ volume sampling for $(\X,\y)$. Then,
\begin{align}
\lim_{\gamma\rightarrow 0}\frac{\E[L(\w_S^*)]}{L(\w^*)} \, \geq\, 
 1 + \frac{n-k}{n-d},\label{eq:L1}
\end{align}
and there is a $\gamma>0$ such that for any $k\leq \frac{n}{2}$,
\begin{align}
\Pr\bigg(L(\w_S^*) \geq \Big(1+\frac{1}{2}\Big)L(\w^*)\bigg) >
       \frac{1}{4}.\label{eq:L2}
\end{align}
\end{theorem}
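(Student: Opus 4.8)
The plan is to exploit that every row of $\X$ is a scaled standard basis vector, so that $\X^\top\X$ and every $\X_S^\top\X_S$ are diagonal and the entire least-squares problem decouples across the $d$ coordinates. Write $m$ for the number of $\gamma\I_{d\times d}$ blocks, so $n=(m+1)d$. A direct calculation gives $\X^\top\X=(1+m\gamma^2)\I$ and $\X^\top\y=\one_d$, hence $\w^*=\frac{1}{1+m\gamma^2}\one_d$ and $L(\w^*)=\frac{dm\gamma^2}{1+m\gamma^2}$. For a sampled set $S$, the solution $\w_S^*$ also splits coordinatewise: in coordinate $j$ the only nonzero response lives on the identity row $j$, so if that row is \emph{not} in $S$ then $(\w_S^*)_j=0$ and coordinate $j$ contributes exactly $1$ to $L(\w_S^*)$; if it \emph{is} in $S$ then $(\w_S^*)_j=\frac{1}{1+n_j\gamma^2}$ (where $n_j\le m$ counts the sampled $\gamma$-rows in coordinate $j$) and coordinate $j$ contributes $\frac{m\gamma^2+n_j^2\gamma^4}{(1+n_j\gamma^2)^2}=m\gamma^2(1+o(1))$ as $\gamma\to0$.

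For \eqref{eq:L1} I would assemble $\E[L(\w_S^*)]=\sum_{j=1}^d\E[\text{contribution}_j]$ by linearity. The needed ingredient is the known marginal inclusion probability of volume sampling, $\Pr(i\in S)=\frac{k-d}{n-d}+\frac{n-k}{n-d}\,\ell_i$, where $\ell_i$ is the leverage score (for this $\X$, $\ell_i=\frac{1}{1+m\gamma^2}$ on identity rows and $\frac{\gamma^2}{1+m\gamma^2}$ on $\gamma$-rows). This yields $\Pr(\text{id }j\notin S)=\frac{n-k}{n-d}\cdot\frac{m\gamma^2}{1+m\gamma^2}$, so coordinate $j$ contributes $m\gamma^2(1+o(1))$ from the event it is sampled plus $\frac{n-k}{n-d}m\gamma^2(1+o(1))$ from the event it is missing; summing over $j$ gives $\E[L(\w_S^*)]=dm\gamma^2\big(1+\frac{n-k}{n-d}\big)(1+o(1))$. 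Dividing by $L(\w^*)=dm\gamma^2(1+o(1))$ and letting $\gamma\to0$ produces the limit $1+\frac{n-k}{n-d}$, which gives \eqref{eq:L1}.

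For \eqref{eq:L2} the key observation is the same per-coordinate dichotomy: if \emph{any} identity row is missing from $S$ then $L(\w_S^*)\ge1$. I would fix $\gamma>0$ (independently of $k$) so that $L(\w^*)=\frac{dm\gamma^2}{1+m\gamma^2}=\frac23$, i.e. $\gamma^2=\frac{2}{m(3d-2)}$; then $\frac32 L(\w^*)=1$, so the event $\{\text{some identity row missing}\}$ is contained in $\{L(\w_S^*)\ge\frac32L(\w^*)\}$. It remains to show $\Pr(\text{some identity row missing})>\frac14$, equivalently $\Pr(\text{all identity rows in }S)<\frac34$. Writing $p=\Pr(\text{id }j\notin S)=\frac{n-k}{n-d}\frac{m\gamma^2}{1+m\gamma^2}$, the negative association (strong Rayleigh) property of volume sampling gives $\Pr(\text{all id in }S)\le\prod_{j}\Pr(\text{id }j\in S)=(1-p)^d\le e^{-pd}$. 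Since $pd=\frac{n-k}{n-d}L(\w^*)$ and $k\le\frac n2$ forces $\frac{n-k}{n-d}>\frac12$, we get $pd>\frac12\cdot\frac23=\frac13>\ln\frac43$, hence $\Pr(\text{all id in }S)\le e^{-1/3}<\frac34$ and $\Pr(\text{some id missing})>\frac14$, uniformly over $k\le n/2$.

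The two steps I expect to be most delicate are the bookkeeping of the double limit in \eqref{eq:L1} and the ``bounded away from $1$'' estimate in \eqref{eq:L2}. In \eqref{eq:L1} both $\E[L(\w_S^*)]$ and $L(\w^*)$ vanish like $\gamma^2$, and the claimed constant $1+\frac{n-k}{n-d}$ is the ratio of their leading coefficients; one must verify that the sampled-coordinate contributions are genuinely $m\gamma^2(1+o(1))$ (uniformly in the random $n_j\le m$) and that the rare missing-identity events, each of probability $\Theta(\gamma^2)$ but loss exactly $1$, contribute at the same $\gamma^2$ scale. In \eqref{eq:L2}, the subtlety is that each marginal $\Pr(\text{id }j\in S)\to1$, so a naive union or Markov bound is too weak; the constant lower bound requires controlling the \emph{product} of the $d$ near-one marginals, which is where negative association (or, as a self-contained fallback, a direct Cauchy--Binet/Vandermonde evaluation of $\Pr(\text{all id in }S)$ for this structured $\X$) is essential.
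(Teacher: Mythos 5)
Your proposal is correct and follows essentially the same route as the paper: the per-coordinate decoupling of the loss, the marginal formula $\Pr(i\in S)=\frac{k-d}{n-d}+\frac{n-k}{n-d}\,l_i$ for part \eqref{eq:L1}, and for part \eqref{eq:L2} the negative-association bound $\Pr([d]\subseteq S)\le\prod_i\Pr(i\in S)\le e^{-\frac{n-k}{n-d}L(\w^*)}$ with $\gamma$ chosen so that $L(\w^*)=\frac{2}{3}$. The only cosmetic difference is that in \eqref{eq:L1} you track the exact leading-order asymptotics in $\gamma^2$ (showing the limit is an equality), whereas the paper settles for the lower bound $\E[L(\w_S^*)]\ge L(\w^*)\big(c+c^2\frac{n-k}{n-d}\big)$ and lets $c\to 1$.
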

\textbf{Proof }\ 
In Appendix \ref{a:lower} we show part \eqref{eq:L1}, and that for the chosen
$(\X,\y)$ we have $L(\w^*)\!=\!\sum_{i=1}^d \! (1\!-\!l_i)$ (see
\eqref{eq:lb-loss}), where $l_i=\x_i^\top(\X^\top\X)^{-1}\x_i$ is the
$i$-th leverage score of $\X$. Here, we show \eqref{eq:L2}.
The marginal probability of the $i$-th row under volume sampling
(as given by \cite{unbiased-estimates-journal}) is
\begin{align}
\Pr(i\in S) = \theta \ l_i + (1-\theta) \ 1
=  1 - \theta \ (1-l_i),\;
\text{  where }\theta= \frac{n-k}{n-d}.\label{eq:marginal}
\end{align}
Next, we bound the probability that all of the first $d$ input vectors
were selected by volume sampling:
\begin{align*}
  \Pr\big([d]\subseteq S\big)&
\overset{(*)}{\leq} \prod_{i=1}^d \Pr(i\in S)
=\prod_{i=1}^d\Big(1- \frac{n-k}{n-d}\,(1-l_i)\Big)
\leq \exp\Big(-\frac{n-k}{n-d}\!\overbrace{L(\w^*)}^{\sum_{i=1}^d  (1-l_i)}\!\!\Big),
\end{align*}
where $(*)$ follows from negative associativity of volume sampling (see \cite{dual-volume-sampling}).
  If for some $i\in[d]$ we have $i\not\in S$, then $L(\w_S^*)\geq
1$. So for $\gamma$ such that
$L(\w^*)=\frac{2}{3}$ and any $k\leq \frac{n}{2}$: 
\begin{align*}
\Pr\bigg(L(\w_S^*)\geq
\Big(1+\frac{1}{2}\Big)\overbrace{L(\w^*)}^{2/3}\bigg)
&\geq 1 - \exp\Big(\!-\frac{n-k}{n-d}\cdot\frac{2}{3}\Big)
\geq 1 - \exp\Big(\!-\frac{1}{2}\cdot\frac{2}{3}\Big)> \frac{1}{4}.
\hspace{1cm}\BlackBox
\end{align*}
Note that this lower bound only makes use of the negative
associativity of volume sampling and the form of the
marginals. However the tail bounds we prove in Section
\ref{s:tail} rely on more subtle properties of volume
sampling. We begin by creating a variant of volume sampling with
rescaled marginals.

\section{Rescaled volume sampling} \label{s:resc}
Given any size $k\geq d$, our goal is to
jointly sample 
$k$ row indices $\pi_1,\dots,\pi_k$ with replacement 
(instead of a {\em subset} $S$ of $[n]$ of size $k$, we get a
\emph{sequence} $\pi\in[n]^k$).  
The second difference to standard volume sampling is that
we rescale the $i$-th row (and response) by
$\frac{1}{\sqrt{q_i}}$, where
$q = (q_1,...,q_n)$ is any discrete distribution over the set of
row indices $[n]$, such that $\sum_{i=1}^nq_i=1$ and $q_i>0$ for all $i\in[n]$.
We now define $q$-rescaled size $k$ volume sampling as a
joint sampling distribution over $\pi\in[n]^k$, s.t.
\begin{align}
\text{$q$-rescaled size $k$ volume sampling:}
\qquad\Pr(\pi) \;\sim \;
\det\Big(\sum_{i=1}^k \frac{1}{q_{\pi_i}}\x_{\pi_i}\x_{\pi_i}^\top\Big)
\;\;
\prod_{i=1}^kq_{\pi_i}.\qquad\qquad
\label{eq:sampling}
\end{align}
Using the following rescaling matrix
$\Q_\pi\defeq\sum_{i=1}^{|\pi|}\frac1{q_{\pi_i}}\e_{\pi_i}\e_{\pi_i}^\top\ \in\R^{n\times
  n},$
we rewrite the determinant as $\det(\X^\top\Q_\pi\X)$.
As in standard volume sampling, the normalization factor in
rescaled volume sampling can be given in a closed form
through a novel extension of the Cauchy-Binet formula (proof in Appendix \ref{sec:cauchy-binet-proof}).

\begin{proposition}\label{p:cauchy-binet}
For any $\X\in\R^{n\times d}$, $k\geq d$ and $q_1,\dots,q_n>0$, such that $\sum_{i=1}^nq_i=1$,
  we have
\begin{align*}
\sum_{\pi\in[n]^k}\!\det(\X^\top\Q_\pi\X)
\,\prod_{i=1}^kq_{\pi_i} = k(k\!-\!1)...(k\!-\!d\!+\!1)\,\det(\X^\top\X).
\end{align*}
\end{proposition}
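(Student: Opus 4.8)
The plan is to reduce the claimed identity to the classical Cauchy--Binet formula $\det(\X^\top\X)=\sum_{R\in\binom{[n]}{d}}\det(\X_R)^2$, using multilinearity of the determinant together with the normalization $\sum_i q_i=1$. The first step is to rewrite, for a fixed sequence $\pi\in[n]^k$, the matrix $\X^\top\Q_\pi\X=\sum_{j=1}^k\frac1{q_{\pi_j}}\x_{\pi_j}\x_{\pi_j}^\top$ as a weighted Gram matrix $\X_\pi^\top\D_\pi\X_\pi$, where $\X_\pi\in\R^{k\times d}$ stacks the rows $\x_{\pi_1}^\top,\dots,\x_{\pi_k}^\top$ (with repeats) and $\D_\pi=\diag(1/q_{\pi_1},\dots,1/q_{\pi_k})$. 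Applying Cauchy--Binet to this $k\times k$ factorization expands the determinant as a sum over the $d$-element \emph{position} sets $T\subseteq[k]$:
\[
\det(\X^\top\Q_\pi\X)=\sum_{T\in\binom{[k]}{d}}\Big(\prod_{j\in T}\tfrac1{q_{\pi_j}}\Big)\det(\X_{\pi_T})^2,
\]
where $\X_{\pi_T}$ is the $d\times d$ submatrix with rows $\x_{\pi_j}^\top$, $j\in T$. This is the only genuinely determinantal ingredient, and it is just the standard weighted rank-one version of Cauchy--Binet; repeated indices in $\pi$ need no special handling, since any $T$ selecting two equal rows contributes a vanishing determinant.

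Next I would multiply by $\prod_{i=1}^k q_{\pi_i}$ and observe that the factors indexed by $T$ cancel, leaving $\prod_{i\in[k]\setminus T}q_{\pi_i}$. Summing over $\pi\in[n]^k$ and interchanging the (finite) sums puts the sum over $T$ on the outside. For a fixed $T$ the summand depends on the coordinates $(\pi_j)_{j\in T}$ only through $\det(\X_{\pi_T})^2$ and on the coordinates $(\pi_i)_{i\notin T}$ only through $\prod_{i\notin T}q_{\pi_i}$, so the sum over $\pi$ factorizes as
\[
\Big(\sum_{(\pi_j)_{j\in T}\in[n]^d}\det(\X_{\pi_T})^2\Big)\cdot\prod_{i\in[k]\setminus T}\Big(\sum_{r=1}^n q_r\Big).
\]

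The two factors are then evaluated separately. The second factor equals $1$ for every one of the $k-d$ free coordinates, precisely because $q$ is a probability distribution --- this is where the rescaling by a normalized $q$ is essential. The first factor runs over all ordered $d$-tuples of rows; terms with a repeated index vanish, and each of the distinct index sets $R\in\binom{[n]}{d}$ contributes $d!$ equal copies, so by classical Cauchy--Binet it equals $d!\,\sum_{R\in\binom{[n]}{d}}\det(\X_R)^2=d!\,\det(\X^\top\X)$. Since the value is the same for each of the $\binom{k}{d}$ choices of $T$, the total is $\binom{k}{d}\,d!\,\det(\X^\top\X)=k(k-1)\cdots(k-d+1)\,\det(\X^\top\X)$, as claimed.

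The steps are elementary once the right bookkeeping is in place, so I expect the main (and only minor) obstacle to be organizational: setting up the position-set expansion so that the $q$-weights split cleanly into a part cancelling against $\prod_i q_{\pi_i}$ and a part telescoping to $1$, and taking care that the ordered-tuple sum is converted to the unordered Cauchy--Binet sum with the correct $d!$ multiplicity.
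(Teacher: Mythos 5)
Your proposal is correct and follows essentially the same route as the paper's proof: expand $\det(\X^\top\Q_\pi\X)$ via weighted Cauchy--Binet over size-$d$ position subsets of $[k]$, cancel the $q$-weights against $\prod_i q_{\pi_i}$, factor the sum over $\pi$ so the $k-d$ free coordinates each contribute $\sum_i q_i = 1$, and convert the ordered $d$-tuple sum to $d!\det(\X^\top\X)$ with the $\binom{k}{d}$ placement count supplying the falling factorial. No gaps.
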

Given a matrix $\X\in\R^{n\times d}$, vector $\y\in\R^n$ and a
sequence $\pi\in[n]^k$, we are interested in a least-squares problem
$(\Q_\pi^{\sfrac{1}{2}}\X, \Q_\pi^{\sfrac{1}{2}}\y)$, which selects instances indexed by $\pi$,
and rescales each of them by the corresponding $1/\!\sqrt{q_i}$. This
leads to a natural subsampled least squares estimator
\begin{align*}
\w_\pi^*=\argmin_\w\sum_{i=1}^k\frac{1}{q_{\pi_i}}\big(\x_{\pi_i}^\top\w-y_{\pi_i}\big)^2
  = (\Q_\pi^{\sfrac{1}{2}}\X)^+\Q_\pi^{\sfrac{1}{2}}\y.
\end{align*}
The key property of standard volume sampling 
is that the subsampled least-squares estimator is unbiased.
Surprisingly this property is retained for any $q$-rescaled volume
sampling (proof in Section \ref{sec:unbiasedness-proof}).
As we shall see this will give us great leeway for choosing
$q$ to optimize our algorithms.
\begin{theorem}\label{t:unbiasedness}
Given a full rank $\X\in\R^{n\times d}$ and a response vector
$\y\in\R^n$, for any $q$ as above, if $\pi$ is sampled according to
\eqref{eq:sampling}, then
\begin{align*}
\E[\w_\pi^*] =\w^*,\quad\text{where}\quad
\w^*=\argmin_\w\|\X\w-\y\|^2.
\end{align*}
\end{theorem}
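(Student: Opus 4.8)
The plan is to reduce unbiasedness to a determinantal identity. First I would note that the sampling weight in \eqref{eq:sampling} is proportional to $\det(\X^\top\Q_\pi\X)$, so every sequence $\pi$ whose selected rows do not span $\R^d$ receives probability zero; on the support we therefore have $\X^\top\Q_\pi\X$ invertible, and the pseudo-inverse simplifies to $\w_\pi^*=(\X^\top\Q_\pi\X)^{-1}\X^\top\Q_\pi\y$. Next I would decompose the response as $\y=\X\w^*+\r$, where $\r=\y-\X\w^*$ is the least-squares residual; the normal equations give $\X^\top\r=\zero$. Substituting and cancelling $(\X^\top\Q_\pi\X)^{-1}(\X^\top\Q_\pi\X)=\I$ yields $\w_\pi^*=\w^*+(\X^\top\Q_\pi\X)^{-1}\X^\top\Q_\pi\r$, so it suffices to show that the residual term has expectation $\zero$.

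To control the residual term coordinatewise, fix $j\in[d]$ and apply Cramer's rule: writing $\X_{j\to\r}$ for the matrix obtained from $\X$ by replacing its $j$-th column with $\r$, the $j$-th entry of $(\X^\top\Q_\pi\X)^{-1}\X^\top\Q_\pi\r$ equals $\det(\X^\top\Q_\pi\X_{j\to\r})/\det(\X^\top\Q_\pi\X)$. Multiplying by the unnormalized weight $\det(\X^\top\Q_\pi\X)\prod_{i}q_{\pi_i}$ cancels the determinant in the denominator, so the $j$-th coordinate of the unnormalized expectation of the residual term is $\sum_{\pi\in[n]^k}\big(\prod_i q_{\pi_i}\big)\det(\X^\top\Q_\pi\X_{j\to\r})$. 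I would then invoke the two-matrix extension of Proposition~\ref{p:cauchy-binet}, namely $\sum_{\pi}\big(\prod_i q_{\pi_i}\big)\det(\A^\top\Q_\pi\B)=\frac{k!}{(k-d)!}\det(\A^\top\B)$ for arbitrary $\A,\B\in\R^{n\times d}$, which follows from the same Cauchy--Binet expansion that proves the single-matrix case. Applying it with $\A=\X$ and $\B=\X_{j\to\r}$ gives $\frac{k!}{(k-d)!}\det(\X^\top\X_{j\to\r})$, and the $j$-th column of $\X^\top\X_{j\to\r}$ is exactly $\X^\top\r=\zero$, so this determinant vanishes. Hence every coordinate of the residual term has expectation zero, while the $\w^*$ term contributes $\w^*$ because $\sum_\pi\big(\prod_i q_{\pi_i}\big)\det(\X^\top\Q_\pi\X)$ equals the normalizer by Proposition~\ref{p:cauchy-binet}; together these give $\E[\w_\pi^*]=\w^*$.

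The main obstacle is the bilinear (two-matrix) generalization of the Cauchy--Binet normalization: the single-matrix statement in Proposition~\ref{p:cauchy-binet} is not literally strong enough, since Cramer's rule produces a determinant of a product of two \emph{different} matrices, $\X^\top\Q_\pi\X_{j\to\r}$. Once one checks that the expansion underlying Proposition~\ref{p:cauchy-binet} goes through verbatim for $\A^\top\Q_\pi\B$ with $\A\neq\B$ (it does, as the expansion uses only multilinearity of the determinant and the rank-one structure of $\Q_\pi$), the remaining steps are routine. A secondary point worth stating carefully is the reduction to invertible $\pi$: this is clean precisely because the sampling probability itself carries the factor $\det(\X^\top\Q_\pi\X)$, so no separate argument about the behavior of the pseudo-inverse on degenerate sequences is needed.
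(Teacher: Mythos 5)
Your proof is correct, but it takes a genuinely different route from the paper's. The paper applies the size-$d$ pseudo-inverse expectation formula for standard volume sampling to write $\w_\pi^*$ as a determinant-weighted average of exact solutions $\w_{\pi_S}^*$ over size-$d$ subsets $S\in\sets{k}{d}$, observes that the rescaling by $1/\sqrt{q_{\pi_i}}$ cancels on these square subsystems, and then sums over $\pi$ (splitting each sequence into $\pi_S$ and its complement) so that unbiasedness is inherited from the known size-$d$ formula applied once more to the full problem $(\X,\y)$. You instead subtract $\w^*$ using the normal equations $\X^\top\r=\zero$, express each coordinate of the error term via Cramer's rule as $\det(\X^\top\Q_\pi\X_{j\to\r})/\det(\X^\top\Q_\pi\X)$, and annihilate the summed numerator with a bilinear extension of Proposition~\ref{p:cauchy-binet}, namely $\sum_{\pi}(\prod_i q_{\pi_i})\det(\A^\top\Q_\pi\B)=\frac{k!}{(k-d)!}\det(\A^\top\B)$; that extension does hold, since the expansion in Appendix~\ref{sec:cauchy-binet-proof} goes through verbatim with $\det(\X_{\pi_S})^2$ replaced by $\det(\A_{\pi_S})\det(\B_{\pi_S})$ (the permutation signs cancel in the product, and repeated indices still kill both factors). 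The two arguments share the same combinatorial core --- Cauchy--Binet over size-$d$ subsets of the sequence, then collapsing repeated indices --- but yours is more self-contained, not importing the pseudo-inverse expectation formula from prior work, whereas the paper's modular use of that formula lets it recycle the identical summation pattern for Theorem~\ref{t:square-inverse} and Proposition~\ref{p:marginals}. One point you leave implicit: after cancelling $\det(\X^\top\Q_\pi\X)$ you extend the sum from the support of the distribution to all of $[n]^k$; this is harmless because the column space of $\X^\top\Q_\pi\X_{j\to\r}$ lies in $\mathrm{span}\{\x_{\pi_i}\}$, so sequences with $\det(\X^\top\Q_\pi\X)=0$ contribute zero to the numerator as well, but it deserves a sentence.
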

The matrix formula \eqref{eq:square-inverse}, discussed in Section
\ref{s:versus} for standard volume sampling, has a
natural extension to any rescaled volume sampling, turning here into an
inequality (proof in Appendix \ref{sec:square-inverse-proof}).
\begin{theorem}\label{t:square-inverse}
Given a full rank $\X\in\R^{n\times d}$ and any $q$ as above, if $\pi$ is sampled according to \eqref{eq:sampling}, then
\begin{align*}
\E\big[(\X^\top\Q_\pi\X)^{-1}\big]
\preceq \frac{1}{k\!-\!d\!+\!1}(\X^\top\X)^{-1}.
\end{align*}
\end{theorem}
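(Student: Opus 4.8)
The plan is to turn the matrix expectation into a sum of adjugates, so that the determinant carried by the sampling probability cancels, and then to evaluate that sum with a matrix-valued version of the Cauchy--Binet identity behind Proposition \ref{p:cauchy-binet}. Write $\M_\pi\defeq\X^\top\Q_\pi\X$ and recall the normalizer $Z=k(k-1)\cdots(k-d+1)\det(\X^\top\X)$ from Proposition \ref{p:cauchy-binet}. Using $\M^{-1}=\mathrm{adj}(\M)/\det(\M)$ for invertible $\M$, and noting that $\Pr(\pi)=\det(\M_\pi)\prod_i q_{\pi_i}/Z$ is supported exactly on the $\pi$ with $\det(\M_\pi)>0$ (since all $q_i>0$), the determinant in the probability cancels the one inside the inverse:
\[
\E[\M_\pi^{-1}] \;=\; \frac1Z\sum_{\pi:\,\det(\M_\pi)>0}\mathrm{adj}(\M_\pi)\,\prod_{i=1}^k q_{\pi_i}.
\]

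The inequality then comes for free from the samples that are \emph{excluded} from this sum. Each $\M_\pi=\sum_i \frac1{q_{\pi_i}}\x_{\pi_i}\x_{\pi_i}^\top$ is positive semidefinite, so every omitted $\pi$ has $\rank(\M_\pi)\le d-1$; for such a matrix $\mathrm{adj}(\M_\pi)$ is $0$ when the rank is below $d-1$ and a nonnegative multiple of $uu^\top$ (with $u$ spanning the kernel) when the rank is exactly $d-1$. In all cases $\mathrm{adj}(\M_\pi)\succeq 0$, so restoring the omitted terms can only increase the matrix in the Loewner order:
\[
\E[\M_\pi^{-1}] \;\preceq\; \frac1Z\sum_{\pi\in[n]^k}\mathrm{adj}(\M_\pi)\,\prod_{i=1}^k q_{\pi_i}.
\]

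It remains to evaluate the full sum. Expanding $\mathrm{adj}(\M_\pi)_{ab}=(-1)^{a+b}\det\big((\M_\pi)_{\hat b,\hat a}\big)$ by Cauchy--Binet over the $(d-1)$-subsets of the $k$ sample positions and then summing over $\pi\in[n]^k$ mirrors the proof of Proposition \ref{p:cauchy-binet}: the $d-1$ occupied positions reconstruct the row-index sum that defines $\mathrm{adj}(\X^\top\X)_{ab}$ (the $1/q$ and $q$ weights cancelling exactly as before), while each of the remaining $k-d+1$ positions contributes a free factor $\sum_j q_j=1$. Counting the ordered ways to designate the occupied positions gives the analogue of the falling-factorial constant, namely $\sum_{\pi}\mathrm{adj}(\M_\pi)\prod_i q_{\pi_i}=k(k-1)\cdots(k-d+2)\,\mathrm{adj}(\X^\top\X)$. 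A convenient way to avoid handling the adjugate directly is to contract with an arbitrary $v$: the matrix determinant lemma gives $v^\top\mathrm{adj}(\M_\pi)v=\det(\M_\pi)-\det(\M_\pi-vv^\top)$ as a polynomial identity, reducing the computation to two applications of the (lightly generalized) Cauchy--Binet sum, one of which is exactly Proposition \ref{p:cauchy-binet}.

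Combining the three pieces and using $\mathrm{adj}(\X^\top\X)=\det(\X^\top\X)\,(\X^\top\X)^{-1}$,
\[
\E[\M_\pi^{-1}] \;\preceq\; \frac{k(k-1)\cdots(k-d+2)}{k(k-1)\cdots(k-d+1)\,\det(\X^\top\X)}\,\mathrm{adj}(\X^\top\X) \;=\; \frac{1}{k-d+1}\,(\X^\top\X)^{-1}.
\]
I expect the main obstacle to be the third step: pinning down the matrix-valued Cauchy--Binet identity for the adjugate with the correct constant $k(k-1)\cdots(k-d+2)$ and the right sign and index bookkeeping for off-diagonal entries. The reduction through the matrix determinant lemma is the attractive route, since it lets me reuse Proposition \ref{p:cauchy-binet} essentially verbatim on the rank-one-perturbed matrix $\X^\top\Q_\pi\X-vv^\top$ rather than re-deriving a full matrix-valued determinant expansion, while the strict inequality is explained cleanly as the discarded positive-semidefinite contribution of the rank-deficient samples.
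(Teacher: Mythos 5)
Your proof is correct and is essentially the paper's own argument in adjugate clothing: the identity $\v^\top\mathrm{adj}(\M)\v=\det(\M+\v\v^\top)-\det(\M)$ is exactly the Sylvester step the paper uses, the Loewner inequality from the discarded rank-deficient samples is the paper's observation that those terms contribute $0$ on the left and something nonnegative on the right, and the evaluation of $\sum_\pi\mathrm{adj}(\X^\top\Q_\pi\X)\prod_i q_{\pi_i}=k(k-1)\cdots(k-d+2)\,\mathrm{adj}(\X^\top\X)$ is the same Cauchy--Binet bookkeeping over $(d-1)$-subsets. The paper simply carries out the whole computation on the quadratic form $\v^\top(\cdot)\v$ from the start rather than naming the adjugate, which is the contraction you yourself recommend as the ``convenient way.''
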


\subsection{Proof of Theorem \ref{t:unbiasedness}}
\label{sec:unbiasedness-proof}
We show that the least-squares estimator
$\w_\pi^* = (\Q_\pi^{\sfrac{1}{2}}\X)^+\Q_\pi^{\sfrac{1}{2}}\y$
produced from any $q$-rescaled volume sampling is unbiased,
illustrating a proof technique which is also useful for showing
Theorem \ref{t:square-inverse},  as well as Propositions
\ref{p:cauchy-binet} and \ref{p:marginals}. The key idea is to apply
the pseudo-inverse expectation formula for standard volume sampling
(see e.g., \cite{unbiased-estimates}) first on the
subsampled estimator $\w_\pi^*$, and then again on the full estimator
$\w^*$. In the first step, this formula states:
\begin{align*}
\overbrace{(\Q_\pi^{\sfrac{1}{2}}\X)^+\Q_\pi^{\sfrac{1}{2}}\y}^{\w_\pi^*}=
\sum_{S\in\sets{k}{d}}
 \frac{\det(\X^\top\Q_{\pi_S}\X)}{\det(\X^\top\Q_\pi\X)} \overbrace{(\Q_{\pi_S}^{\sfrac{1}{2}}\X)^+\Q_{\pi_S}^{\sfrac{1}{2}}\y}^{\w_{\pi_S}^*},
\end{align*}
\vspace{-6mm}

where ${[k]\choose d} \defeq \{S\!\subseteq\! \{1,\dots,k\}:\ |S|\!=\!d\}$ and $\pi_S$ denotes a subsequence of $\pi$ indexed by the elements
of set $S$. Note that since $S$ is of size $d$, we can decompose
the determinant:
\begin{align*}
\det(\X^\top\Q_{\pi_S}\X) = \det(\X_{\pi_S})^2\,\prod_{i\in S}\frac{1}{q_{\pi_i}}.
\end{align*}
Whenever this determinant is non-zero, $\w_{\pi_S}^*$ is the
exact solution of a system of $d$ linear equations:
\begin{align*}
\frac{1}{\sqrt{q_{\pi_i}}}\x_{\pi_i}^\top\w=\frac{1}{\sqrt{q_{\pi_i}}}y_{\pi_i},\qquad\text{for}\quad
  i\in S.
\end{align*}
Thus, the rescaling of each equation by $\frac{1}{\sqrt{q_{\pi_i}}}$
cancels out, and we can simply write
$\w_{\pi_S}^*=(\X_{\pi_S})^+\y_{\pi_S}$. Note that this is not the case for
sets larger than $d$ whenever the optimum solution incurs positive loss.
We now proceed with summing over all $\pi\in[n]^k$. Following Proposition
\ref{p:cauchy-binet}, we define the normalization constant as
$Z=d!{k\choose d}\det(\X^\top\X)$, and obtain:
\begin{align*}
Z\,\E[\w_\pi^*]&=\!\!\!\sum_{\pi\in[n]^k}\!\!\bigg(\prod_{i=1}^kq_{\pi_i}\!\bigg)\det(\X^\top\Q_\pi\X)\,\w_\pi^*
=\!\!\!  \sum_{\pi\in[n]^k}\sum_{S\in\sets{k}{d}}
\!\bigg(\prod_{i\in[k]\backslash S}\!\!q_{\pi_i}\bigg) \det(\X_{\pi_S})^2  (\X_{\pi_S})^+\y_{\pi_S}\\
&\overset{(1)}{=}{k\choose  d}\sum_{\bar{\pi}\in[n]^d}\det(\X_{\bar{\pi}})^2  (\X_{\bar{\pi}})^+\y_{\bar{\pi}}
\sum_{\tilde{\pi}\in[n]^{k-d}}\prod_{i=1}^{k-d}q_{\tilde{\pi}_i}\\
&\overset{(2)}{=}{k\choose  d}d! \sum_{S\in\sets{n}{d}}
\det(\X_S)^2  (\X_S)^+\y_S\ 
\bigg(\sum_{i=1}^nq_i\bigg)^{k-d}\ 
\overset{(3)}{=}\overbrace{{k\choose d} d!\det(\X^\top\X)}^{Z}\,\w^*.
\end{align*}
Note that in $(1)$ we separate $\pi$ into two parts (subset $S$ and its
complement, $[k]\backslash S$) and sum over them separately. 
The binomial coefficient ${k\choose d}$ counts the number of ways that $S$ 
can be ``placed into'' the sequence $\pi$. In $(2)$ we observe that
whenever $\bar{\pi}$ has repetitions, determinant $\det(\X_{\bar{\pi}})$ is
zero, so we can switch to summing over sets. Finally, $(3)$ again uses the
standard size $d$ volume sampling unbiasedness formula, now for the least-squares
problem $(\X,\y)$, and the fact that $q_i$'s sum to 1.

\section{Leveraged volume sampling: a natural rescaling} \label{s:alg}

\begin{wrapfigure}{R}{0.44\textwidth}
\renewcommand{\thealgorithm}{}
\vspace{-8mm}
\begin{minipage}{0.44\textwidth}
\floatname{algorithm}{}
\begin{algorithm}[H] 
{\fontsize{8}{8}\selectfont
  \caption{\bf \small Determinantal rejection sampling}
  \begin{algorithmic}[1]
\vspace{-1mm}
    \STATE \textbf{Input:} $\X\!\in\!\R^{n\times d},
    q=(\frac{l_1}{d},\dots,\frac{l_n}{d}), k\geq d$
\vspace{-1mm}
    \STATE $s \leftarrow \max\{k,\,4d^2\}$
    \STATE \textbf{repeat}
\vspace{1mm}
    \STATE \quad Sample $\pi_1,\dots,\pi_{s}$ i.i.d. $\sim
    (q_1,\dots,q_n)$
    \STATE \quad Sample $\textit{Accept} \sim \text{Bernoulli}\Big(\frac{\det(\frac{1}{s}\X^\top\Q_\pi\X)}{\det(\X^\top\X)}\Big)$
\vspace{-1mm}
    \STATE \textbf{until} $\textit{Accept}=\text{true}$
    \STATE $S\leftarrow$
    VolumeSample$\big((\Q_{[1..n]}^{\sfrac{1}{2}}\X)_\pi,k\big)$
\vspace{-1mm}
    \RETURN $\pi_S$
 \end{algorithmic}
}
\end{algorithm}
\end{minipage}
\vspace{-0.8cm}
\end{wrapfigure}
Rescaled volume sampling can be viewed as selecting a
sequence $\pi$ of $k$ rank-1 covariates from the covariance matrix
$\X^\top\X = \sum_{i=1}^n\x_i\x_i^\top$. 
If $\pi_1,\dots,\pi_k$ are sampled i.i.d. from $q$, i.e.
$\Pr(\pi)=\prod_{i=1}^k q_{\pi_i}$, then matrix
$\frac{1}{k}\X^\top\Q_\pi\X$ is an unbiased estimator of the
covariance matrix because $\E[q_{\pi_i}^{-1} \x_{\pi_i}\x_{\pi_i}^\top]=\X^\top\X$.
In rescaled volume sampling \eqref{eq:sampling},
$\Pr(\pi)\sim$ 
$\big(\prod_{i=1}^k q_{\pi_i}\big)
\frac{\det(\X^\top\Q_\pi\X)}{\det(\X^\top\X)}$, 
and the latter volume
ratio introduces a bias to that estimator.
However, we show that this bias vanishes when $q$ is exactly
proportional to the leverage scores (proof in Appendix \ref{sec:marginals-proof}).

\begin{proposition}\label{p:marginals}
For any $q$ and $\X$ as before, if $\pi\in[n]^k$ is sampled according to
\eqref{eq:sampling}, then 
\begin{align*}
\E[\Q_\pi] = (k\!-\!d)\,\I + \diag\Big(\frac{l_1}{q_1},\dots,\frac{l_n}{q_n}\Big),
\quad\text{where}\quad l_i\defeq\x_i^\top(\X^\top\X)^{-1}\x_i.
\end{align*}
In particular, 
$\E[\frac{1}{k}\X^\top\Q_\pi\X]
=\X^\top\E[\frac{1}{k}\Q_\pi]\X
=\X^\top\X$ if and only if
$q_i=\frac{l_i}{d}>0$ for all $i\in[n]$.
\end{proposition}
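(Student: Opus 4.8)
The plan is to exploit that $\Q_\pi$ is a diagonal matrix, so that computing $\E[\Q_\pi]$ reduces to computing expected multiplicities. Writing $N_j(\pi)=|\{i:\pi_i=j\}|$ for the number of occurrences of index $j$ in the sequence $\pi$, we have $\Q_\pi=\sum_{j=1}^n\frac{N_j(\pi)}{q_j}\e_j\e_j^\top$, so $\E[\Q_\pi]$ is diagonal and the claim is equivalent to showing $\E[N_j(\pi)]=(k-d)\,q_j+l_j$ for every $j\in[n]$. With $Z=k(k-1)\cdots(k-d+1)\det(\X^\top\X)$ the normalizer from Proposition \ref{p:cauchy-binet}, we have $Z\,\E[N_j(\pi)]=\sum_{\pi\in[n]^k}(\prod_iq_{\pi_i})\det(\X^\top\Q_\pi\X)\,N_j(\pi)$, and since the distribution \eqref{eq:sampling} is invariant under permuting the coordinates of $\pi$, this equals $k$ times the same sum restricted to sequences with $\pi_k=j$.

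First I would peel off the last coordinate. Fixing $\pi_k=j$ and writing $\pi=(\pi',j)$ with $\pi'\in[n]^{k-1}$, the rescaling splits as $\X^\top\Q_\pi\X=\A_{\pi'}+\frac1{q_j}\x_j\x_j^\top$ where $\A_{\pi'}=\X^\top\Q_{\pi'}\X$, and the rank-one determinant identity $\det(\A+t\,\x\x^\top)=\det\A+t\,\x^\top\mathrm{adj}(\A)\x$ turns the unnormalized weight of $\pi$ into $q_j\,(\prod_{i<k}q_{\pi'_i})\,[\det(\A_{\pi'})+\frac1{q_j}\x_j^\top\mathrm{adj}(\A_{\pi'})\x_j]$. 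Summing over $\pi'$ thus produces two contributions: one weighted by $\det(\A_{\pi'})$, which by Proposition \ref{p:cauchy-binet} at length $k-1$ contributes the $(k-d)\,q_j$ term after dividing by $Z$; and one weighted by $\x_j^\top\mathrm{adj}(\A_{\pi'})\x_j$. This is exactly the reduction-to-standard-sampling technique used for Theorem \ref{t:unbiasedness} in Section \ref{sec:unbiasedness-proof}, now applied to the matrix $\Q_\pi$ rather than to the estimator $\w_\pi^*$.

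The main obstacle is the second contribution, which requires an adjugate analogue of the generalized Cauchy-Binet formula, namely $\sum_{\pi'\in[n]^{k-1}}(\prod_iq_{\pi'_i})\,\mathrm{adj}(\X^\top\Q_{\pi'}\X)=(k-1)(k-2)\cdots(k-d+1)\,\mathrm{adj}(\X^\top\X)$. I would establish this by augmentation: append an extra row $\v$ to $\X$ carrying a free weight $w>0$, and apply the arbitrary-weight form of Proposition \ref{p:cauchy-binet} (the version its proof yields before normalizing, with the factor $(\sum_iq_i)^{k-d}$) to the resulting $(n+1)$-row problem at length $k$. Both sides are polynomials in $w$: on the left, group sequences by the multiplicity $r$ of the new row and rank-one--expand each $\det(\A_{\pi'}+\frac rw\v\v^\top)$; on the right, use $\det(\tilde\X^\top\tilde\X)=\det(\X^\top\X)+\v^\top\mathrm{adj}(\X^\top\X)\v$ together with $(\sum_iq_i+w)^{k-d}=(1+w)^{k-d}$. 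Matching the constant $w^0$ coefficients, the pure-determinant terms cancel (both equal $Z$, Proposition \ref{p:cauchy-binet} at length $k$), leaving $k\,\v^\top(\sum_{\pi'}(\prod q)\mathrm{adj}(\A_{\pi'}))\v=\frac{k!}{(k-d)!}\v^\top\mathrm{adj}(\X^\top\X)\v$; as this holds for all $\v$, the matrix identity follows. Substituting $\mathrm{adj}(\X^\top\X)=\det(\X^\top\X)(\X^\top\X)^{-1}$ and $l_j=\x_j^\top(\X^\top\X)^{-1}\x_j$ collapses the second contribution to exactly $l_j$, yielding $\E[N_j(\pi)]=(k-d)\,q_j+l_j$ and hence the stated formula. (Throughout I would use the falling-factorial form $(k-1)\cdots(k-d+1)$ so the boundary case $k=d$, where $\det(\A_{\pi'})\equiv 0$ for $|\pi'|=d-1$, is handled uniformly.)

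Finally, for the ``in particular'' claim, the ``if'' direction is immediate: $q_i=l_i/d$ gives $l_i/q_i=d$, so $\X^\top\E[\Q_\pi]\X=(k-d)\X^\top\X+d\sum_i\x_i\x_i^\top=k\,\X^\top\X$. For ``only if'', $\E[\frac1k\X^\top\Q_\pi\X]=\X^\top\X$ forces $\sum_i\frac{l_i}{q_i}\x_i\x_i^\top=d\,\X^\top\X$; multiplying by $(\X^\top\X)^{-1}$ and taking traces gives $\sum_i\frac{l_i^2}{q_i}=d^2$, while $\sum_iq_i=1$ and $\sum_il_i=d$ turn the Cauchy-Schwarz bound $(\sum_il_i)^2\le(\sum_i\frac{l_i^2}{q_i})(\sum_iq_i)$ into an equality, whose equality case forces $l_i/q_i$ to be constant, i.e. $q_i=l_i/d$ (positive since $\X$ has no zero rows). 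I expect the augmentation-based adjugate Cauchy-Binet of the third paragraph to be the only genuinely nontrivial step; everything else is bookkeeping with the rank-one determinant identity and the normalizer $Z$.
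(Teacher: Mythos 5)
Your proof is correct, and its skeleton is the same as the paper's: by permutation invariance reduce to the single-position marginal $\Pr(\pi_k=j)$, peel off the last coordinate, and split the determinant weight into a piece not involving row $j$ (yielding $(k-d)q_j$ via Proposition~\ref{p:cauchy-binet} at length $k-1$) and a piece involving it (yielding $l_j$); these are exactly the terms $T_1$ and $T_2$ in Appendix~\ref{sec:marginals-proof}. Where you diverge is in how the second piece is evaluated. The paper computes $T_2$ by rerunning the Sylvester-formula calculation from the proof of Theorem~\ref{t:square-inverse} (Appendix~\ref{sec:square-inverse-proof}), summing $\det(\X_{\pi_S}^\top\X_{\pi_S}+\x_i\x_i^\top)$ over subsets of size $d-1$. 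You instead isolate a clean standalone identity, $\sum_{\pi'\in[n]^{k-1}}(\prod_i q_{\pi'_i})\,\mathrm{adj}(\X^\top\Q_{\pi'}\X)=\frac{(k-1)!}{(k-d)!}\,\mathrm{adj}(\X^\top\X)$, and prove it by augmenting $\X$ with a row of free weight $w$, applying the unnormalized form of Proposition~\ref{p:cauchy-binet}, and matching the $w^0$ coefficients; this is equivalent to what the paper's $T_2$ computation establishes (take $\det(\A+\v\v^\top)-\det(\A)=\v^\top\mathrm{adj}(\A)\v$ in the square-inverse derivation), but your route is more self-contained and makes the reusable lemma explicit, at the cost of needing to justify the arbitrary-weight Cauchy--Binet and the polynomial identity in $w$ (both of which do hold). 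Finally, you supply a genuine addition: the paper asserts the ``only if'' direction of the last claim without proof, whereas your trace-plus-Cauchy--Schwarz equality-case argument actually establishes it.
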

This special rescaling, which we call \emph{leveraged volume sampling},
has other remarkable properties. Most importantly, it leads to a
simple and efficient algorithm we call {\em determinantal rejection sampling}: Repeatedly
sample $O(d^2)$ indices $\pi_1,\dots,\pi_s$ i.i.d. from
$q=(\frac{l_1}{d},\dots,\frac{l_n}{d})$,  and accept the sample with
probability proportional to its volume ratio. 
Having obtained a sample,
we can further reduce its size via reverse iterative sampling. 
We show next that this procedure not only returns a
$q$-rescaled volume sample, but also exploiting the fact
that $q$ is proportional to the leverage scores, 
it requires (surprisingly) only a constant number of
iterations of rejection sampling with high probability.

\begin{theorem}\label{t:algorithm}
Given the leverage score distribution
$q=(\frac{l_1}{d},\dots,\frac{l_n}{d})$ and the determinant
$\det(\X^\top\X)$ for matrix $\X\in\R^{n\times d}$, determinantal
rejection sampling returns sequence $\pi_S$
distributed according to leveraged volume sampling, and w.p. at
least $1\!-\!\delta$ finishes in time $O((d^2\!+k)d^2\ln(\frac{1}{\delta}))$.
\end{theorem}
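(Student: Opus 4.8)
The plan is to separate the claim into an exact-distribution part and a high-probability running-time part, with the leverage-score choice $q_i=l_i/d$ doing the essential work in both.

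\textbf{Correctness.} First I would verify that the Bernoulli parameter is a genuine probability. Writing $A_\pi=(\X^\top\X)^{-\sfrac{1}{2}}\big(\frac1s\X^\top\Q_\pi\X\big)(\X^\top\X)^{-\sfrac{1}{2}}$, the acceptance probability is exactly $\det(A_\pi)$. With $q_i=l_i/d$ every rescaled rank-one term contributes $q_{\pi_j}^{-1}\x_{\pi_j}^\top(\X^\top\X)^{-1}\x_{\pi_j}=d\,l_{\pi_j}/l_{\pi_j}=d$ to the trace, so $\tr(A_\pi)=d$ identically; since $A_\pi$ is $d\times d$ and PSD, AM--GM gives $\det(A_\pi)\le(\tr(A_\pi)/d)^d=1$, so the step is well defined. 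Next, the accepted sequence has density proportional to $\big(\prod_{j=1}^s q_{\pi_j}\big)\det(\frac1s\X^\top\Q_\pi\X)$, which is exactly $q$-rescaled size-$s$ volume sampling \eqref{eq:sampling}. Finally I would establish a closure-under-subsampling property: conditioned on $\pi$, the call VolumeSample$((\Q_{[1..n]}^{\sfrac{1}{2}}\X)_\pi,k)$ selects $S\subseteq[s]$, $|S|=k$, with probability $\det(\X^\top\Q_{\pi_S}\X)/\big(\binom{s-d}{k-d}\det(\X^\top\Q_\pi\X)\big)$; multiplying by the size-$s$ density cancels the factor $\det(\X^\top\Q_\pi\X)$, and summing over the $\binom{s}{k}$ placements of a target $\tau=\pi_S$ together with the $s-k$ free coordinates (whose total $q$-mass is $(\sum_i q_i)^{s-k}=1$) leaves a density proportional to $\big(\prod_{i=1}^k q_{\tau_i}\big)\det(\X^\top\Q_\tau\X)$, i.e. leveraged volume sampling of size $k$.

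\textbf{Running time and number of rounds.} The dominant per-round cost is forming $\X^\top\Q_\pi\X=\sum_{j=1}^s q_{\pi_j}^{-1}\x_{\pi_j}\x_{\pi_j}^\top$ in $O(sd^2)$ time and computing its determinant in $O(d^3)$; with $s=\max\{k,4d^2\}\ge d$ this is $O(sd^2)=O((d^2+k)d^2)$, and the single final VolumeSample on $s$ rows costs the same. The crux is lower-bounding the per-round acceptance probability $\E_\pi[\det(A_\pi)]$, where $\pi$ is i.i.d.\ from $q$. Writing $A_\pi=\frac1s VV^\top$ with columns $v_j=q_{\pi_j}^{-\sfrac{1}{2}}(\X^\top\X)^{-\sfrac{1}{2}}\x_{\pi_j}$, Cauchy--Binet gives $\det(A_\pi)=s^{-d}\sum_{S}\det(V_{:,S})^2$. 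The $v_j$ are i.i.d.\ with $\E[vv^\top]=\I$, so a moment expansion of $\E\big[\det([v_1,\dots,v_d])^2\big]$ kills every pair of distinct permutations and evaluates to $d!$; hence $\E_\pi[\det(A_\pi)]=\binom{s}{d}d!/s^d=\prod_{i=0}^{d-1}(1-i/s)\ge 1-\frac{d(d-1)}{2s}\ge \frac78$ once $s\ge 4d^2$. Thus each round accepts independently with constant probability, the number of rounds is stochastically dominated by a geometric variable, so it exceeds $O(\ln\frac1\delta)$ with probability at most $\delta$; multiplying by the per-round cost yields the stated $O((d^2+k)d^2\ln\frac1\delta)$ bound.

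\textbf{Main obstacle.} The delicate step is the expected-determinant lower bound, and this is precisely where leverage-score rescaling is indispensable: it simultaneously forces $\tr(A_\pi)=d$ (which makes the acceptance probability valid) and $\E[vv^\top]=\I$ (which makes the clean $\E[\det(V)^2]=d!$ identity available), while the choice $s=\Theta(d^2)$ is exactly what turns $\prod_{i=0}^{d-1}(1-i/s)$ into a constant lower bound on the acceptance rate. Everything else reduces to bookkeeping around the cancellation of $\det(\X^\top\Q_\pi\X)$.
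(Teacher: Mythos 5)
Your proof is correct and follows essentially the same route as the paper's: the AM--GM bound with $\tr(A_\pi)=d$ for validity of the Bernoulli step, a composition/closure lemma for the final subsampling stage, the falling-factorial identity $s(s-1)\cdots(s-d+1)/s^d$ for the expected acceptance probability with $s\ge 4d^2$, and a geometric bound on the number of rounds. The only differences are cosmetic: you prove the composition lemma by direct marginalization rather than the paper's induction on single-row removal, derive the expected acceptance via an i.i.d.\ moment computation rather than invoking the generalized Cauchy--Binet formula (Proposition~\ref{p:cauchy-binet}), and replace the paper's Markov-inequality detour with a cleaner direct geometric argument for the round count.
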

\begin{proof}
We use a composition property of rescaled volume sampling
(proof in Appendix \ref{sec:composition-proof}):
\begin{lemma}\label{l:composition}
Consider the following sampling procedure, for $s>k$:
\begin{align*}
\pi\ &\overset{s}{\sim}\quad\X &&\text{($q$-rescaled size $s$ volume sampling)},\\
S\ &\overset{k}{\sim} \ 
\begin{pmatrix}
\frac{1}{\sqrt{q_{\pi_1}}}\x_{\pi_1}^\top\\
\dots\\
\frac{1}{\sqrt{q_{\pi_s}}}\x_{\pi_s}^\top\end{pmatrix}=
\big(\Q_{[1..n]}^{\sfrac12}\X\big)_\pi&& \text{(standard size $k$ volume sampling)}.
\end{align*}
Then $\pi_S$ is distributed according
to $q$-rescaled size $k$ volume sampling from $\X$.
\end{lemma}
First, we show that the rejection sampling probability in line 5 of
the algorithm is bounded by $1$:
\begin{align*}
\frac{\det(\frac{1}{s}\X^\top\Q_\pi\X)}{\det(\X^\top\X)}&
=\det\Big(\frac{1}{s}\X^\top\Q_\pi\X(\X^\top\X)^{-1}\Big)
\overset{(*)}{\leq}
  \bigg(\frac{1}{d}\tr\Big(\frac{1}{s}\X^\top\Q_\pi\X(\X^\top\X)^{-1}\Big)\bigg)^{\!d}\\
&=
  \Big(\frac{1}{ds}\tr\big(\Q_\pi\X(\X^\top\X)^{-1}\X^\top\big)\Big)^d
=\Big(\frac{1}{ds}\sum_{i=1}^s\frac{d}{l_i}\x_i^\top(\X^\top\X)^{-1}\x_i\Big)^d=1,
\end{align*}
where $(*)$ follows from the geometric-arithmetic mean
inequality for the eigenvalues of the underlying matrix. This shows
that sequence $\pi$ is drawn according to $q$-rescaled volume sampling
of size $s$. Now, Lemma \ref{l:composition} implies correctness of the algorithm.
Next,
we use  Proposition \ref{p:cauchy-binet} to compute the expected value
of acceptance probability from line 5 under the i.i.d. sampling of line 4:
\begin{align*}
\sum_{\pi\in[n]^s}\bigg(\prod_{i=1}^sq_{\pi_i}\bigg)\frac{\det(\frac{1}{s}\X^\top\Q_\pi\X)}{\det(\X^\top\X)}
&=\frac{s(s\!-\!1)\dots(s\!-\!d\!+\!1)}{s^d}\geq
  \Big(1-\frac{d}{s}\Big)^d\geq 1 - \frac{d^2}{s} \geq \frac{3}{4},
\end{align*}
where we also used Bernoulli's inequality and the fact that $s\geq
4d^2$ (see line 2). Since the expected value of the acceptance probability is at
least $\frac{3}{4}$, an easy application of Markov's inequality shows
that at each trial there is at least a 50\% chance of it being above
$\frac{1}{2}$. So, the probability of at least $r$ trials occurring is
less than $(1-\frac{1}{4})^r$. Note that the computational cost of one
trial is no more than the cost of SVD decomposition of matrix $\X^\top\Q_\pi\X$ (for
computing the determinant), which is $O(sd^2)$. The cost of
reverse iterative sampling (line 7) is also $O(sd^2)$ with high probability (as shown by
\cite{regularized-volume-sampling}). Thus, the overall runtime is
$O((d^2+k)d^2r)$, where $r\leq\ln(\frac{1}{\delta})/\ln(\frac{4}{3})$ w.p. at least $1-\delta$.
\end{proof}

\subsection{Tail bounds for leveraged volume sampling}
\label{s:tail}

An analysis of leverage score sampling, essentially following
\citep[Section 2]{woodruff2014sketching}
\citep[which in turn draws from][]{sarlos-sketching},
highlights two basic sufficient conditions on the 
(random) subsampling matrix $\Q_\pi$ that lead to
multiplicative tail bounds for $L(\w_\pi^*)$. 

It is convenient to shift to an orthogonalization of the linear regression task $(\X,\y)$
by replacing matrix $\X$ with a matrix
$\U=\X(\X^\top\X)^{-\sfrac12}\in\R^{n\times d}$. It is easy to
check that the columns of $\U$ have unit length and are
orthogonal, i.e., $\U^\top\U=\I$.
Now, $\v^*=\U^\top\y$ is the least-squares solution for the
orthogonal problem $(\U,\y)$ 
and prediction vector $\U\v^*=\U\U^\top\y$ for $(\U,\y)$ is the same as
the prediction vector $\X\w^*=\X(\X^\top\X)^{-1}\X^\top\y$
for the original problem $(\X,\y)$. 
The same property holds for the subsampled estimators, i.e.,
$\U\v_\pi^*=\X\w_\pi^*$, where $\v_\pi^*=
(\Q_\pi^{\sfrac{1}{2}}\U)^+\Q_\pi^{\sfrac{1}{2}} \,\y$. 
Volume sampling probabilities are also preserved under this transformation,
so w.l.o.g. we can work with the orthogonal problem.
Now $L(\v_\pi^*)$ can be rewritten as
\begin{align}
L(\v_\pi^*) =\|\U\v_\pi^*-\y\|^2\overset{(1)}{=} 
\|\U\v^*-\y\|^2
+ 
\|\U(\v_\pi^*-\v^*)\|^2 
\overset{(2)}{=}
L(\v^*) + \|\v_\pi^*-\v^*\|^2 ,
\label{e:pyth}
\end{align}
where $(1)$ follows via Pythagorean theorem from the fact that 
$\U(\v_\pi^*-\v^*)$ lies in the column span of $\U$ and
the residual vector $\r=\U\v^*-\y$ is orthogonal to all columns of $\U$, 
and $(2)$ follows from $\U^\top\U=\I$. 
By the definition of $\v_\pi^*$, we can write
$\|\v_\pi^*-\v^*\|^2$ as follows:
\begin{align}
\|\v_\pi^*-\v^*\| =
\|(\U^\top\Q_\pi\U)^{-1}\;\;\U^\top\Q_\pi(\y-\U\v^*)\|
\leq \|\underset{d\times d}{(\U^\top\Q_\pi\U)^{-1}}\|\,
      \|\underset{d\times 1}{\vphantom{(\U^\top\Q_\pi\U)^{-1}}\U^\top\Q_\pi\,\r}\|,
\label{e:prod}
\end{align}
where $\|\A\|$ denotes the matrix 2-norm 
(i.e., the largest singular value) of $\A$; when $\A$ is a
vector, then $\|\A\|$ is its Euclidean norm.
This breaks our task down to showing two key properties:
\begin{enumerate}
\item \textit{Matrix multiplication:}\quad Upper bounding the Euclidean norm $\|\U^\top\Q_\pi\,\r\|$,
\item \textit{Subspace embedding:}\quad Upper bounding the matrix 2-norm $\|(\U^\top\Q_\pi\U)^{-1}\|$.
\end{enumerate}

We start with a theorem that implies
strong guarantees for approximate matrix multiplication with leveraged
volume sampling. Unlike with i.i.d. sampling, this result requires
controlling the pairwise dependence
between indices selected under rescaled volume sampling. Its proof is
an interesting application of a classical Hadamard matrix
product inequality from
\cite{hadamard-product-inequality} (Proof in Appendix~\ref{sec:multiplication-proof}).
\begin{theorem}\label{t:multiplication}
Let $\U\in\R^{n\times d}$ be a matrix s.t. $\U^\top\U=\I$. 
If sequence $\pi\in[n]^k$ is selected using leveraged volume
sampling of size $k\geq \frac{2d}{\epsilon}$, then
for any $\r\in\R^n$,
\begin{align*}
\E\bigg[\Big\|\frac{1}{k}\U^\top\Q_\pi\r - \U^\top\r\Big\|^2\bigg] \leq
  \epsilon\, \|\r\|^2.
\end{align*}
\end{theorem}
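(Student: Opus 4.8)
The plan is to work throughout with the orthogonalized problem set up just above the statement, so that $\U^\top\U=\I$ and the leverage scores are the squared row norms $l_i=\|\u_i\|^2$; leveraged volume sampling then uses $q_i=l_i/d$, which gives the crucial identity $\|\u_i\|^2/q_i=d$ for every $i$. Writing $\a\defeq\U^\top\r$ and $\hat\a\defeq\frac1k\U^\top\Q_\pi\r$, my first move is to invoke Proposition~\ref{p:marginals}: for the leverage-score rescaling it specializes to $\E[\Q_\pi]=(k-d)\I+d\I=k\I$, so $\E[\hat\a]=\a$, and the claim reduces to the second-moment bound $\E\|\hat\a\|^2\le\|\a\|^2+\epsilon\|\r\|^2$ via the bias-variance split $\E\|\hat\a-\a\|^2=\E\|\hat\a\|^2-\|\a\|^2$.

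Next I would expand $\hat\a=\frac1k\sum_{j=1}^k\frac{r_{\pi_j}}{q_{\pi_j}}\u_{\pi_j}$ and decompose $\E\|\hat\a\|^2$ according to whether the two summation positions coincide ($j=j'$) or differ ($j\ne j'$). The coincident part only sees the single-index marginal of $\pi_1$, which equals $q$ (a direct consequence of $\E[\Q_\pi]=k\I$), so it evaluates exactly to $\frac1k\sum_i\frac{r_i^2\|\u_i\|^2}{q_i}=\frac dk\|\r\|^2$, and this is at most $\frac\epsilon2\|\r\|^2$ since $k\ge2d/\epsilon$. By exchangeability of the positions the distinct part collapses to $\frac{k-1}{k}$ times a single pairwise expectation $T\defeq\E\big[\frac{r_{\pi_1}r_{\pi_2}}{q_{\pi_1}q_{\pi_2}}\u_{\pi_1}^\top\u_{\pi_2}\big]$, whose i.i.d.\ value would be exactly $\|\a\|^2$. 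Thus everything reduces to showing that the excess $T-\|\a\|^2$ caused by the dependence between $\pi_1$ and $\pi_2$ is at most $\frac\epsilon2\|\r\|^2$.

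The heart of the argument, and the step I expect to be the main obstacle, is controlling this pairwise term, since under rescaled volume sampling the indices are genuinely dependent. I would obtain a closed form for the joint two-index marginal, equivalently for $\E[\Q_\pi\P\Q_\pi]$ with $\P=\U\U^\top$, whose $(a,b)$ entry is $\frac{\P_{ab}}{q_aq_b}\E[c_ac_b]$ in terms of the occurrence counts $c_a=|\{j:\pi_j=a\}|$. The formula for $\E[c_ac_b]$ I would derive by the same two-stage determinantal expectation technique used for Theorem~\ref{t:unbiasedness}: apply the size-$d$ pseudo-inverse expectation on the inner subproblem and then reduce the outer sum over $[n]^k$ to a sum over $[n]^d$ using the Cauchy--Binet extension of Proposition~\ref{p:cauchy-binet}. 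After subtracting the i.i.d.\ baseline $\|\a\|^2$, the residual excess is a quadratic form in $\r$ against the Hadamard square $\P\circ\P$ of the projection $\P$, up to a factor of order $d/k$. This is precisely where the classical Hadamard product inequality of \cite{hadamard-product-inequality} is invoked: since $\P\succeq0$ and $\|\P\|=1$, one gets $\|\P\circ\P\|\le(\max_i\P_{ii})\,\|\P\|=\max_i l_i\le1$, hence $\r^\top(\P\circ\P)\r\le\|\r\|^2$.

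Finally I would assemble the three contributions: the coincident term is at most $\frac\epsilon2\|\r\|^2$, the leftover $-\frac1k\|\a\|^2$ term is simply discarded, and the pairwise excess is bounded by the Hadamard step times the $O(d/k)$ factor, which is at most $\frac\epsilon2\|\r\|^2$ because $k\ge2d/\epsilon$. Summing gives $\E\|\hat\a\|^2-\|\a\|^2\le\epsilon\|\r\|^2$, as required. The delicate point throughout will be the pairwise second-moment computation for a sampling distribution that is both determinantal and with-replacement, where one must track repeated indices carefully and match combinatorial factors so that the leading $\|\a\|^2$ cancels exactly, leaving only the Hadamard-controlled remainder.
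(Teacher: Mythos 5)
Your proposal is correct and follows essentially the same route as the paper: it reduces everything to the one- and two-index marginals of leveraged volume sampling (Proposition~\ref{p:marginals} and the covariance formula of Lemma~\ref{l:pairwise-formula}, which you propose to derive by exactly the paper's determinantal technique) and controls the residual cross term with the same Hadamard product inequality, your coincident/distinct-position decomposition being just a scalar rewriting of the paper's bound via $\lambda_{\max}$ of the covariance-weighted Gram matrix $\M$. The only slip is cosmetic: the matrix appearing in the pairwise excess is the normalized Hadamard cube $\big((\u_i^\top\u_j)^3/(\|\u_i\|^2\|\u_j\|^2)\big)_{ij}$ rather than $\P\circ\P$, but the same Schur-type inequality bounds its norm by $1$, so the argument goes through unchanged.
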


Next, we turn to the subspace embedding property. The
following result is remarkable because
standard matrix tail bounds used to prove this property for leverage score
sampling are not applicable to volume sampling. In fact, obtaining
matrix Chernoff bounds for negatively associated joint distributions
like volume sampling is an active area of research, as discussed in
\cite{harvey2014pipage}. We address this challenge by
defining a coupling procedure for volume sampling and
uniform sampling without replacement, which leads to a curious
reduction argument described in Appendix \ref{sec:spectral-proof}.
\begin{theorem}
\label{t:spectral}
Let $\U\in\R^{n\times d}$ be a matrix s.t. $\U^\top\U=\I$. There is an
absolute constant $C$, s.t. if sequence $\pi\in[n]^k$ is selected
using leveraged volume sampling of size $k\geq
C\,d\ln(\frac{d}{\delta})$, then
\begin{align*}
\Pr\bigg(\lambda_{\min}\Big(\frac{1}{k}\U^\top\Q_\pi\U\Big)\leq
  \frac{1}{8}\bigg) \leq \delta.
\end{align*}
\end{theorem}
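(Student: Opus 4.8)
The plan is to reduce the claim to a minimum-eigenvalue bound for an (almost) independent sample, since the only genuine obstacle is the dependence introduced by volume sampling. Writing $\phi_i = \u_i/\sqrt{l_i}$ for the unit-norm normalized rows and using $q_i = l_i/d$, the rescaling turns every selected row into $\frac{1}{q_{\pi_j}}\u_{\pi_j}\u_{\pi_j}^\top = d\,\phi_{\pi_j}\phi_{\pi_j}^\top$, so that
\[
\frac{1}{k}\U^\top\Q_\pi\U \;=\; \frac{d}{k}\sum_{j=1}^k \phi_{\pi_j}\phi_{\pi_j}^\top .
\]
By Proposition~\ref{p:marginals} this matrix has expectation $\I$, so the target is to show its smallest eigenvalue stays above $\tfrac18$ once $k\ge Cd\ln(d/\delta)$. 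Each summand is a PSD rank-one matrix of operator norm $d/k$, which is exactly the regime in which a matrix Chernoff lower-tail bound would deliver the desired $k=O(d\ln(d/\delta))$ — if only the terms were independent.

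The difficulty, and the reason the standard tools do not apply directly, is that under leveraged volume sampling the indices $\pi_1,\dots,\pi_k$ are jointly (negatively) dependent, and matrix Chernoff bounds for such joint distributions are not available (cf.\ \cite{harvey2014pipage}). The plan is to bypass this via a coupling. Invoking the composition property (Lemma~\ref{l:composition}), I would generate the size-$k$ leveraged volume sample in two stages: first draw a much larger sample, producing $s\gg k$ rescaled rows $\big(\Q^{\sfrac12}_{[1..n]}\U\big)_\rho$, each of which (after leverage-score rescaling) has the \emph{same} squared norm $d$; then thin this pool down to $k$ rows by \emph{standard} volume sampling. The equal-norm property of the pool is what makes the thinning step comparable to uniform subsampling.

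The heart of the argument is to couple this standard size-$k$ volume sampling step with uniform sampling of $k$ of the $s$ pooled rows \emph{without replacement}. Because volume sampling only ever prefers more spread-out (larger-volume) subsets, the coupling should be arranged so that the covariance of the volume-sampled subset dominates, in the relevant sense, that of the uniformly sampled subset; since discarding PSD rank-one terms only lowers the minimum eigenvalue, it then suffices to control $\lambda_{\min}$ for the uniform-without-replacement sample. But a uniform-without-replacement subset of the pooled rows is exchangeable with an i.i.d.\ sub-collection, for which the summands \emph{are} independent (or are dominated by an independent model through the standard convex-order comparison between sampling with and without replacement), so a matrix Chernoff lower-tail bound applies and yields $\lambda_{\min} > \tfrac18$ with probability $1-\delta$ as soon as $k=\Omega(d\ln(d/\delta))$.

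I expect the main obstacle to be making this coupling rigorous: one must construct an explicit joint law of the volume-sampled and uniform-without-replacement index sets under which the eigenvalue (or covariance) domination provably holds, and this is delicate because volume sampling depends on the \emph{directions} of the rows, not merely their norms, and because the with-replacement repeats in $\pi$ must be reconciled with a without-replacement comparison. Once this reduction is in place, the remaining steps — the equal-norm rescaling, the PSD monotonicity, and the final matrix Chernoff estimate for the independent model — are routine.
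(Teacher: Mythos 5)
Your overall strategy is the paper's: use Lemma~\ref{l:composition} to realize the sample as a large first-stage pool followed by standard volume sampling down to size $k$, couple that second stage with uniform sampling without replacement, and finish with the without-replacement-to-i.i.d.\ reduction of \cite{hoeffding-with-replacement,uniform-matrix-sampling} plus a matrix Chernoff bound. But the coupling you defer is precisely the novel content of the proof, and the form you propose for it (a size-matched domination "because volume sampling prefers spread-out subsets") is not how it can be made to work. The paper's resolution sidesteps directionality entirely: reverse iterative volume sampling is implemented by rejection sampling with a \emph{uniform} proposal (Algorithm~\ref{alg:volume}), removing a proposed index $i$ from $S$ with probability $1-\u_i^\top(\U^\top\Q_{\pi_S}\U)^{-1}\u_i$; the coupled process (Algorithm~\ref{alg:coupled}) deletes every \emph{proposed} index from a second set $T$, so $T\subseteq S$ holds by construction, $T$ is uniform without replacement, and $\U^\top\Q_{\pi_S}\U\succeq\U^\top\Q_{\pi_T}\U$ follows from PSD monotonicity with no comparison of laws needed. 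The price is that $|T|$ falls short of $k$ by the number of rejections $R$, a sum of independent $\operatorname{Geom}(\frac{t-d}{t})$ variables with $\E[R]\le d\ln\frac{m-d}{k-d}$; one must show $R\le k/2$ w.h.p.\ (Lemma~\ref{l:geometric-tail}) so that $|T|\ge k/2$ and renormalizing from $|T|$ to $k$ costs only a factor of $2$ — this is where the constant degrades from $\tfrac14$ to $\tfrac18$.

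Separately, your matrix Chernoff step for the uniform subsample only concentrates $\frac{1}{|T|}\U^\top\Q_{\pi_T}\U$ around the \emph{pool's} normalized covariance $\frac{1}{m}\U^\top\Q_\pi\U$, not around $\I$ — and the pool is itself volume sampled, so Chernoff is again unavailable for it. The paper certifies $\lambda_{\min}(\frac1m\U^\top\Q_\pi\U)\ge\frac12$ separately (Lemma~\ref{l:overestimate}), by applying Theorem~\ref{t:multiplication} columnwise and Markov's inequality, which forces the pool size $m=\Theta(d^2/\delta)$: large enough for this crude embedding, yet small enough that $\ln\frac{m-d}{k-d}=O(\ln(d/\delta))$ keeps $\E[R]$ of order $d\ln(d/\delta)$. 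Your sketch says only "draw a much larger sample," but both the lower and the upper constraint on $m$ are needed, and without the first-stage embedding the final bound would read $\lambda_{\min}\ge\frac14\lambda_{\min}(\frac1m\U^\top\Q_\pi\U)$ rather than a fixed constant.
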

Theorems \ref{t:multiplication} and \ref{t:spectral} imply
that the unbiased estimator $\w_\pi^*$ produced from leveraged volume
sampling achieves multiplicative tail bounds with sample size
$k=O(d\log d + d/\epsilon)$.

\begin{corollary}
Let $\X\in\R^{n\times d}$ be a full rank matrix. There is an absolute constant
$C$, s.t.~if sequence $\pi\in[n]^k$ is selected using leveraged volume  
sampling of size $k\geq C\,\big(d\ln(\frac{d}{\delta}) +
\frac{d}{\epsilon\delta}\big)$, then for estimator
\begin{align*}
\w_\pi^* = \argmin_\w \|\Q_\pi^{\sfrac{1}{2}}(\X\w - \y)\|^2,
\end{align*}
we have $L(\w_\pi^*)\leq (1+\epsilon)\,L(\w^*)$ with probability at least $1-\delta$.
\end{corollary}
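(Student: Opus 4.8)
The plan is to reduce the claim to the orthogonal problem and then feed the two preceding theorems into the loss decomposition already set up in \eqref{e:pyth} and \eqref{e:prod}. First I would replace $\X$ by $\U=\X(\X^\top\X)^{-\sfrac12}$, so that $\U^\top\U=\I$ and, as noted in Section~\ref{s:tail}, $L(\w_\pi^*)=L(\v_\pi^*)$ and $L(\w^*)=L(\v^*)$. By \eqref{e:pyth} it then suffices to show that $\|\v_\pi^*-\v^*\|^2\le\epsilon\,L(\v^*)=\epsilon\,\|\r\|^2$ with probability at least $1-\delta$, where $\r=\U\v^*-\y$ is the residual. The crucial observation is that $\r$ is orthogonal to the column span of $\U$, so $\U^\top\r=\zero$; this makes the ``mean'' term in Theorem~\ref{t:multiplication} vanish, turning its conclusion into a direct second-moment bound $\E\big[\|\tfrac1k\U^\top\Q_\pi\r\|^2\big]\le\epsilon'\|\r\|^2$, valid once $k\ge 2d/\epsilon'$.

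Next I would bound the two factors in \eqref{e:prod} on a common good event. For the subspace-embedding factor, Theorem~\ref{t:spectral} applied with failure probability $\delta/2$ guarantees, once $k\ge C d\ln(\tfrac{2d}{\delta})$, that $\lambda_{\min}(\tfrac1k\U^\top\Q_\pi\U)>\tfrac18$ with probability at least $1-\delta/2$, hence $\|(\U^\top\Q_\pi\U)^{-1}\|\le\tfrac{8}{k}$ on that event. For the matrix-multiplication factor, I would apply Markov's inequality to the expectation bound above: with probability at least $1-\delta/2$ we have $\|\tfrac1k\U^\top\Q_\pi\r\|^2\le\tfrac{2\epsilon'}{\delta}\|\r\|^2$. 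Substituting both into \eqref{e:prod}, on the intersection of the two events (probability at least $1-\delta$ by a union bound) I obtain
\[
\|\v_\pi^*-\v^*\|^2\le \Big(\tfrac{8}{k}\Big)^2\,\|\U^\top\Q_\pi\r\|^2 = 64\,\Big\|\tfrac1k\U^\top\Q_\pi\r\Big\|^2 \le 64\cdot\tfrac{2\epsilon'}{\delta}\,\|\r\|^2 .
\]

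Finally I would choose $\epsilon'=\tfrac{\epsilon\delta}{128}$, so that the right-hand side is at most $\epsilon\|\r\|^2=\epsilon\,L(\w^*)$, which is exactly what \eqref{e:pyth} requires. The condition $k\ge 2d/\epsilon'=256\,d/(\epsilon\delta)$ coming from Theorem~\ref{t:multiplication}, together with $k\ge Cd\ln(\tfrac{2d}{\delta})$ from Theorem~\ref{t:spectral}, is subsumed by the stated hypothesis $k\ge C'\big(d\ln(\tfrac{d}{\delta})+\tfrac{d}{\epsilon\delta}\big)$ for a suitable absolute constant $C'$, and combining through \eqref{e:pyth} gives $L(\w_\pi^*)\le(1+\epsilon)L(\w^*)$ with probability at least $1-\delta$.

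The main obstacle, given the two theorems, is the conversion of the in-expectation guarantee of Theorem~\ref{t:multiplication} into a high-probability statement: since I only have a second-moment bound, Markov's inequality is the natural tool, but it costs a factor of $1/\delta$, which is precisely why the sample-size requirement carries a $\tfrac{d}{\epsilon\delta}$ term rather than the cleaner $\tfrac{d}{\epsilon}$. Everything else is bookkeeping --- tracking the constant $8$ from the spectral bound through \eqref{e:prod} (where it enters squared) and rescaling both failure probabilities to $\delta/2$ so that the union bound returns the advertised $1-\delta$.
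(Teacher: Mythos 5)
Your proposal is correct and follows essentially the same route as the paper's proof: orthogonalize to $\U$, note $\U^\top\r=\zero$ so Theorem~\ref{t:multiplication} plus Markov's inequality controls $\|\U^\top\Q_\pi\r\|^2$ (at the cost of the $1/\delta$ in the sample size), invoke Theorem~\ref{t:spectral} for $\|(\U^\top\Q_\pi\U)^{-1}\|\le 8/k$, and combine through \eqref{e:pyth} and \eqref{e:prod} with a union bound. The paper merely leaves the constant-tracking and the $\delta/2$ splitting implicit, which you have carried out explicitly and correctly.
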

\textbf{Proof} \  Let 
$\U=\X(\X^\top\X)^{-\sfrac12}$. 
Combining Theorem \ref{t:multiplication} with
Markov's inequality, we have that for large enough $C$,
$\|\U^\top\Q_\pi\,\r\|^2\leq \epsilon\,\frac{k^2}{8^2}\|\r\|^2$ w.h.p., where
$\r=\y-\U\v^*$. Finally following (\ref{e:pyth}) and
(\ref{e:prod}) above, 
we have that w.h.p.
\begin{align*}
L(\w_\pi^*) &\leq L(\w^*) +
  \|(\U^\top\Q_\pi\U)^{-1}\|^2\,\|\U^\top\Q_\pi\,\r\|^2
\leq L(\w^*) + \frac{8^2}{k^2}\,\epsilon\frac{k^2}{8^2}\,\|\r\|^2
=(1+\epsilon)\,L(\w^*).\hspace{0.2cm} \BlackBox
\end{align*}

\section{Conclusion} \label{s:open}
We developed a new variant of volume sampling which produces the first 
known unbiased subsampled least-squares estimator with strong multiplicative
loss bounds. In the process, we proved a novel extension of the
Cauchy-Binet formula, as well as other fundamental combinatorial equalities.
Moreover, we proposed an efficient algorithm called determinantal
rejection sampling,
which is to our knowledge the first joint determinantal sampling
procedure that (after an initial $O(nd^2)$ preprocessing
step for computing leverage scores) produces its $k$ samples in time 
$\widetilde{O}(d^2\!+\!k)d^2)$, independent of the data size $n$. 
When $n$ is very large, the preprocessing time can be reduced to
$\widetilde{O}(nd + d^5)$ by rescaling 
with sufficiently accurate approximations of the leverage
scores. Surprisingly the estimator
stays unbiased and the loss bound still holds with only
slightly revised constants.
For the sake of clarity we presented the algorithm based
on rescaling with exact leverage scores in the main body of the paper.
However we outline the changes needed when using approximate
leverage scores in Appendix \ref{sec:fast-alg}.

In this paper we focused on tail bounds. However we conjecture that
expected bounds of the form $\E [L(\w_\pi^*)] \le (1+\epsilon)
L(\w^*)$ also hold for a variant of volume sampling of size $O(\frac{d}{\epsilon})$.


\bibliographystyle{plain}
\bibliography{pap}

\begin{thebibliography}{10}

\bibitem{ailon2009fast}
Nir Ailon and Bernard Chazelle.
\newblock The fast johnson--lindenstrauss transform and approximate nearest
  neighbors.
\newblock {\em SIAM Journal on computing}, 39(1):302--322, 2009.

\bibitem{tractable-experimental-design}
Zeyuan Allen-Zhu, Yuanzhi Li, Aarti Singh, and Yining Wang.
\newblock Near-optimal design of experiments via regret minimization.
\newblock In Doina Precup and Yee~Whye Teh, editors, {\em Proceedings of the
  34th International Conference on Machine Learning}, volume~70 of {\em
  Proceedings of Machine Learning Research}, pages 126--135, International
  Convention Centre, Sydney, Australia, 2017. PMLR.

\bibitem{hadamard-product-inequality}
T~Ando, Roger A.~Horn, and Charles R.~Johnson.
\newblock The singular values of a hadamard product: A basic inequality.
\newblock 21:345--365, 12 1987.

\bibitem{avron-boutsidis13}
Haim Avron and Christos Boutsidis.
\newblock Faster subset selection for matrices and applications.
\newblock {\em SIAM Journal on Matrix Analysis and Applications},
  34(4):1464--1499, 2013.

\bibitem{batson2012twice}
Joshua Batson, Daniel~A Spielman, and Nikhil Srivastava.
\newblock Twice-ramanujan sparsifiers.
\newblock {\em SIAM Journal on Computing}, 41(6):1704--1721, 2012.

\bibitem{celis2016fair}
L~Elisa Celis, Amit Deshpande, Tarun Kathuria, and Nisheeth~K Vishnoi.
\newblock How to be fair and diverse?
\newblock {\em arXiv preprint arXiv:1610.07183}, 2016.

\bibitem{celis2018fair}
L~Elisa Celis, Vijay Keswani, Damian Straszak, Amit Deshpande, Tarun Kathuria,
  and Nisheeth~K Vishnoi.
\newblock Fair and diverse dpp-based data summarization.
\newblock {\em arXiv preprint arXiv:1802.04023}, 2018.

\bibitem{onlineregr}
N.~Cesa-Bianchi, P.~M. Long, and M.~K. Warmuth.
\newblock Worst-case quadratic loss bounds for on-line prediction of linear
  functions by gradient descent.
\newblock {\em IEEE Transactions on Neural Networks}, 7(3):604--619, 1996.
\newblock Earlier version in 6th COLT, 1993.

\bibitem{libsvm}
Chih-Chung Chang and Chih-Jen Lin.
\newblock {LIBSVM}: A library for support vector machines.
\newblock {\em ACM Transactions on Intelligent Systems and Technology},
  2:27:1--27:27, 2011.
\newblock Software available at \url{http://www.csie.ntu.edu.tw/~cjlin/libsvm}.

\bibitem{chen2017condition}
Xue Chen and Eric Price.
\newblock Condition number-free query and active learning of linear families.
\newblock {\em CoRR}, abs/1711.10051, 2017.

\bibitem{unbiased-estimates}
Micha{\l} Derezi\'{n}ski and Manfred~K Warmuth.
\newblock Unbiased estimates for linear regression via volume sampling.
\newblock In I.~Guyon, U.~V. Luxburg, S.~Bengio, H.~Wallach, R.~Fergus,
  S.~Vishwanathan, and R.~Garnett, editors, {\em Advances in Neural Information
  Processing Systems 30}, pages 3087--3096. Curran Associates, Inc., 2017.

\bibitem{unbiased-estimates-journal}
Micha{\l} Derezi\'{n}ski and Manfred~K. Warmuth.
\newblock Unbiased estimates for linear regression via volume sampling.
\newblock {\em CoRR}, abs/1705.06908, 2017.

\bibitem{regularized-volume-sampling}
Micha{\l} Derezi\'{n}ski and Manfred~K. Warmuth.
\newblock Subsampling for ridge regression via regularized volume sampling.
\newblock In {\em Proceedings of the 21st International Conference on
  Artificial Intelligence and Statistics}, 2018.

\bibitem{efficient-volume-sampling}
Amit Deshpande and Luis Rademacher.
\newblock Efficient volume sampling for row/column subset selection.
\newblock In {\em Proceedings of the 2010 IEEE 51st Annual Symposium on
  Foundations of Computer Science}, FOCS '10, pages 329--338, Washington, DC,
  USA, 2010. IEEE Computer Society.

\bibitem{pca-volume-sampling}
Amit Deshpande, Luis Rademacher, Santosh Vempala, and Grant Wang.
\newblock Matrix approximation and projective clustering via volume sampling.
\newblock In {\em Proceedings of the Seventeenth Annual ACM-SIAM Symposium on
  Discrete Algorithm}, SODA '06, pages 1117--1126, Philadelphia, PA, USA, 2006.
  Society for Industrial and Applied Mathematics.

\bibitem{fast-leverage-scores}
Petros Drineas, Malik Magdon-Ismail, Michael~W. Mahoney, and David~P. Woodruff.
\newblock Fast approximation of matrix coherence and statistical leverage.
\newblock {\em J. Mach. Learn. Res.}, 13(1):3475--3506, December 2012.

\bibitem{drineas2006sampling}
Petros Drineas, Michael~W Mahoney, and S~Muthukrishnan.
\newblock Sampling algorithms for $\ell_2$ regression and applications.
\newblock In {\em Proceedings of the seventeenth annual ACM-SIAM symposium on
  Discrete algorithm}, pages 1127--1136. Society for Industrial and Applied
  Mathematics, 2006.

\bibitem{optimal-design-book}
Valerii~V. Fedorov, William~J. Studden, and E.~M. Klimko, editors.
\newblock {\em Theory of optimal experiments}.
\newblock Probability and mathematical statistics. Academic Press, New York,
  1972.

\bibitem{dpp-shopping}
Mike Gartrell, Ulrich Paquet, and Noam Koenigstein.
\newblock Bayesian low-rank determinantal point processes.
\newblock In {\em Proceedings of the 10th ACM Conference on Recommender
  Systems}, RecSys '16, pages 349--356, New York, NY, USA, 2016. ACM.

\bibitem{uniform-matrix-sampling}
David Gross and Vincent Nesme.
\newblock Note on sampling without replacing from a finite collection of
  matrices.
\newblock {\em arXiv preprint arXiv:1001.2738}, 2010.

\bibitem{harvey2014pipage}
Nicholas~JA Harvey and Neil Olver.
\newblock Pipage rounding, pessimistic estimators and matrix concentration.
\newblock In {\em Proceedings of the twenty-fifth annual ACM-SIAM symposium on
  Discrete algorithms}, pages 926--945. SIAM, 2014.

\bibitem{hoeffding-with-replacement}
Wassily Hoeffding.
\newblock Probability inequalities for sums of bounded random variables.
\newblock {\em Journal of the American statistical association},
  58(301):13--30, 1963.

\bibitem{k-dpp}
Alex Kulesza and Ben Taskar.
\newblock {k-DPPs: Fixed-Size Determinantal Point Processes}.
\newblock In {\em {Proceedings of the 28th International Conference on Machine
  Learning}}, pages 1193--1200. {Omnipress}, 2011.

\bibitem{dpp}
Alex Kulesza and Ben Taskar.
\newblock {\em Determinantal Point Processes for Machine Learning}.
\newblock Now Publishers Inc., Hanover, MA, USA, 2012.

\bibitem{lee2015constructing}
Yin~Tat Lee and He~Sun.
\newblock Constructing linear-sized spectral sparsification in almost-linear
  time.
\newblock In {\em Foundations of Computer Science (FOCS), 2015 IEEE 56th Annual
  Symposium on}, pages 250--269. IEEE, 2015.

\bibitem{dual-volume-sampling}
Chengtao Li, Stefanie Jegelka, and Suvrit Sra.
\newblock Polynomial time algorithms for dual volume sampling.
\newblock In I.~Guyon, U.~V. Luxburg, S.~Bengio, H.~Wallach, R.~Fergus,
  S.~Vishwanathan, and R.~Garnett, editors, {\em Advances in Neural Information
  Processing Systems 30}, pages 5045--5054. Curran Associates, Inc., 2017.

\bibitem{randomized-matrix-algorithms}
Michael~W. Mahoney.
\newblock Randomized algorithms for matrices and data.
\newblock {\em Found. Trends Mach. Learn.}, 3(2):123--224, February 2011.

\bibitem{symmetric-polynomials}
Zelda~E. Mariet and Suvrit Sra.
\newblock Elementary symmetric polynomials for optimal experimental design.
\newblock In I.~Guyon, U.~V. Luxburg, S.~Bengio, H.~Wallach, R.~Fergus,
  S.~Vishwanathan, and R.~Garnett, editors, {\em Advances in Neural Information
  Processing Systems 30}, pages 2136--2145. Curran Associates, Inc., 2017.

\bibitem{proportional-volume-sampling}
Aleksandar Nikolov, Mohit Singh, and Uthaipon~Tao Tantipongpipat.
\newblock Proportional volume sampling and approximation algorithms for
  a-optimal design.
\newblock {\em CoRR}, abs/1802.08318, 2018.

\bibitem{pemantle2014concentration}
Robin Pemantle and Yuval Peres.
\newblock Concentration of lipschitz functionals of determinantal and other
  strong rayleigh measures.
\newblock {\em Combinatorics, Probability and Computing}, 23(1):140--160, 2014.

\bibitem{sarlos-sketching}
Tamas Sarlos.
\newblock Improved approximation algorithms for large matrices via random
  projections.
\newblock In {\em Proceedings of the 47th Annual IEEE Symposium on Foundations
  of Computer Science}, FOCS '06, pages 143--152, Washington, DC, USA, 2006.
  IEEE Computer Society.

\bibitem{matrix-tail-bounds}
Joel~A. Tropp.
\newblock User-friendly tail bounds for sums of random matrices.
\newblock {\em Foundations of Computational Mathematics}, 12(4):389--434, Aug
  2012.

\bibitem{woodruff2014sketching}
David~P Woodruff.
\newblock Sketching as a tool for numerical linear algebra.
\newblock {\em Foundations and Trends{\textregistered} in Theoretical Computer
  Science}, 10(1--2):1--157, 2014.

\end{thebibliography}

\clearpage
\newpage

\appendix

\section{Proof of part \eqref{eq:L1} from Theorem~\ref{t:lower}}
\label{a:lower}
First, let us calculate $L(\w^*)$. Observe that
\vspace{-2mm}
\begin{align*}
(\X^\top\X)^{-1} &= \overbrace{\Big(1 +
  \frac{n-d}{d}\gamma^2\Big)^{-1}}^{c}\ \I,\\
\text{and}\quad \w^* &= c\,\X^\top\y = c\,\one_d.
\end{align*}
  The loss $L(\w)$ of any $\w \in \R^d$ can be decomposed as $L(\w) = \sum_{i=1}^d L_i(\w)$, where $L_i(\w)$ is the
total loss incurred on all input vectors $\e_i$ or $\gamma\e_i$:
\vspace{-4mm}
\begin{align*}
L_i(\w^*) = (1-c)^2 +
  \overbrace{\frac{n-d}{d}\gamma^2}^{\frac{1}{c}-1}\, c^2 
= 1-c,
\end{align*}
Note that $i$-th leverage score of $\X$ is equal $l_i=\x_i^\top(\X^\top\X)^{-1}\x_i=c$, so we obtain that
\begin{align}
L(\w^*)=d\,(1-c) = \sum_{i=1}^d(1-l_i).\label{eq:lb-loss}
\end{align}
Next, we compute $L(\w_S^*)$. Suppose that
$S\subseteq\{1..n\}$ is produced by size $k$ standard volume
sampling. Note that if for some $1\le i\le d$ we have $i\not\in S$, 
then $(\w_S^*)_i=0$ and therefore $L_i(\w_S^*)=1$. 
Moreover, denoting $b_i\defeq \one_{[i\in S]}$,
\begin{align*}
(\X_S^\top\X_S)^{-1} &\!\succeq\! (\X^\top\X)^{-1}\!=\!c\,\I,
\;\;\text{and}\;\; \X_S^\top\y_S \!=\! (b_1,\ldots,b_d)^\top\!,
\end{align*}
so if $i\in S$, then $(\w_S^*)_i\geq c$ and
\begin{align*}
L_i(\w_S^*) \geq \frac{n-d}{d} \,\gamma^2\,c^2 
= \Big(\frac{1}{c}-1\Big) c^2 = c\,L_i(\w^*).
\end{align*}
Putting the cases of $i\in S$ and $i\not\in S$ together, we get
\begin{align*}
L_i(\w_S^*) &\geq c\,L_i(\w^*) + (1-c\,L_i(\w^*))\,(1-b_i)\\
&\geq c\,L_i(\w^*) + c^2(1-b_i).
\end{align*}
Applying the marginal probability formula for volume sampling 
(see \eqref{eq:marginal}), we note that
\begin{align}
\nonumber
\E[1-b_i] &= 1-\Pr(i\in S) = \frac{n-k}{n-d}\,(1-c) = \frac{n-k}{n-d}\,L_i(\w^*).
\label{e:1mpi}
\end{align}
  Taking expectation over $L_i(\w_S^*)$ and summing the components over $i \in [d]$, we get 
\begin{align*}
\E[L(\w_S^*)] \geq L(\w^*)\Big(c+c^2\frac{n-k}{n-d}\Big).
\end{align*}
Note that as $\gamma\rightarrow 0$, we have $c\rightarrow 1$, thus showing \eqref{eq:L1}.


\section{Properties of rescaled volume sampling}
We give proofs of the properties of rescaled volume
sampling which hold for any rescaling distribution $q$. In this
section, we will use $Z=d!{k\choose d}\det(\X^\top\X)$ as the
normalization constant for rescaled volume sampling.

\subsection{Proof of Proposition \ref{p:cauchy-binet}}
\label{sec:cauchy-binet-proof}
First, we apply the Cauchy-Binet formula to
the determinant term specified by a fixed sequence $\pi\in[n]^k$:
\begin{align*}
\det(\X^\top\Q_\pi\X)= \sum_{S\in\sets{k}{d}}
  \det(\X^\top\Q_{\pi_S}\X)  = \sum_{S\in\sets{k}{d}}
\det(\X_{\pi_S})^2 \prod_{i\in S}\frac{1}{q_{\pi_i}}.
\end{align*}
Next, we compute the sum, using the above identity:
\begin{align*}
\sum_{\pi\in[n]^k}\!\!\det(\X^\top\Q_\pi\X)\prod_{i=1}^kq_{\pi_i}
&=\sum_{\pi\in[n]^k}\sum_{S\in\sets{k}{d}}\det(\X_{\pi_S})^2
\prod_{i\in[k]\backslash S}q_{\pi_i}\\
&={k\choose d}\!\!\sum_{\bar{\pi}\in[n]^d}\!\!\det(\X_{\bar{\pi}})^2\!\!
\sum_{\tilde{\pi}\in[n]^{k\!-\!d}}\prod_{i=1}^{k-d}q_{\tilde{\pi}_i}\\
&={k\choose d}\!\!\sum_{\bar{\pi}\in[n]^d}\!\!
\det(\X_{\bar{\pi}})^2\ \Big(\sum_{i=1}^nq_i\Big)^{k-d}\\
&={k\choose d}d!\sum_{S\in\sets{n}{d}}\det(\X_S)^2 
=k(k\!-\!1)...(k\!-\!d\!+\!1)\det(\X^\top\X),
\end{align*}
where the steps closely follow the corresponding derivation for Theorem
\ref{t:unbiasedness}, given in Section \ref{sec:unbiasedness-proof}.

\subsection{Proof of Theorem \ref{t:square-inverse}}
\label{sec:square-inverse-proof}
We will prove that for any vector $\v\in\R^d$,
\begin{align*}
\E\big[\v^\top(\X^\top\Q_\pi\X)^{-1}\v\big] \leq \frac{\v^\top(\X^\top\X)^{-1}\v}{k\!-\!d\!+\!1},
\end{align*}
which immediately implies the corresponding matrix inequality.
First, we use Sylvester's formula, which holds whenever
a matrix $\A\in\R^{d\times d}$ is full rank:
\begin{align*}
\det(\A+\v\v^\top) = \det(\A)\,
\big(1+\v^\top \A^{-1}\v\big).
\end{align*}
Note that whenever the matrix is not full rank, its determinant is
$0$ (in which case we avoid computing the matrix inverse), so we have for any $\pi\in[n]^k$:
\begin{align*}
\det(\X^\top\Q_\pi\X)\ \v^\top (\X^\top\Q_\pi\X)^{-1}\v
&\leq \det(\X^\top\Q_\pi\X+\v\v^\top) - \det(\X^\top\Q_\pi\X)\\
&\overset{(*)}{=}
\sum_{S\in\sets{k}{d\!-\!1}}\det(\X_{\pi_S}^\top\X_{\pi_S}+\v\v^\top)\prod_{i\in S}\frac{1}{q_{\pi_i}},
\end{align*}
where $(*)$ follows from applying the Cauchy-Binet formula to both of
the determinants, and cancelling out common terms. Next, we proceed
in a standard fashion, summing over all $\pi\in[n]^k$:
\begin{align*}
Z\ \E\big[
\v^\top (\X^\top\Q_\pi\X)^{-1}\v\big]
&=\sum_{\pi\in[n]^k}\!\! \v^\top
  (\X^\top\Q_\pi\X)^{-1}\v\det(\X^\top\Q_\pi\X)\prod_{i=1}^kq_{\pi_i}\\
&\leq
  \sum_{\pi\in[n]^k}\sum_{S\in\sets{k}{d\!-\!1}}\
  \!\!\!\det(\X_{\pi_S}^\top\X_{\pi_S}+\v\v^\top)
\prod_{i\in [k]\backslash S}q_{\pi_i}\\
&= {k\choose d\!-\!1} \sum_{\bar{\pi}\in[n]^{d-1}}
  \!\!\!\det(\X_{\bar{\pi}}^\top\X_{\bar{\pi}}+\v\v^\top)
\sum_{\tilde{\pi}\in[n]^{k-d+1}}\prod_{i=1}^{k-d+1}q_{\pi_i}\\
&= {k\choose d\!-\!1}(d\!-\!1)! \sum_{S\in\sets{n}{d\!-\!1}}
  \!\!\!\det(\X_S^\top\X_S+\v\v^\top)\\
&=\frac{d!{k\choose
  d}}{k\!-\!d\!+\!1}\big(\det(\X^\top\X+\v\v^\top)-\det(\X^\top\X)\big)
=Z\,\frac{\v^\top(\X^\top\X)^{-1}\v}{k\!-\!d\!+\!1}.
\end{align*}

\subsection{Proof of Proposition \ref{p:marginals}}
\label{sec:marginals-proof}
First, we compute the marginal probability of a fixed element of
sequence $\pi$ containing a particular index $i\in[n]$ under
$q$-rescaled volume sampling:
\begin{align*}
Z&\ \Pr(\pi_k\!=\!i) = \sum_{\pi\in[n]^{k-1}}\,\det(\X^\top\Q_{[\pi,i]}\X)
                  \ q_i\,\prod_{t=1}^{k-1}q_{\pi_t}\\
&=\underbrace{q_i\!\!\!\sum_{\pi\in[n]^{k\!-\!1}}\sum_{S\in\sets{k\!-\!1}{d}}\!\!\!\det(\X_{\pi_S})^2
\!\!\!  \prod_{t\in[k\!-\!1]\backslash S}\!\!\!q_{\pi_t}}_{T_1} + 
\!\underbrace{\sum_{\pi\in[n]^{k\!-\!1}}\sum_{S\in\sets{k\!-\!1}{d\!-\!1}}
\!\!\!\det(\X_{\pi_S}^\top\X_{\pi_S} + \x_i\x_i^\top) \!\!\!
\prod_{t\in[k\!-\!1]\backslash S}\!\!\!q_{\pi_t}}_{T_2},
\end{align*}
where the first term can be computed  by following the derivation in
Appendix \ref{sec:cauchy-binet-proof}, obtaining $T_1 =
q_i\frac{k-d}{k}\,Z$, and the second term is derived as in Appendix
\ref{sec:square-inverse-proof}, obtaining
$T_2=\frac{l_i}{k}\,Z$. Putting this together, we get
\begin{align*}
\Pr(\pi_k\!=\!i)=\frac{1}{k}\big((k\!-\!d)\,q_i + l_i\big).
\end{align*}
Note that by symmetry this applies to any element of the sequence. We
can now easily compute the desired expectation:
\begin{align*}
\E\big[(\Q_\pi)_{ii}\big] =\frac{1}{q_i} \sum_{t=1}^k\Pr(\pi_t\!=\!i)
  = (k\!-\!d) + \frac{l_i}{q_i}.
\end{align*}

\subsection{Proof of Lemma \ref{l:composition}}
\label{sec:composition-proof}
First step of the reverse iterative sampling procedure
described in Section \ref{s:versus} involves removing one row from the
given matrix with probability proportional to the square volume of that
submatrix:
\begin{align*}
\forall_{i\in S}\qquad \Pr(i\,|\,\pi_S)=
  \frac{\det(\X^\top\Q_{\pi_{S\backslash i}}\X)}
{(|S|-d)\det(\X^\top\Q_{\pi}\X)}.
\end{align*}
Suppose that $k=s-1$ and let
$\tilde{\pi}=\pi_S\in[n]^{s-1}$ denote the sequence obtained
after performing one step of the row-removal procedure. Then, 
\begin{align*}
\Pr({\tilde{\pi}})&=\sum_{i=1}^n\ s\
                    \overbrace{\Pr(i\,|\,[{\tilde{\pi}},i])}^{
\text{removing one row}}\quad
\overbrace{\Pr([{\tilde{\pi}},i])}^{\text{rescaled sampling}} \\
&= \sum_{i=1}^n \ s\ 
\frac{\det(\X^\top\Q_{\tilde{\pi}}\X)}{(s\!-\!d)\det(\X^\top\Q_{[{\tilde{\pi}},i]}\X)}\ 
\frac{\det(\X^\top\Q_{[{\tilde{\pi}},i]}\X)\,(\prod_{j=1}^{s-1}q_{\tilde{\pi}_j})\, q_i}{\frac{s!}{(s-d)!}\det(\X^\top\X)}\\
&=\frac{\det(\X^\top\Q_{{\tilde{\pi}}}\X)
(\prod_{j=1}^{s-1}q_{\tilde{\pi}_j})}{\frac{s-d}{s}\frac{s!}{(s-d)!}\det(\X^\top\X)}
\sum_{i=1}^nq_i = \frac{\det(\X^\top\Q_{{\tilde{\pi}}}\X)\, (\prod_{j=1}^{s-1}q_{\tilde{\pi}_j}) }{\frac{(s-1)!}{(s-1-d)!}\det(\X^\top\X)},
\end{align*}
where the factor $s$ next to the sum counts the number of ways to place index $i$
into the sequence $\tilde{\pi}$. Thus, by induction, for any $k<s$ the
algorithm correctly  samples from $q$-rescaled volume sampling.

\section{Proof of Theorem \ref{t:multiplication}}
\label{sec:multiplication-proof}
We rewrite the expected square norm as:
\begin{align*}
\E\bigg[\Big\|\frac{1}{k}\U^\top\Q_\pi\r - \U^\top\r\Big\|^2\bigg] 
&= \E\bigg[\Big\|\U^\top\!\Big(\frac{1}{k}\Q_\pi\!-\!\I\Big)\r\Big\|^2\bigg]
=\E\bigg[\r^\top\Big(\frac{1}{k}\Q_\pi\!-\!\I\Big)\U\U^\top\!
\Big(\frac{1}{k}\Q_\pi\!-\!\I\Big)\r\bigg]\\
&=\r^\top\ \E\bigg[ \Big(\frac{1}{k}\Q_\pi\!-\!\I\Big)\U\U^\top\!
\Big(\frac{1}{k}\Q_\pi\!-\!\I\Big)\bigg]\ \r\\
&\le \lambda_{\max}\Big(\underbrace{ \big(\E[(z_i\!-\!1)(z_j\!-\!1)]\,\u_i^\top\u_j\big)_{ij}}_{\M}\Big)\,\|\r\|^2,
\quad\text{where } z_i=\frac{1}{k}(\Q_\pi)_{ii}.
\end{align*}
It remains to bound $\lambda_{\max}(\M)$. By Proposition
\ref{p:marginals}, for leveraged volume sampling $\E[(\Q_\pi)_{ii}]=k$, so 
\begin{align*}
\E[(z_i\!-\!1)(z_j\!-\!1)] = \frac{1}{k^2}\Big(\E\big[(\Q_\pi)_{ii}(\Q_\pi)_{jj}\big] -
  \E\big[(\Q_\pi)_{ii}\big]\E\big[(\Q_\pi)_{jj}\big]\Big) =
  \frac{1}{k^2}\,\cov\big[(\Q_\pi)_{ii},\,(\Q_\pi)_{jj}\big].
\end{align*}
For rescaled volume sampling this is given in the
following lemma, proven in Appendix \ref{a:pairwise-formula}.
\begin{lemma}\label{l:pairwise-formula}
For any $\X$ and $q$, if sequence $\pi\in[n]^k$ is sampled from
$q$-rescaled volume sampling then
\begin{align*}
\cov\big[(\Q_\pi)_{ii},\,(\Q_\pi)_{jj}\big] =\one_{i=j} \frac{1}{q_i}\E\big[(\Q_\pi)_{ii}\big]
- (k\!-\!d) -
  \frac{(\x_i^\top(\X^\top\X)^{-1}\x_j)^2}{q_iq_j}.
\end{align*}
\end{lemma}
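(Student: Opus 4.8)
The plan is to reduce the covariance to a two–position marginal of rescaled volume sampling and then evaluate that marginal by a rank–two generalization of the determinantal calculations already carried out for Proposition~\ref{p:cauchy-binet} and Theorem~\ref{t:square-inverse}. First I would note that $(\Q_\pi)_{ii}=N_i/q_i$, where $N_i=|\{t:\pi_t=i\}|$ counts the occurrences of index $i$ in $\pi$, so $\cov[(\Q_\pi)_{ii},(\Q_\pi)_{jj}]=\frac{1}{q_iq_j}\cov[N_i,N_j]$. Since both $\det(\X^\top\Q_\pi\X)$ and $\prod_t q_{\pi_t}$ are invariant under permuting the entries of $\pi$, the distribution \eqref{eq:sampling} is exchangeable; writing $N_iN_j=\sum_{t,s}\one_{\pi_t=i}\one_{\pi_s=j}$ and separating the $k$ diagonal terms $t=s$ from the $k(k-1)$ off-diagonal ones gives
\begin{align*}
\E[N_iN_j] = \one_{i=j}\,k\,\Pr(\pi_1\!=\!i) + k(k\!-\!1)\,\Pr(\pi_1\!=\!i,\pi_2\!=\!j).
\end{align*}
The single-position marginal $\Pr(\pi_1\!=\!i)=\frac1k((k\!-\!d)q_i+l_i)$ is already available from the proof of Proposition~\ref{p:marginals}, so everything reduces to the two-position marginal $\Pr(\pi_1\!=\!i,\pi_2\!=\!j)$.

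To evaluate it I would fix $\pi_1=i,\pi_2=j$, factor out $q_iq_j$, and sum over $\pi'=(\pi_3,\dots,\pi_k)$, turning the determinant into $\det(\A_{\pi'}+\tfrac1{q_i}\x_i\x_i^\top+\tfrac1{q_j}\x_j\x_j^\top)$ with $\A_{\pi'}=\X^\top\Q_{\pi'}\X$. Expanding this rank-two update by Cauchy--Binet into the four groups of $d$-subsets --- those using neither, only $\x_i$, only $\x_j$, or both of the two new rows --- splits $Z\,\Pr(\pi_1\!=\!i,\pi_2\!=\!j)$ into a base term, two rank-one terms, and a cross term, each summable over $\pi'$ exactly as in Appendices~\ref{sec:cauchy-binet-proof} and~\ref{sec:square-inverse-proof}. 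The base term is $q_iq_jS_0$ with $S_0=\frac{(k-2)!}{(k-d-2)!}\det(\X^\top\X)$ by Proposition~\ref{p:cauchy-binet} applied with $k$ replaced by $k-2$; each single-row group contributes the factor $\sum_{|S|=d-1}\det([\x_i^\top;\X_S])^2=\det(\X^\top\X)\,l_i$ via Sylvester's identity; and the cross term contributes $\frac{(k-2)!}{(k-d)!}\sum_{|S|=d-2}\det([\x_i^\top;\x_j^\top;\X_S])^2$.

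The crux is the two-fixed-row sum in the cross term, which is exactly where the off-diagonal quantity $m_{ij}\defeq\x_i^\top(\X^\top\X)^{-1}\x_j$ enters. The identity I would establish is
\begin{align*}
\sum_{|S|=d-2}\det\big([\x_i^\top;\x_j^\top;\X_S]\big)^2 = \det(\X^\top\X)\big(l_il_j - m_{ij}^2\big),
\end{align*}
obtained by recognizing the left-hand side as the ``both extra rows used'' part of the Cauchy--Binet expansion of $\det(\X^\top\X+\x_i\x_i^\top+\x_j\x_j^\top)$ and evaluating the latter by the $2\times2$ Sylvester determinant formula, which is legitimate because $\X^\top\X$ is invertible. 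I expect this to be the main obstacle: it needs the rank-two generalization of Sylvester's identity, and care is required because $\x_i,\x_j$ are themselves rows of $\X$, so a subset $S$ containing $i$ or $j$ produces a repeated row and a vanishing determinant --- precisely the phenomenon that makes the combinatorial ``both used'' part agree with the invertible-matrix formula. (The same vanishing shows the cross term dies when $i=j$, so the resulting marginal formula holds uniformly for all $i,j$, including the diagonal where the two rank-one updates coincide.)

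Assembling the four pieces and normalizing by $Z=\frac{k!}{(k-d)!}\det(\X^\top\X)$ yields
\begin{align*}
\Pr(\pi_1\!=\!i,\pi_2\!=\!j)=\tfrac{1}{k(k-1)}\Big((k\!-\!d)(k\!-\!d\!-\!1)q_iq_j+(k\!-\!d)(q_jl_i+q_il_j)+l_il_j-m_{ij}^2\Big),
\end{align*}
which I would sanity-check by summing over $j$ (using $\sum_jq_j=1$, $\sum_jl_j=d$, and $\sum_jm_{ij}^2=l_i$ since $\sum_j(\X^\top\X)^{-1/2}\x_j\x_j^\top(\X^\top\X)^{-1/2}=\I$) to recover $\Pr(\pi_1\!=\!i)$. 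Finally I would form $\cov[N_i,N_j]=\E[N_iN_j]-\E[N_i]\E[N_j]$ with $\E[N_i]=(k\!-\!d)q_i+l_i$: the $q_il_j+q_jl_i$ and $l_il_j$ contributions cancel against the product of marginals and the $q_iq_j$ coefficient collapses to $-(k\!-\!d)$, leaving $\cov[N_i,N_j]=\one_{i=j}((k\!-\!d)q_i+l_i)-(k\!-\!d)q_iq_j-m_{ij}^2$. Dividing by $q_iq_j$ and using $\E[(\Q_\pi)_{ii}]=((k\!-\!d)q_i+l_i)/q_i$ gives exactly the claimed formula.
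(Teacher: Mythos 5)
Your proposal is correct and follows essentially the same route as the paper's proof: reduce the covariance via exchangeability to the two-position marginal $\Pr(\pi_1\!=\!i,\pi_2\!=\!j)$, partition the Cauchy--Binet expansion into the four groups according to which of the two fixed positions lie in $S$, and evaluate the cross term with a Sylvester-type determinant identity yielding $l_il_j-l_{ij}^2$. The only cosmetic difference is that you extract this cross term from the rank-two update $\det(\X^\top\X+\x_i\x_i^\top+\x_j\x_j^\top)$, whereas the paper uses the equivalent inclusion--exclusion over the row-deleted determinants $\det(\X_{-i}^\top\X_{-i})$, $\det(\X_{-j}^\top\X_{-j})$, $\det(\X_{-i,j}^\top\X_{-i,j})$.
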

Since $\|\u_i\|^2=l_i=dq_i$ and
$\u_i^\top(\U^\top\U)^{-1}\u_j=\u_i^\top\u_j$, we can express matrix $\M$ as follows: 
\begin{align*}
\M = \diag\Big(\frac{d\ \E\big[(\Q_\pi)_{ii}\big]}{ \|\u_i\|^2k^2}\|\u_i\|^2\Big)_{i=1}^n
  -\frac{k\!-\!d}{k^2}\U\U^\top
- \frac{d^2}{k^2}\bigg(\frac{(\u_i^\top\u_j)^3}{\|\u_i\|^2\|\u_j\|^2}\bigg)_{ij}.
\end{align*}
The first term simplifies to $\frac{d}{k}\I$, and the second term is
negative semi-definite, so
\begin{align*}
\lambda_{\max}(\M) \leq \frac{d}{k} + \frac{d^2}{k^2}
\bigg\|\bigg(\frac{(\u_i^\top\u_j)^3}{\|\u_i\|^2\|\u_j\|^2}\bigg)_{ij}\bigg\|.
\end{align*}
 Finally, we decompose the last term into a Hadamard product 
of matrices, and apply a classical inequality by
\cite{hadamard-product-inequality}  (symbol ``$\circ$'' denotes
Hadamard matrix product):
\begin{align*}
\bigg\|\bigg(
  \frac{(\u_i^\top\u_j)^3}{\|\u_i\|^2\|\u_j\|^2}\bigg)_{\!ij}\bigg\|
\quad&=\quad
\bigg\|\bigg(
\frac{\u_i^\top\u_j}{\|\u_i\|\,\|\u_j\|}
\bigg)_{\!ij}
           \circ\bigg(
\frac{(\u_i^\top\u_j)^2}{\|\u_i\|\|\u_j\|}
\bigg)_{\!ij}\bigg\|\\
&\leq \quad\bigg\|\bigg(\frac{(\u_i^\top\u_j)^2}{\|\u_i\|\|\u_j\|}
\bigg)_{\!ij}\bigg\|\quad=\quad
\bigg\|\bigg(
\frac{\u_i^\top\u_j}{\|\u_i\|\,\|\u_j\|}
\bigg)_{\!ij}
           \circ\U\U^\top\bigg\|\\
&\leq \quad\|\U\U^\top\|\ =\  1.
\end{align*} 
Thus, we conclude that
$\E[\|\frac{1}{k}\U^\top\Q_\pi\r-\U^\top\r\|^2]\leq
(\frac{d}{k}+\frac{d^2}{k^2})\|\r\|^2$, completing the proof.

\subsection{Proof of Lemma~\ref{l:pairwise-formula}}
\label{a:pairwise-formula}

We compute marginal probability of two elements in the
sequence $\pi$ having particular values $i,j\in[n]$:
\begin{align*}
Z\,\Pr\big((\pi_{k-1}\!=\!i)\wedge(\pi_k\!=\!j)\big) &=
\sum_{\pi\in[n]^{k-2}}\sum_{S\in\sets{k}{d}}\det(\X_{[\pi,i,j]_S}^\top\X_{[\pi,i,j]_S})
\prod_{t\in[k]\backslash S}q_{[\pi,i,j]_t}.
\end{align*}
We partition the set ${[k]\choose d}$ of all subsets of size $d$ into
four groups, and summing separately over each of the groups, we have
\begin{align*}
Z\,\Pr\big((\pi_{k-1}\!=\!i)\wedge(\pi_k\!=\!j)\big) = T_{00} + T_{01}
  + T_{10} + T_{11},\qquad\text{where:}
\end{align*}
\begin{enumerate}
\item Let $G_{00} = \{S\!\in\! {[k]\choose d}:\ k\!-\!1\!\not\in \!S,\
  k\!\not\in\! S\}$, and following
  derivation in Appendix \ref{sec:cauchy-binet-proof}, 
\begin{align*}
T_{00} =
  q_i\,q_j\sum_{\pi\in[n]^{k-2}}\sum_{S\in G_{00}}
\det(\X_{\pi_S})^2
\prod_{t\in[k\!-\!2]\backslash S}q_{\pi_t} = q_i\,q_j\frac{(k\!-\!d\!-\!1)(k\!-\!d)}{(k\!-\!1)\,k}\,Z.
\end{align*}
\item Let $G_{10} = \{S\!\in\! {[k]\choose d}:\ k\!-\!1\!\in \!S,\
  k\!\not\in\! S\}$, and following
  derivation in Appendix \ref{sec:square-inverse-proof},
\begin{align*}
T_{10} =
q_j\sum_{\pi\in[n]^{k-1}}\sum_{S\in G_{10}}
\det(\X_{[\pi,i]_S})^2
\prod_{t\in[k\!-\!1]\backslash S}q_{[\pi,i]_t} = l_i\,q_j\frac{(k\!-\!d)}{(k\!-\!1)\,k}\,Z.
\end{align*}
\item $G_{01} = \{S\!\in\! {[k]\choose d}:\ k\!-\!1\!\not\in \!S,\ k\!\in\!
  S\}$, and by symmetry, $T_{01} =
  l_j\,q_i\frac{(k-d)}{(k-1)\,k}\,Z$.
\item Let $G_{11} = \{S\!\in\! {[k]\choose d}:\ k\!-\!1\!\in \!S,\
  k\!\in\!   S\}$, and the last term is
\begin{align*}
\hspace{-1cm}T_{11} &=
\sum_{\pi\in[n]^{k-1}}\sum_{S\in G_{11}}
\det(\X_{[\pi,i,j]_S})^2
\prod_{t\in[k]\backslash S}q_{[\pi,i,j]_t} \\
&={k\!-\!2\choose
  d\!-\!2}\sum_{\pi\in[n]^{d-2}}\det(\X_{[\pi,i,j]})^2\\
&={k\!-\!2\choose d\!-\!2}(d\!-\!2)!\,
\big(\det(\X^\top\X) -
 \det(\X_{-i}^\top\X_{-i}) - \det(\X_{-j}^\top\X_{-j})+
  \det(\X_{-i,j}^\top\X_{-i,j}) \big)\\
&\overset{(*)}{=}\frac{d!{k\choose
  d}}{k(k\!-\!1)}\det(\X^\top\X)\Big(1
  -\!\!\underbrace{(1\!-\!l_i)}_{\frac{\det(\X_{-i}^\top\X_{-i})}{\det(\X^\top\X)}}
\!\!  -\!\!
  \underbrace{(1\!-\!l_j)}_{\frac{\det(\X_{-j}^\top\X_{-j})}{\det(\X^\top\X)}}
 \!\! +
\underbrace{(1\!-\!l_i)(1\!-\!l_j) -l_{ij}^2}_{\frac{\det(\X_{-i,j}^\top\X_{-i,j})}{\det(\X^\top\X)}}\Big)\\[-2mm]
&=\frac{Z}{k(k\!-\!1)}\big(\ell_i\ell_j - \ell_{ij}^2\big),
\end{align*}
\end{enumerate} 
where $l_{ij}=\x_i^\top(\X^\top\X)^{-1}\x_j$, and $(*)$ follows from
repeated application of Sylvester's determinant formula (as in
Appendix \ref{sec:square-inverse-proof}). Putting it all together, we
can now compute the expectation for $i\neq j$:
\begin{align*}
\E\big[(\Q_\pi)_{ii}\,(\Q_\pi)_{jj}\big] &= 
\frac{1}{q_i\,q_j}\sum_{t_1=1}^k\sum_{t_2=1}^k
\Pr\big((\pi_{k-1}\!=\!i)\wedge(\pi_k\!=\!j)\big)\\
&=\frac{k(k\!-\!1)}{q_i\,q_j}  \overbrace{\Pr\big((\pi_{k-1}\!=\!i)\wedge(\pi_k\!=\!j)\big)}
^{\frac{1}{Z}(T_{00}+T_{10}+T_{01}+T_{11})}\\
&= (k\!-\!d\!-\!1)(k\!-\!d) + (k\!-\!d)\frac{l_i}{q_i} +
  (k\!-\!d)\frac{l_j}{q_j} + \frac{l_il_j}{q_i\,q_j} - \frac{l_{ij}^2}{q_i\,q_j}\\
&=\Big((k\!-\!d)q_i+\frac{l_i}{q_i}\Big)\Big((k\!-\!d)q_j +
  \frac{l_j}{q_j}\Big) - (k\!-\!d) - \frac{l_{ij}^2}{q_i\,q_j}\\
&=\E\big[(\Q_\pi)_{ii}\big]\,\E\big[(\Q_\pi)_{jj}\big] -(k\!-\!d) - \frac{l_{ij}^2}{q_iq_j}.
\end{align*}
Finally, if $i=j$, then
\begin{align*}
\E[(\Q_\pi)_{ii}\,(\Q_\pi)_{ii}] &= \frac{1}{q_i^2}\sum_{t_1=1}^k\sum_{t_2=1}^k\Pr(\pi_{t_1}\!=\!i\,
  \wedge\, \pi_{t_2}\!=\!i) \\
&= \frac{k(k\!-\!1)}{q_i^2} \,\Pr(\pi_{k-1}\!=\!i\, \wedge\,
  \pi_{k}\!=\!i) + \frac{k}{q_i^2}\, \Pr(\pi_k\!=\!i)\\
&=\big(\E\big[(\Q_\pi)_{ii}\big]\big)^2 -(k\!-\!d) -
  \frac{l_i^2}{q_i^2} +\frac{1}{q_i}\E\big[(\Q_\pi)_{ii}\big].
\end{align*}

\section{Proof of Theorem \ref{t:spectral}}
\label{sec:spectral-proof}
We break the sampling procedure down into two stages. First, we do leveraged volume
sampling of a sequence $\pi\in[n]^{m}$ of size $m\geq C_0 d^2/\delta$,
then we do standard volume
sampling size $k$ from matrix $(\Q_{[1..n]}^{\sfrac{1}{2}}\U)_\pi$. Since
rescaled volume sampling is closed under this
subsampling (Lemma \ref{l:composition}),
this procedure is equivalent to size $k$ leveraged volume sampling
from $\U$. To show that the first stage satisfies the subspace
embedding condition, we simply use the bound from Theorem
\ref{t:multiplication} (see details in Appendix \ref{a:overestimate}):
\begin{lemma}\label{l:overestimate}
There is an absolute constant $C_0$, s.t.~if sequence $\pi\in[n]^m$ is
generated via leveraged volume sampling of size $m$ at least
$C_0\,d^2/\delta$ from $\U$, then
\begin{align*}
\Pr\bigg(\lambda_{\min}\Big(\frac{1}{m}\U^\top\Q_\pi\U\Big) \leq
  \frac{1}{2}\bigg)\leq \delta.
\end{align*}
\end{lemma}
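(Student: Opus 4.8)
The plan is to reduce the minimum-eigenvalue statement to a deviation bound in operator norm and then control that deviation in expectation using the matrix-multiplication bound of Theorem~\ref{t:multiplication}. Write $\M_\pi \defeq \frac1m\U^\top\Q_\pi\U$. Since leveraged volume sampling uses $q_i=\frac{l_i}{d}$, Proposition~\ref{p:marginals} gives $\E[\Q_\pi]=m\,\I$, and hence $\E[\M_\pi]=\U^\top\U=\I$. Because $\M_\pi$ is symmetric positive semidefinite, $\lambda_{\min}(\M_\pi)\ge 1-\|\M_\pi-\I\|$, so the event $\{\lambda_{\min}(\M_\pi)\le \tfrac12\}$ is contained in $\{\|\M_\pi-\I\|\ge \tfrac12\}$. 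It therefore suffices to show that $\|\M_\pi-\I\|$ is small with high probability.

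I would bound the operator norm by the (larger) Frobenius norm, $\|\M_\pi-\I\|\le \|\M_\pi-\I\|_F$, and estimate $\E[\|\M_\pi-\I\|_F^2]$ column by column. The $j$-th column of $\M_\pi-\I$ equals $\frac1m\U^\top\Q_\pi(\U\e_j)-\U^\top(\U\e_j)$, since $\U^\top\U\e_j=\e_j$. Applying Theorem~\ref{t:multiplication} with $\r=\U\e_j$ and tolerance $\epsilon'=\frac{2d}{m}$ (the requirement $k\ge 2d/\epsilon$ of that theorem holds with equality under this choice) gives $\E[\|(\M_\pi-\I)\e_j\|^2]\le \epsilon'\,\|\U\e_j\|^2=\frac{2d}{m}$, using $\|\U\e_j\|^2=\e_j^\top\U^\top\U\e_j=1$. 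Summing over the $d$ columns yields $\E[\|\M_\pi-\I\|_F^2]=\sum_{j=1}^d\E[\|(\M_\pi-\I)\e_j\|^2]\le \frac{2d^2}{m}$.

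The final step is Markov's inequality: $\Pr(\|\M_\pi-\I\|\ge\tfrac12)\le \Pr(\|\M_\pi-\I\|_F^2\ge \tfrac14)\le 4\,\E[\|\M_\pi-\I\|_F^2]\le \frac{8d^2}{m}$, so choosing $C_0=8$ and $m\ge C_0\,d^2/\delta$ makes this at most $\delta$, which is exactly the claim. The only conceptual move — and the reason this route is so short — is recognizing that summing the per-vector guarantee of Theorem~\ref{t:multiplication} over the columns of $\U$ directly controls the centered matrix in Frobenius norm. The main ``obstacle'' is really a tension rather than a technical difficulty: bounding the operator norm by the Frobenius norm and using only a second-moment/Markov argument costs a factor of $d$ and yields merely an $m=O(d^2/\delta)$ embedding, far weaker than the $O(d\log(d/\delta))$ subspace embedding of Theorem~\ref{t:spectral}. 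Here that weak bound is all we need, since it only serves to seed the coupling reduction that ultimately proves the sharper Theorem~\ref{t:spectral}.
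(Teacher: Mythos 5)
Your proof is correct and follows essentially the same route as the paper's: apply Theorem~\ref{t:multiplication} to each column of $\U$, sum to control $\E\big[\|\frac1m\U^\top\Q_\pi\U-\I\|_F^2\big]$, bound the operator norm by the Frobenius norm, and finish with Markov's inequality. The only differences are cosmetic (you make the constant $C_0=8$ explicit and spell out the step $\lambda_{\min}\ge 1-\|\M_\pi-\I\|$, which the paper leaves implicit).
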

The size of $m$ is much larger than what we claim is sufficient.
However, we use it to achieve a tighter bound in the second stage.
To obtain substantially smaller sample sizes for subspace embedding than what Theorem~\ref{t:multiplication} can deliver, it is standard to use tail bounds for the
sums of independent matrices. However, applying these results to joint
sampling is a challenging task. Interestingly,
\cite{dual-volume-sampling} showed that volume sampling is a strongly
Raleigh measure, implying that the sampled vectors are negatively
correlated. This guarantee is sufficient to show tail bounds for
real-valued random variables \citep[see,
e.g.,][]{pemantle2014concentration},
however it has proven challenging
in the matrix case, as discussed by \cite{harvey2014pipage}. One
notable exception is uniform sampling without replacement, which is a
negatively correlated joint distribution. A reduction argument originally proposed
by \cite{hoeffding-with-replacement}, but presented in this context by
\cite{uniform-matrix-sampling}, shows that uniform sampling without
replacement offers the same tail bounds as i.i.d.~uniform sampling.

\begin{lemma}\label{l:without-replacement}
Assume that $\lambda_{\min}\big(\frac{1}{m}\U^\top\Q_\pi\U\big)\geq
\frac{1}{2}$. Suppose that set $T$ is a set of fixed size 
sampled uniformly without replacement from $[m]$. There is a constant
$C_1$ s.t.~if $|T|\ge  C_1\,d\ln(d/\delta)$, then 
\begin{align*}
\Pr\Big(\lambda_{\min}\Big(\frac{1}{|T|}\U^\top\Q_{\pi_T}\U\Big) \leq
  \frac{1}{4}\Big) \leq \delta.
\end{align*}
\end{lemma}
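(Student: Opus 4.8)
The plan is to recognize $\tfrac{1}{|T|}\U^\top\Q_{\pi_T}\U$ as an average, over a uniformly random subset $T\subseteq[m]$, of the bounded positive semidefinite rank-one matrices $\M_t\defeq\tfrac{1}{q_{\pi_t}}\u_{\pi_t}\u_{\pi_t}^\top$, and then to control its smallest eigenvalue by a matrix Chernoff bound. The first observation I would make is that these matrices are uniformly bounded in operator norm: since $q_i=l_i/d$ and $l_i=\|\u_i\|^2$, we get $\|\M_t\|=\|\u_{\pi_t}\|^2/q_{\pi_t}=d$ for every $t$. Moreover, the hypothesis $\lambda_{\min}(\tfrac1m\U^\top\Q_\pi\U)\ge\tfrac12$ says exactly that the full average $\bar{\M}\defeq\tfrac1m\sum_{t=1}^m\M_t$ obeys $\lambda_{\min}(\bar{\M})\ge\tfrac12$, and $\bar{\M}$ is precisely the per-draw mean $\E[\M_t]$ when a single index $t$ is chosen uniformly from $[m]$.

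Next I would dispense with the complication that $T$ is sampled without replacement. I would invoke the reduction argument of \cite{hoeffding-with-replacement}, in the matrix form given by \cite{uniform-matrix-sampling}: because the matrix Laplace-transform functional $\S\mapsto\tr\exp(\theta\S)$ is convex, Hoeffding's comparison inequality guarantees that the moment generating function of $\sum_{t\in T}\M_t$ under uniform sampling without replacement is dominated by that of the corresponding i.i.d.\ (with-replacement) sum of the same size. Consequently every tail bound produced by the matrix Laplace-transform method for the i.i.d.\ sum transfers verbatim to the without-replacement sum, and it suffices to establish the claim for $|T|$ indices drawn i.i.d.\ uniformly from $[m]$.

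Finally, I would apply the standard lower-tail matrix Chernoff bound (e.g., Tropp) to the i.i.d.\ sum $\sum_{j=1}^{|T|}\M_{t_j}$, using the per-matrix norm bound $R=d$ and the expected minimum eigenvalue $\mu_{\min}=|T|\,\lambda_{\min}(\bar{\M})\ge|T|/2$. Since $\lambda_{\min}(\tfrac1{|T|}\sum_j\M_{t_j})\le\tfrac14$ is equivalent to $\lambda_{\min}(\sum_j\M_{t_j})\le|T|/4\le\tfrac12\mu_{\min}$, this is a deviation of relative size $\tfrac12$, yielding a bound of the form $d\exp(-c\,|T|/d)$ for an absolute constant $c$. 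Choosing $C_1$ large enough that $|T|\ge C_1 d\ln(d/\delta)$ drives this below $\delta$ and finishes the argument.

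The step I expect to be the main obstacle is the without-replacement reduction: one must check that the exact functional appearing in the matrix Chernoff derivation is convex, so that Hoeffding's comparison applies, and that drawing a fixed-size subset uniformly is the right ``without replacement'' regime for which the comparison holds. Everything downstream---the norm computation and the concluding Chernoff tail---is routine once this reduction is in place.
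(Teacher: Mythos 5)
Your proposal is correct and follows essentially the same route as the paper's proof: both bound the per-draw matrices by $d$ in operator norm, invoke the Hoeffding comparison (in the matrix Laplace-transform form of the cited reduction) to pass from uniform sampling without replacement to i.i.d.\ sampling, and then conclude with Tropp's matrix Chernoff bound using the hypothesis that the per-draw mean has smallest eigenvalue at least $\tfrac12$. The only cosmetic difference is that the paper phrases the tail event via $\lambda_{\max}$ of the negated sum rather than the lower tail of $\lambda_{\min}$, which is equivalent.
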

\vspace{-1mm}
The proof of Lemma \ref{l:without-replacement} (given in appendix
\ref{a:without-replacement}) is a straight-forward application of the
argument given by \cite{uniform-matrix-sampling}. We now propose a
different reduction argument showing that a subspace
embedding guarantee for uniform sampling without replacement leads to
a similar guarantee for volume sampling. We achieve this by exploiting a
volume sampling algorithm proposed recently by
\cite{regularized-volume-sampling}, shown in Algorithm
\ref{alg:volume}, which is a modification of the reverse iterative
sampling procedure introduced in \cite{unbiased-estimates}. This
procedure relies on iteratively removing elements from the set $S$
until we are left with $k$ elements. Specifically, at each step, we
sample an index $i$ from a conditional distribution, $i\sim \Pr(i\,|\,S)=(1-\u_i^\top(\U^\top\Q_{\pi_S}\U)^{-1}\u_i)/(|S|-d)$. Crucially for us, each step
proceeds via rejection sampling with the proposal distribution being
uniform. We can easily modify the algorithm, so that the
samples from the proposal distribution are used to construct a uniformly
sampled set $T$, as shown in Algorithm \ref{alg:coupled}. Note that
sets $S$ returned by both algorithms are identically distributed, and
furthermore, $T$ is a subset of $S$, because every index taken out of
$S$ is also taken out of $T$.

\noindent\begin{minipage}{\textwidth}
   \centering
   \begin{minipage}{.45\textwidth}
\small
     \centering
     \captionof{algorithm}{Volume sampling}
     \label{alg:volume}
\vspace{-.3cm}
     \begin{algorithmic}[1]
       \STATE $S \leftarrow [m]$
       \STATE {\bf while} $|S|>k$
       \STATE \quad \textbf{repeat}
       \STATE \quad\quad Sample $i$ unif. out of $S$
       \STATE \quad\quad $q \leftarrow 1-\u_i^\top (\U^\top\Q_{\pi_S}\U)^{-1}\u_i$
       \STATE \quad\quad Sample $\textit{Accept} \sim \text{Bernoulli}(q)$
       \STATE \quad \textbf{until} $\textit{Accept}=\text{true}$
       \STATE \quad $S\leftarrow S \backslash \{i\}$
       \STATE {\bf end} 
       \RETURN $S$
     \end{algorithmic}
   \end{minipage}
   \begin{minipage}{.45\textwidth}
\small
     \centering \vspace{5 mm}

     \captionof{algorithm}{Coupled sampling}
     \label{alg:coupled}
\vspace{-.3cm}
     \begin{algorithmic}[1]
       \STATE $S,T \leftarrow [m]$
       \STATE {\bf while} $|S|>k$
       \STATE \quad Sample $i$ unif. out of $[m]$
       \STATE \quad $T \leftarrow T - \{i\}$
       \STATE \quad {\bf if} $i\in S$
       \STATE \quad\quad $q \leftarrow 1-\u_i^\top
       (\U^\top\Q_{\pi_S}\U)^{-1}\u_i$
       \STATE \quad\quad Sample $\textit{Accept} \sim \text{Bernoulli}(q)$
       \STATE \quad\quad {\bf if} $\textit{Accept}=\text{true}$,\quad $S\leftarrow S\backslash\{i\}$ {\bf end}
       \STATE \quad{\bf end}
       \STATE {\bf end}
       \RETURN $S,T$
     \end{algorithmic}
   \end{minipage}
\vspace{5mm}
\end{minipage}

By Lemma \ref{l:without-replacement}, if size of $T$ is at least $ C_1\,d\log(d/\delta)$, then this set
offers a subspace embedding guarantee. Next, we will show that in
fact set $T$ is not much smaller than $S$, implying that the same
guarantee holds for $S$. Specifically, we will show that $|S \setminus T|=
O(d\log(d/\delta))$. Note that it suffices to bound the number of times
that a uniform sample is rejected by sampling $A=0$ in line 7 of
Algorithm \ref{alg:coupled}. Denote this number by $R$. Note that
$R=\sum_{t=k+1}^m R_t$, where $m=|Q|$ and $R_t$ is the number of times
that $A=0$ was
sampled while the size of set $S$ was $t$. Variables $R_t$
are independent, and each is
distributed according to the geometric distribution (number of
failures until success), with the success probability
\begin{align*}
r_t = \frac{1}{t}\sum_{i\in S}
\big(1-\u_i^\top(\U^\top\Q_{\pi_S}\U)^{-1}\u_i\big)
= \frac{1}{t}\Big(t-\tr\big((\U^\top\Q_{\pi_S}\U)^{-1}\U^\top\Q_{\pi_S}\U \big) \Big)
=\frac{t-d}{t}.
\end{align*}
\vspace{-.2cm}
Now, as long as $\frac{m-d}{k-d}\leq C_0\,d^2/\delta$,  we can bound the
expected value of $R$ as follows: 
\begin{align*}
\E[R] &=\!\sum_{t=k+1}^m\!\E[R_t]=\!\!\sum_{t=k+1}^m\!\!\Big(\frac{t}{t-d}-1\Big)
  =d\!\!\sum_{t=k-d+1}^{m-d}\frac{1}{t} \leq  d\,\ln\!\Big(\frac{m-d}{k-d}\Big)\leq  C_2\,d\ln(d/\delta).
\end{align*}
In this step, we made use of the first stage sampling, guaranteeing that
the term under the logarithm is bounded. Next, we show that the
upper tail of $R$ decays very rapidly given a sufficiently large gap
between $m$ and $k$ (proof in Appendix \ref{a:geometric-tail}):
\begin{lemma}\label{l:geometric-tail}
Let $R_t\sim \operatorname{Geom}(\frac{t-d}{t})$ be a sequence of independent geometrically
distributed random variables (number of failures until success). Then,
for any $d<k<m$ and $a>1$,
\begin{align*}
\Pr\big(R \geq a\ \E[R]\big) \leq
  \text{e}^{\frac{a}{2}}\,\Big(\frac{k-d}{m-d}\Big)^{\frac{a}{2}-1}\quad
\text{for}\quad R=\sum_{t=k+1}^m R_t.
\end{align*}
\end{lemma}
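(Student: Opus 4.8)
The plan is to prove this by a standard Chernoff (moment generating function) bound on the sum of independent geometrics, where the single non-obvious ingredient is an exponential tilt chosen so that the resulting product telescopes \emph{exactly}. First I would write, for any $s>0$, the Markov bound $\Pr(R\ge a\E[R])\le e^{-sa\E[R]}\,\E[e^{sR}]$, and factor $\E[e^{sR}]=\prod_{t=k+1}^m\E[e^{sR_t}]$ by independence. Since $R_t$ counts failures until success with success probability $\frac{t-d}{t}$ (hence failure probability $\frac{d}{t}$), its MGF is $\E[e^{sR_t}]=\frac{(t-d)/t}{1-(d/t)e^s}=\frac{t-d}{t-de^s}$, valid whenever $de^s<t$.

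The key step is to choose $s=\ln\frac{d+1}{d}$, so that $de^s=d+1$ and each factor collapses to $\frac{t-d}{t-d-1}$. Substituting $u=t-d$, the product becomes $\prod_{u=k-d+1}^{m-d}\frac{u}{u-1}$, which telescopes to $\frac{m-d}{k-d}$ with no slack; the required positivity of every denominator $t-d-1$ is exactly the hypothesis $k>d$. Thus $\E[e^{sR}]=\frac{m-d}{k-d}$ exactly, and it only remains to control $e^{-sa\E[R]}$ by lower bounding $s\,\E[R]$.

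For that lower bound I would use $\E[R]=\sum_{t=k+1}^m\frac{d}{t-d}=d\sum_{u=k-d+1}^{m-d}\frac1u$, together with the elementary inequality $\ln(1+\tfrac1d)\ge\frac{1}{d+1}$, which gives $d\,s=d\ln\frac{d+1}{d}\ge\frac{d}{d+1}\ge\frac12$, and the integral comparison $\sum_{u=k-d+1}^{m-d}\frac1u\ge\ln\frac{m-d+1}{k-d+1}\ge\ln\frac{m-d}{k-d}-1$, where the final gap is at most $\ln 2<1$ because $k-d\ge1$. Combining these (using $\sum_u\frac1u\ge0$) yields $s\,\E[R]\ge\frac12\big(\ln\frac{m-d}{k-d}-1\big)$. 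Plugging everything back gives $\Pr(R\ge a\E[R])\le\frac{m-d}{k-d}\,e^{-\frac a2(\ln\frac{m-d}{k-d}-1)}=e^{a/2}\big(\frac{k-d}{m-d}\big)^{a/2-1}$, which is exactly the claim.

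The main obstacle is really a single insight rather than a hard computation: spotting the tilt $s=\ln\frac{d+1}{d}$ that makes the MGF telescope to the clean value $\frac{m-d}{k-d}$. Once that is in hand, the rest is routine bookkeeping, with the only mild subtlety being the off-by-one shifts in the harmonic-sum lower bound, which are comfortably absorbed by the $\ln 2<1$ slack.
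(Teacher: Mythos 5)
Your proposal is correct and follows essentially the same route as the paper: a Chernoff bound in which the tilt is chosen so that each geometric MGF becomes $\frac{t-d}{t-d-1}$ and the product telescopes to $\frac{m-d}{k-d}$, combined with a harmonic-sum lower bound $\E[R]\ge d\ln\frac{m-d}{k-d}-d$. The only (cosmetic) difference is that the paper takes $\theta=\frac{1}{2d}$ and uses $d\mathrm{e}^{\theta}\le d+1$ as an inequality, whereas you take $s=\ln\frac{d+1}{d}$ to make that step exact and then verify $ds\ge\frac12$ separately.
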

Let $a=4$ in Lemma \ref{l:geometric-tail}. Setting $C = C_1+2a\,C_2$,
for any $k\geq C\,d\ln(d/\delta)$, using $m=\max\{C_0\,\frac{d^2}{\delta},\
d+\text{e}^2\frac{k}{\delta}\}$, we obtain that 
\begin{align*}
R&\leq a\,C_2\, d\ln(d/\delta)\leq k/2,
\quad\text{w.p.}\quad
  \geq 1- \text{e}^2\,\frac{k-d}{m-d}\geq 1 -\delta,
\end{align*}
showing that $|T|\geq k-R\geq C_1\, d\ln(d/\delta)$ and $k\leq
2|T|$.

Therefore, by Lemmas \ref{l:overestimate},
\ref{l:without-replacement} and
\ref{l:geometric-tail}, there is a $1-3\delta$ probability event in which
\[
  \lambda_{\min}\Big(\frac{1}{|T|}\U^\top\Q_{\pi_T}\U\Big)
  \geq \frac14 \quad\text{and}\quad k \leq 2|T| .
\]
In this same event,
\[
  \lambda_{\min}\Big(\frac{1}{k}\U^\top\Q_{\pi_S}\U\Big)
  \geq
  \lambda_{\min}\Big(\frac{1}{k}\U^\top\Q_{\pi_T}\U\Big)
  \geq
  \lambda_{\min}\Big(\frac{1}{2|T|}\U^\top\Q_{\pi_T}\U\Big)
  \geq
  \frac12 \cdot \frac14 = \frac18 
  ,
\]
which completes the proof of Theorem \ref{t:spectral}.

\subsection{Proof of Lemma \ref{l:overestimate}}
\label{a:overestimate}
Replacing vector $\r$ in Theorem \ref{t:multiplication} with each
column of matrix $\U$, we obtain that for $m\geq C\,\frac{d}{\epsilon}$,
\begin{align*}
\E\big[\|\U^\top\Q_\pi\U - \U^\top\U\|_F^2\big]\leq \epsilon\,\|\U\|_F^2 =
  \epsilon\,d.
\end{align*}
We bound the 2-norm by the Frobenius norm and use Markov's inequality,
showing that w.p. $\geq 1-\delta$
\begin{align*}
\|\U^\top\Q_\pi\U-\I\|\leq \|\U^\top\Q_\pi\U - \I\|_F\leq 
  \sqrt{\epsilon\, d/\delta}. 
\end{align*}
Setting $\epsilon=\frac{\delta}{4d}$, for
$m\geq C_0\,d^2/\delta$, the above inequality  implies that
\begin{align*}
\lambda_{\min}\Big(\frac{1}{m}\U^\top\Q_\pi\U\Big) \geq \frac{1}{2}.
\end{align*}

\subsection{Proof of Lemma \ref{l:without-replacement}}
\label{a:without-replacement}
Let $\pi$ denote the sequence of $m$ indices selected by volume sampling in
the first stage. Suppose that $i_1,...,i_k$ are independent uniformly sampled indices
from $[m]$, and let $j_1,...,j_k$ be indices sampled uniformly
without replacement from $[m]$. We define matrices 
\begin{align*}
\Z\defeq \sum_{t=1}^k\overbrace{\frac{1}{kq_{i_t}}\u_{i_t}\u_{i_t}^\top}^{\Z_t},\quad\text{and}\quad
  \Zbh\defeq\sum_{t=1}^k\overbrace{\frac{1}{kq_{j_t}}\u_{j_t}\u_{j_t}^\top}^{\Zbh_t}. 
\end{align*}
Note that $\|\Z_t\|=\frac{d}{k\,l_i}\|\u_{i_t}\|^2=\frac{d}{k}$
and, similarly, $\|\Zbh_t\|= \frac{d}{k}$. Moreover, 
\begin{align*}
\E[\Z] =
  \sum_{t=1}^k\bigg[\frac{1}{m}\sum_{i=1}^m\frac{1}{kq_i}\u_i\u_i^\top\bigg] =  k\
  \frac{1}{k}\frac{1}{m}\U^\top\Q_\pi\U = \frac{1}{m}\U^\top\Q_\pi\U.
\end{align*}
Combining Chernoff's inequality with the reduction argument described
in \cite{uniform-matrix-sampling}, for any $\lambda$, and $\theta>0$,
\begin{align*}
\Pr\big(\lambda_{\max}(-\Zbh)\geq
  \lambda\big)\leq \text{e}^{-\theta \lambda}\
  \E\Big[\tr\big(\exp(\theta (-\Zbh))\big)\Big]
\leq
\text{e}^{-\theta \lambda}\
  \E\Big[\tr\big(\exp(\theta (-\Z))\big)\Big].
\end{align*}
Using matrix Chernoff bound of \cite{matrix-tail-bounds} applied to
$-\Z_1,...,-\Z_k$ with appropriate $\theta$, we have
\begin{align*}
\text{e}^{-\theta \lambda}\
  \E\Big[\tr\big(\exp(\theta (-\Z))\big)\Big]\leq d\
  \exp\Big(-\frac{k}{16d}\Big),\quad
\text{for}\quad \lambda = \frac{1}{2}\,\lambda_{\max}\Big(-\frac{1}{m}\U^\top\Q_\pi\U\Big)\leq-\frac{1}{4}.
\end{align*}
Thus, there is a constant $C_1$ such that for $k\geq C_1\,d\ln(d/\delta)$,
w.p.~at least $1-\delta$ we have $\lambda_{\min}(\Zbh)\geq
\frac{1}{4}$.

\subsection{Proof of Lemma \ref{l:geometric-tail}}
\label{a:geometric-tail}
We compute the moment generating function of the variable
$R_t\sim\operatorname{Geom}(r_t)$, where $r_t=\frac{t-d}{t}$:
\begin{align*}
\E\big[\text{e}^{\theta R_t}\big] =
  \frac{r_t}{1-(1-r_t)\text{e}^{\theta}}=
\frac{\frac{t-d}{t}}{1-\frac{d}{t}\,\text{e}^{\theta}} = \frac{t-d}{t-d\,\text{e}^{\theta}}.
\end{align*}
Setting $\theta=\frac{1}{2d}$, we observe that $d\text{e}^{\theta}\leq
d+1$, and so $\E[\text{e}^{\theta R_t}]\leq
\frac{t-d}{t-d-1}$. Letting $\mu=\E[R]$,
for any $a>1$ using Markov's inequality we have
\begin{align*}
\Pr(R\geq a\mu)\leq \text{e}^{-a\theta\mu}\,\E\big[\text{e}^{\theta
  R}\big]
\leq  \text{e}^{-a\theta\mu}\prod_{t=k+1}^m\frac{t-d}{t-d-1}=
\text{e}^{-a\theta\mu}\,\frac{m-d}{k-d}.
\end{align*}
Note that using the bounds on the harmonic series we can estimate the
mean:
\begin{align*}
\mu &= d\!\!\sum_{t=k-d+1}^{m-d}\frac{1}{t}\geq d\, (\ln(m-d) -
  \ln(k-d)-1)= d\,\ln\Big(\frac{m-d}{k-d}\Big) - d,\\
\text{so}\quad \text{e}^{-a\theta\mu} &
\leq \text{e}^{a/2}\,\exp\bigg(-\frac{a}{2}\ln\Big(\frac{m-d}{k-d}\Big)\bigg)=
\text{e}^{a/2}\,\Big(\frac{m-d}{k-d}\Big)^{-a/2}.
\end{align*}
Putting the two inequalities together we obtain the desired tail bound.

\section{Experiments}
\label{sec:experiments}

We present experiments comparing leveraged volume sampling to standard
volume sampling and to leverage score sampling, in terms of the
total square loss suffered by the subsampled least-squares
estimator. The three estimators can be summarized as follows:
\begin{align*}
\textit{volume sampling:} \quad\w_S^* &= (\X_S)^+\y_S,&\Pr(S)&\sim
  \det(\X_S^\top\X_S),
\quad S\in {[n]\choose k};\\
\textit{leverage score sampling:}\quad
 \w_\pi^* &=(\Q_\pi^{\sfrac12}\X)^+\Q_\pi^{\sfrac12}\y,
&\Pr(\pi) &= \prod_{i=1}^k\frac{l_{\pi_i}}{d},\qquad\qquad\pi\in[n]^k;\\
\textit{leveraged volume sampling:}\quad
 \w_\pi^* &=(\Q_\pi^{\sfrac12}\X)^+\Q_\pi^{\sfrac12}\y,
&\Pr(\pi) &\sim \det(\X^\top\Q_\pi\X)\prod_{i=1}^k\frac{l_{\pi_i}}{d} .
\end{align*}
Both the volume sampling-based estimators are unbiased, however
the leverage score sampling estimator is not. Recall that
$\Q_\pi=\sum_{i=1}^{|\pi|}q_{\pi_i}^{-1}\e_{\pi_i}\e_{\pi_i}^\top$ is the selection
and rescaling matrix as  defined for
$q$-rescaled volume sampling with $q_i=\frac{l_i}{d}$. For each
estimator we plotted its average total loss,
i.e., $\frac{1}{n}\|\X\w-\y\|^2$, for a range of sample sizes $k$,
contrasted with the loss of the  best least-squares estimator $\w^*$
computed from all data. 

\begin{wrapfigure}{l}{0.45\textwidth}
\begin{tabular}{c|c|c}
Dataset & Instances ($n$) & Features ($d$) \\
\hline
\textit{bodyfat} & 252& 14\\
\textit{housing} &506& 13\\
\textit{mg} & 1,385 & 21\\
\textit{abalone} & 4,177  & 36 \\
\textit{cpusmall} & 8,192 &12\\ 
\textit{cadata} & 20,640&8\\
\textit{MSD} &463,715&90
\end{tabular}
\captionof{table}{Libsvm regression datasets \cite{libsvm} (to
  increase dimensionality of \textit{mg} and \textit{abalone}, we
  expanded features to all degree 2 monomials, and removed redundant
  ones).}   
\label{tab:datasets}
\end{wrapfigure}

Plots shown in Figures \ref{f:lb} and \ref{fig:experiments} were
averaged over 100 runs, with shaded area representing standard error
of the mean. We used seven benchmark datasets from the libsvm
repository \cite{libsvm} (six in this section and one in Section
\ref{s:intro}), whose dimensions are given in Table
\ref{tab:datasets}. The results confirm that leveraged volume sampling
is as good or better than either of the baselines for any sample size
$k$. We can see that in some of the examples standard volume sampling
exhibits bad behavior for larger sample sizes, as suggested by the
lower bound of Theorem \ref{t:lower} (especially noticeable on
\textit{bodyfat} and \textit{cpusmall} datasets). On the other hand, leverage
score sampling exhibits poor performance for small sample sizes due to
the coupon collector problem, which is most noticeable for
\textit{abalone} dataset, where we can see a very sharp transition
after which leverage score sampling becomes effective. Neither of the
variants of volume sampling suffers from this issue.

\begin{figure}
\includegraphics[width=0.5\textwidth]{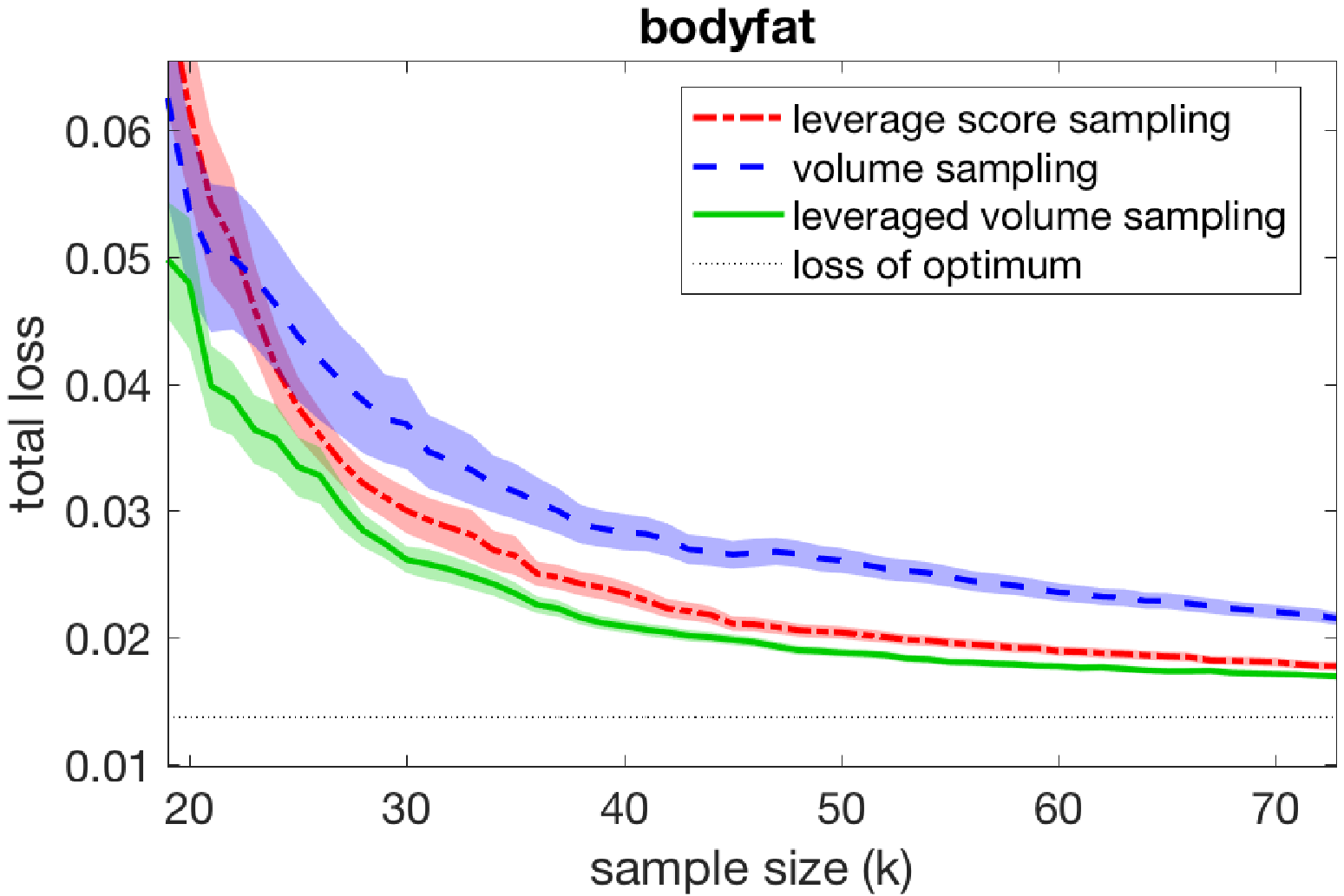}\nobreak
\includegraphics[width=0.5\textwidth]{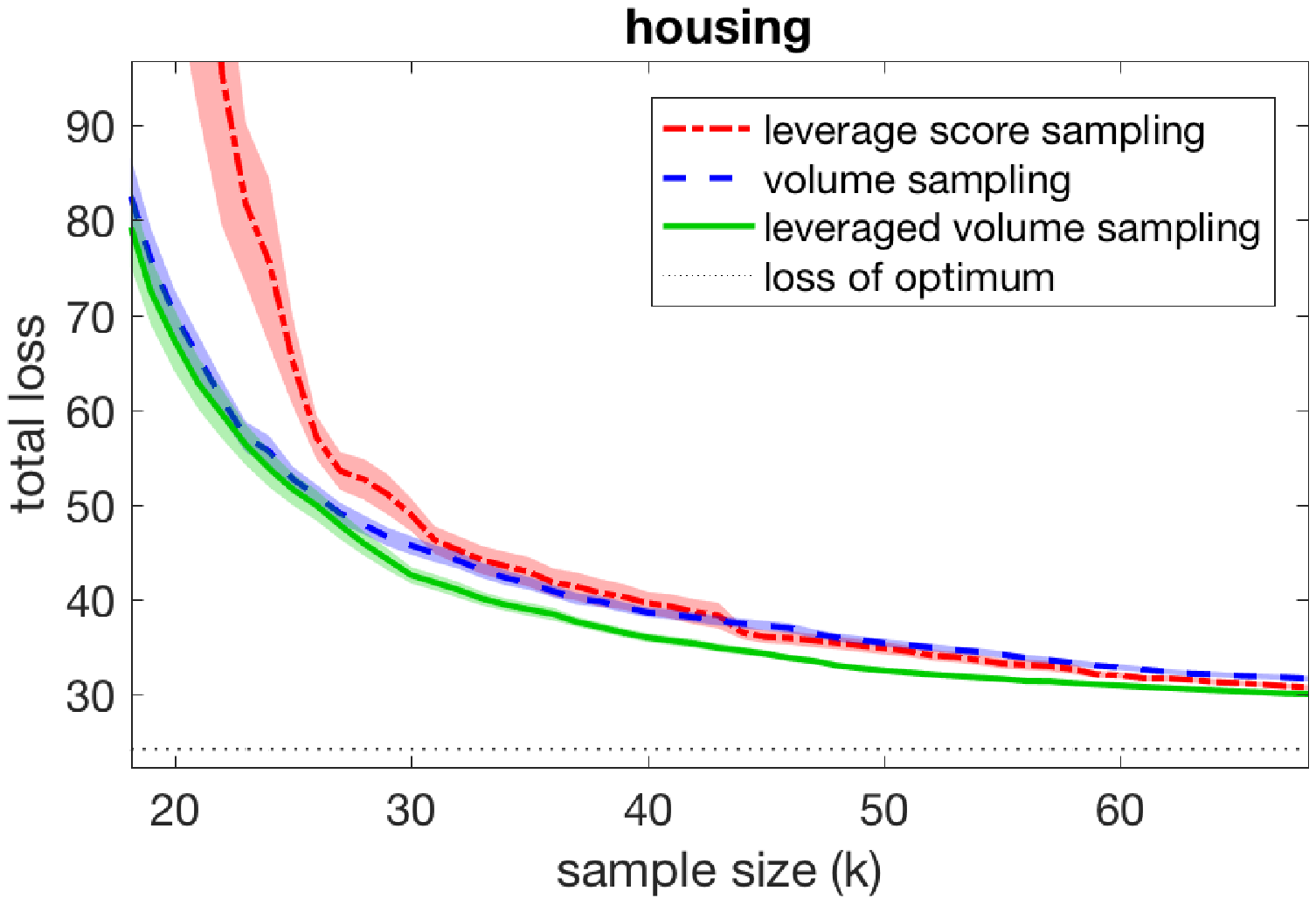}
\includegraphics[width=0.5\textwidth]{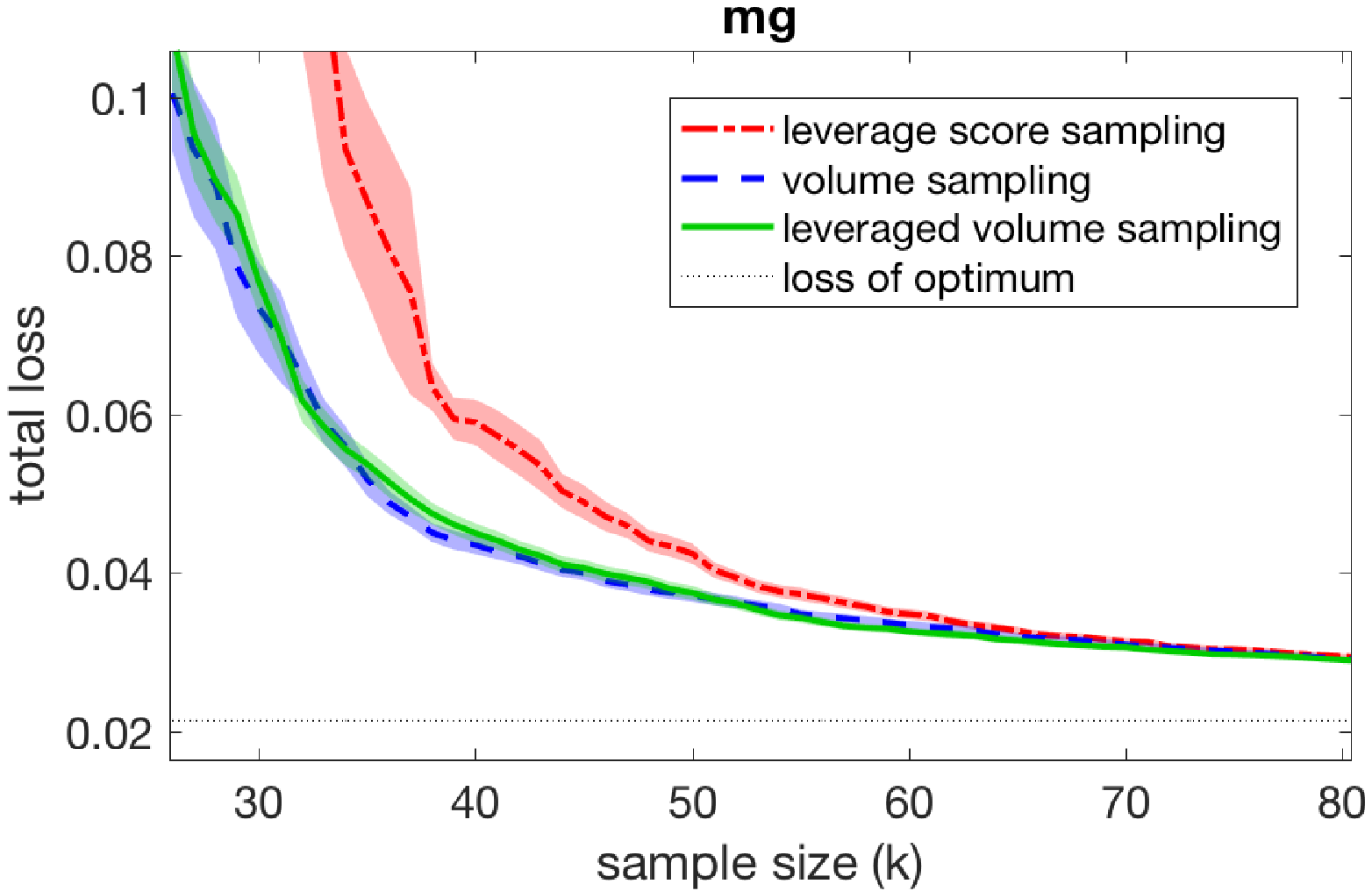}\nobreak
\includegraphics[width=0.5\textwidth]{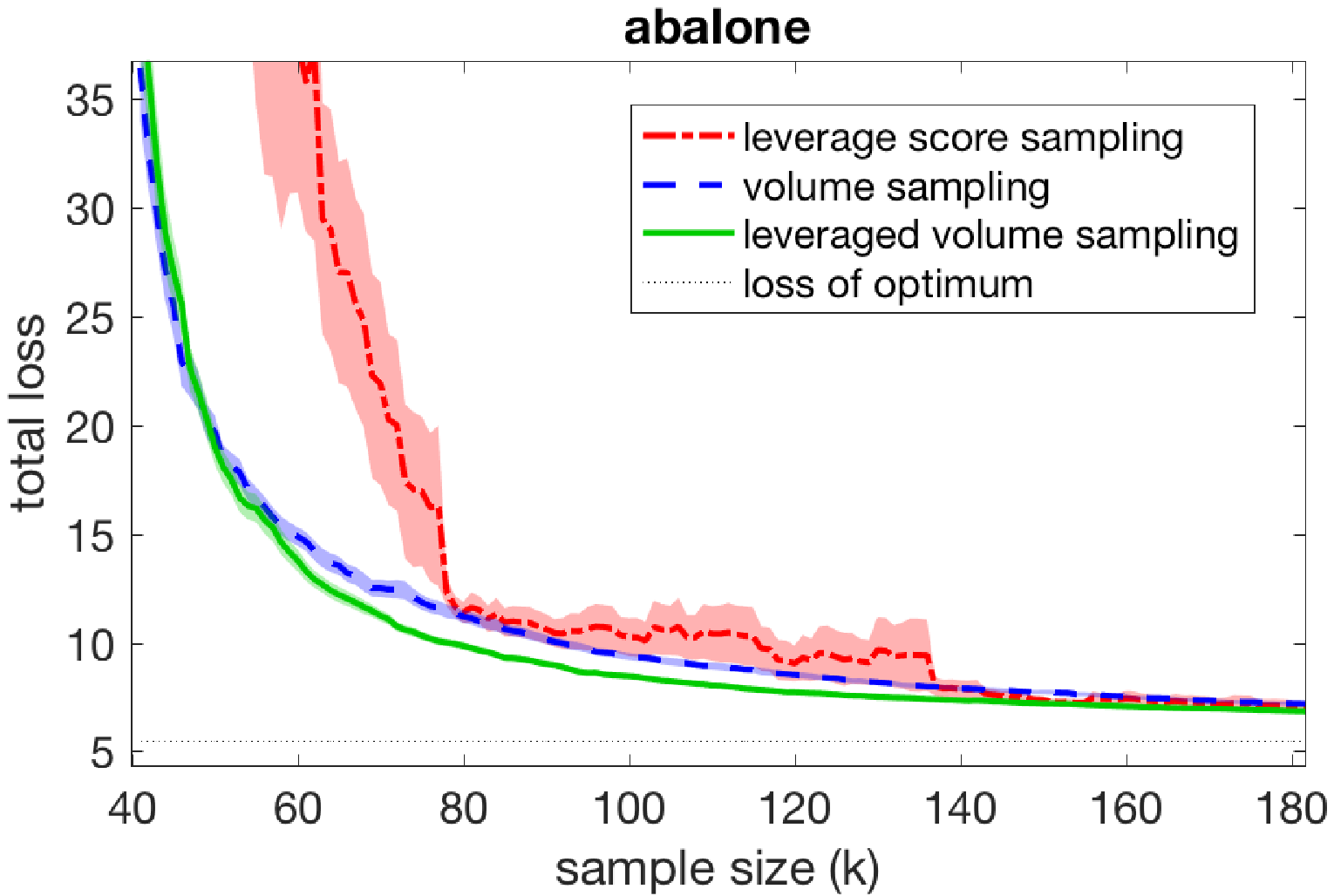}
\includegraphics[width=0.5\textwidth]{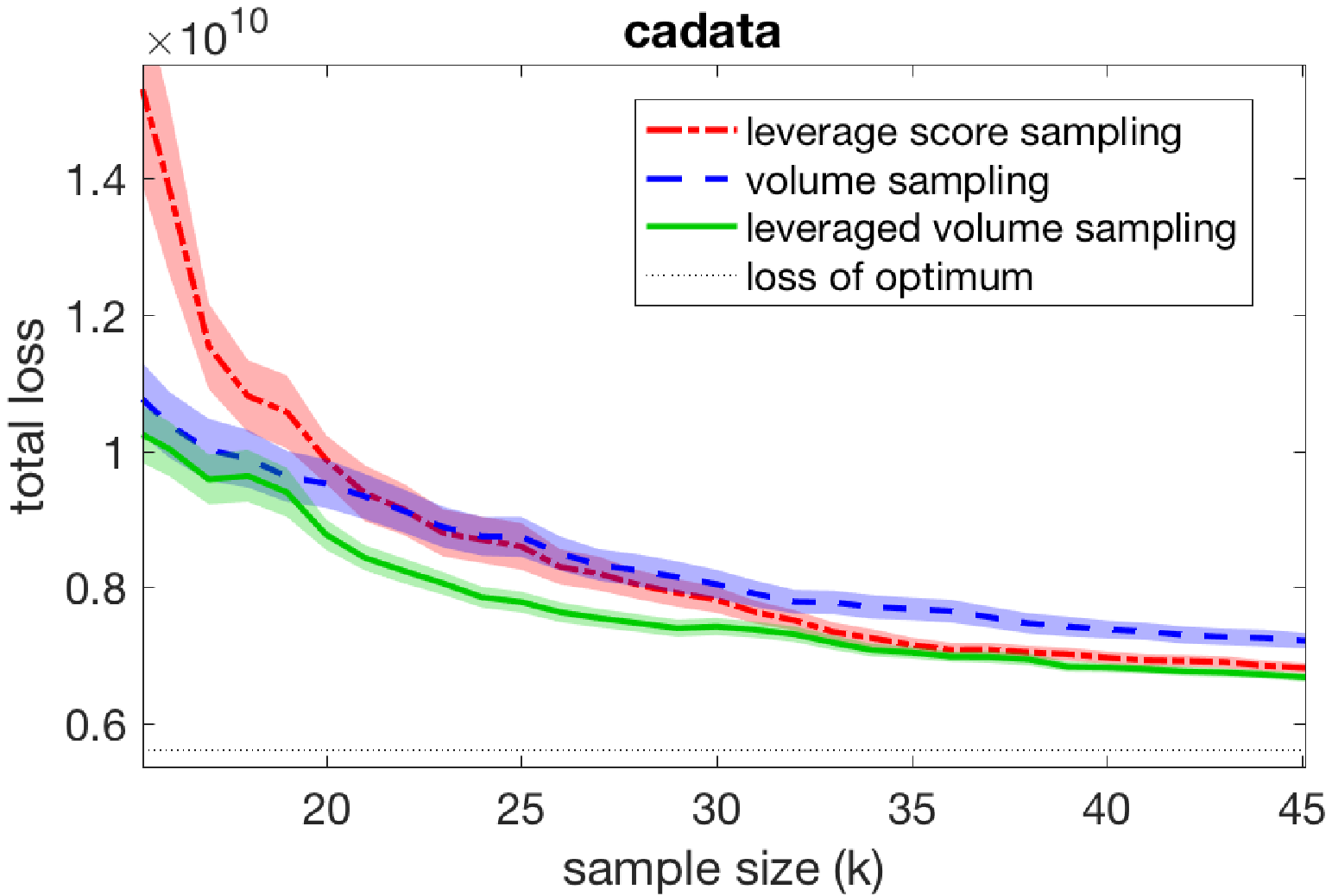}\nobreak
\includegraphics[width=0.5\textwidth]{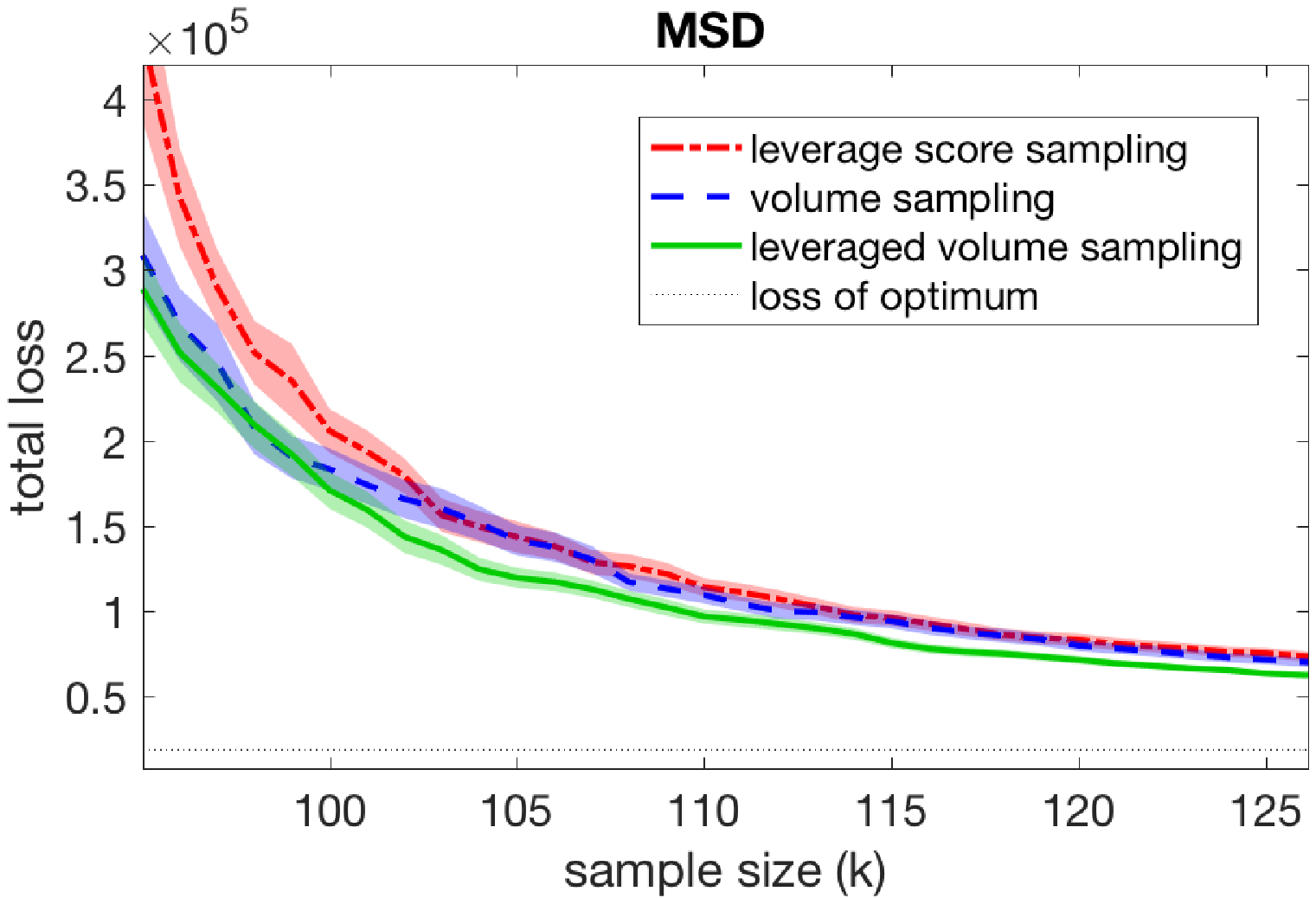}
\caption{Comparison of loss of the subsampled estimator when
  using \textit{leveraged volume sampling} vs using \textit{leverage score sampling} and
  standard \textit{volume sampling}  on six datasets.}
\label{fig:experiments}
\end{figure}

\section{Faster algorithm via approximate leverage scores}
\label{sec:fast-alg}

\begin{wrapfigure}{r}{0.4\textwidth}
\renewcommand{\thealgorithm}{}
\vspace{-9mm}
\begin{minipage}{0.4\textwidth}
\floatname{algorithm}{}
\begin{algorithm}[H] 
{\fontsize{8}{8}\selectfont
  \caption{\bf \small Fast leveraged volume sampling}
  \begin{algorithmic}[0]
    \STATE \textbf{Input:} $\X\!\in\!\R^{n\times d},\, k\geq
    d,\,\epsilon\geq 0$\\[1mm]
    \STATE Compute $\A = (1\pm \epsilon) \,\X^\top\X$
    \STATE Compute $\tilde{l}_i=(1\pm \frac12)\, l_i\quad \forall_{i\in[n]}$
    \STATE $s \leftarrow \max\{k,\,8d^2\}$
    \STATE \textbf{repeat}
    \STATE \quad $\pi \leftarrow$ empty sequence
    \STATE \quad\textbf{while} $|\pi|<s$
    \STATE \quad\quad Sample $i\ \sim\
    (\tilde{l}_1,\dots,\tilde{l}_n)$
    \STATE \quad\quad $a\sim
    \text{Bernoulli}\Big((1\!-\!\epsilon)\frac{\x_i^\top\A^{-1}\x_i}{2\tilde{l}_i}\Big)$
    \STATE \quad\quad\textbf{if} $a=\text{true}$,\quad\textbf{then}\quad $\pi \leftarrow
    [\pi, i]$
    \STATE \quad\textbf{end}\\[1mm]
    \STATE \quad $\Q_\pi\leftarrow \sum_{j=1}^sd\,
    (\x_{\pi_j}^\top\A^{-1}\x_{\pi_j})^{-1}\e_{\pi_j}\e_{\pi_j}^\top$
\vspace{-1mm}
    \STATE \quad Sample $\textit{Acc}\sim
    \text{Bernoulli}\Big(\frac{\det(\frac{1}{s}\X^\top\Q_\pi\X)}{\det(\A)}\Big)$
\vspace{-1mm}
    \STATE \textbf{until} $\textit{Acc}=\text{true}$
    \STATE $S\leftarrow$ VolumeSample$\big((\Q_{[1..n]}^{\sfrac{1}{2}}\X)_\pi,k\big)$
    \RETURN $\pi_S$
 \end{algorithmic}
}
\end{algorithm}
\end{minipage}
\vspace{-5mm}
\end{wrapfigure}

In some settings, the primary computational cost of deploying leveraged volume
sampling is the preprocessing cost of computing exact laverage
scores for matrix $\X\in\R^{n\times d}$, which takes $O(nd^2)$. There
is a large body of work dedicated to fast estimation of leverage
scores (see, e.g., \cite{fast-leverage-scores,randomized-matrix-algorithms}),
and in this section we examine how these approaches can be
utilized to make leveraged volume sampling more efficient. The key
challenge here is to show that the determinantal rejection sampling
step remains effective when distribution $q$ consists of approximate
leverage scores. Our strategy, which is described in the algorithm
\textit{fast leveraged volume sampling}, will be to compute an
approximate covariance matrix $\A=(1\pm\epsilon)\X^\top\X$ and use it
to compute the rescaling distribution $q_i\sim
\x_i^\top\A^{-1}\x_i$. As we see in the lemma below, for sufficiently small
$\epsilon$, this rescaling still retains the runtime guarantee of
determinantal rejection sampling from Theorem \ref{t:algorithm}.\\

\begin{lemma}\label{l:fast-rejection}
Let $\X\in\R^{n\times d}$ be a full rank matrix, and suppose that matrix $\A\in\R^{d\times d}$ satisfies
\begin{align*}
(1-\epsilon)\,\X^\top\X\preceq \A\preceq
  (1+\epsilon)\,\X^\top\X,\quad \text{where}\quad \frac{\epsilon}{1-\epsilon}\leq\frac{1}{16d}.
\end{align*}
Let $\pi_1,\dots,\pi_s$ be sampled i.i.d. 
$\sim(\hat{l}_1,\dots,\hat{l}_n)$, where
$\hat{l}_i=\x_i^\top\A^{-1}\x_i$. If $s\geq 8d^2$, then
\begin{align*}
\text{for}\quad\Q_\pi=\sum_{j=1}^s\frac{d}{\hat{l}_{\pi_j}}\e_{\pi_j}\e_{\pi_j}^\top,\qquad
\frac{\det(\frac{1}{s}\X^\top\Q_\pi\X)}{\det(\A)}\leq 1\quad \text{and}\quad\E\bigg[\frac{\det(\frac{1}{s}\X^\top\Q_\pi\X)}{\det(\A)}\bigg]\geq \frac{3}{4}.
\end{align*}
\end{lemma}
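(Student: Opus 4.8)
The plan is to prove the two claims separately: the sure upper bound mirrors the determinantal argument already used in the proof of Theorem~\ref{t:algorithm}, while the expectation bound reduces cleanly to the generalized Cauchy--Binet identity of Proposition~\ref{p:cauchy-binet}. For the sure bound, I would write the ratio as a single determinant and apply the geometric--arithmetic mean inequality to the (real, nonnegative) eigenvalues of the product of the positive semidefinite matrices $\frac1s\X^\top\Q_\pi\X$ and $\A^{-1}$:
\begin{align*}
\frac{\det(\frac1s\X^\top\Q_\pi\X)}{\det(\A)}
=\det\Big(\tfrac1s\X^\top\Q_\pi\X\,\A^{-1}\Big)
\le\Big(\tfrac1d\tr\big(\tfrac1s\X^\top\Q_\pi\X\,\A^{-1}\big)\Big)^{\!d}.
\end{align*}
The key observation is that the rescaling weights $d/\hat l_{\pi_j}$ are chosen precisely so that $\tr(\frac1s\X^\top\Q_\pi\X\,\A^{-1})=\frac1s\sum_{j=1}^s\frac{d}{\hat l_{\pi_j}}\x_{\pi_j}^\top\A^{-1}\x_{\pi_j}=\frac1s\sum_{j=1}^s d=d$, since $\hat l_{\pi_j}=\x_{\pi_j}^\top\A^{-1}\x_{\pi_j}$. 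Thus the bracket is exactly $1$, and the bound $\le1$ holds for \emph{every} $\pi$; notably it uses only $\A\succ0$ (which follows from $\X$ being full rank and $\epsilon<1$), not the approximation quality.

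For the expectation I would first recast the i.i.d.\ sampling into the normalized form demanded by Proposition~\ref{p:cauchy-binet}. Writing $Z_l\defeq\sum_{i=1}^n\hat l_i=\tr(\A^{-1}\X^\top\X)$ and $q_i\defeq\hat l_i/Z_l$, the i.i.d.\ law is $\Pr(\pi)=\prod_j q_{\pi_j}$, while the weights obey $\frac{d}{\hat l_i}=\frac{d}{Z_l}\cdot\frac1{q_i}$. Hence $\X^\top\Q_\pi\X=\frac{d}{Z_l}\,\X^\top\widehat\Q_\pi\X$ with $\widehat\Q_\pi=\sum_j q_{\pi_j}^{-1}\e_{\pi_j}\e_{\pi_j}^\top$, exactly the matrix of Proposition~\ref{p:cauchy-binet}. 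Taking expectations, pulling out the scalar $(d/(sZ_l))^d$, and invoking that proposition gives the clean identity
\begin{align*}
\E\bigg[\frac{\det(\frac1s\X^\top\Q_\pi\X)}{\det(\A)}\bigg]
=\Big(\frac{d}{Z_l}\Big)^{\!d}\,\prod_{i=0}^{d-1}\Big(1-\frac is\Big)\,\frac{\det(\X^\top\X)}{\det(\A)}.
\end{align*}

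It then remains to lower-bound the three factors using the spectral sandwich $(1-\epsilon)\X^\top\X\preceq\A\preceq(1+\epsilon)\X^\top\X$. Inverting and taking traces yields $\frac{d}{1+\epsilon}\le Z_l\le\frac{d}{1-\epsilon}$, so $(d/Z_l)^d\ge(1-\epsilon)^d$, while $\det(\A)\le(1+\epsilon)^d\det(\X^\top\X)$ gives $\det(\X^\top\X)/\det(\A)\ge(1+\epsilon)^{-d}$; the middle factor is at least $(1-d/s)^d\ge1-d^2/s\ge\frac78$ by Bernoulli's inequality and $s\ge8d^2$. Multiplying, the expectation is at least $\big(\frac{1-\epsilon}{1+\epsilon}\big)^d\cdot\frac78$. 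The only delicate point, and the main obstacle, is controlling $\big(\frac{1+\epsilon}{1-\epsilon}\big)^d$: the hypothesis $\frac{\epsilon}{1-\epsilon}\le\frac1{16d}$ is exactly calibrated so that $\frac{1+\epsilon}{1-\epsilon}=1+\frac{2\epsilon}{1-\epsilon}\le1+\frac1{8d}$, whence $\big(\frac{1+\epsilon}{1-\epsilon}\big)^d\le e^{1/8}$. This leaves the expectation at least $e^{-1/8}\cdot\frac78>\frac34$, completing the argument.
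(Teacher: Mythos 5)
Your proof is correct and follows exactly the route the paper intends: the paper's ``proof'' of Lemma~\ref{l:fast-rejection} is just the remark that it follows the same lines as Theorem~\ref{t:algorithm}, and your argument is precisely that adaptation --- AM--GM on the eigenvalues for the sure bound (where the choice of weights $d/\hat l_{\pi_j}$ makes the trace exactly $d$), and Proposition~\ref{p:cauchy-binet} for the expectation. You also supply the one genuinely new detail the paper leaves implicit, namely tracking the $(\tfrac{1-\epsilon}{1+\epsilon})^d$ loss and checking that $\tfrac{\epsilon}{1-\epsilon}\le\tfrac{1}{16d}$ together with $s\ge 8d^2$ still clears $\tfrac34$; the numerics ($e^{-1/8}\cdot\tfrac78>\tfrac34$) check out.
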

Proof of Lemma \ref{l:fast-rejection} follows along the same lines as
the proof of Theorem \ref{t:algorithm}. We can compute matrix $\A^{-1}$
efficiently in time $\widetilde{O}(nd + d^3/\epsilon^2)$ using a sketching
technique called Fast Johnson-Lindenstraus Transform~\cite{ailon2009fast}, as described in
\cite{fast-leverage-scores}. However, the cost of computing the entire
rescaling distribution is still $O(nd^2)$. Standard techniques
circumvent this issue by performing a second matrix sketch. We cannot
afford to do that while at the same time preserving the sufficient
quality of leverage score estimates needed for leveraged volume
sampling. Instead, we first compute weak estimates
$\tilde{l}_i=(1\pm\frac{1}{2})l_i$ in time $\widetilde{O}(nd+d^3)$ as
in \cite{fast-leverage-scores}, then use rejection sampling to sample
from the more accurate leverage score distribution, and finally compute the
correct rescaling coefficients just for the obtained sample. Note that
having produced matrix $\A^{-1}$, computing a single leverage score
estimate $\hat{l}_i$ takes $O(d^2)$. The proposed algorithm with high
probability only has to compute $O(s)$ such estimates, which
introduces an additional cost of $O(sd^2) = O((k+d^2)\,d^2)$. Thus, as long as
$k=O(d^3)$, dominant cost of the overall procedure still comes from
the estimation of matrix $\A$, which 
takes $\widetilde{O}(nd+d^5)$ when $\epsilon$ is chosen as in Lemma
\ref{l:fast-rejection}.

It is worth noting that \textit{fast leveraged volume sampling} is
a valid $q$-rescaled volume sampling distribution (and not an
approximation of one), so the least-squares estimators it produces
are exactly unbiased. Moreover, proofs of Theorems \ref{t:multiplication} and
\ref{t:spectral} can be straightforwardly extended to the setting
where $q$ is constructed from approximate leverage scores, so our loss
bounds also hold in this case.

\end{document}

